\documentclass[article]{elsarticle}

\usepackage{hyperref}

\journal{Journal of Approximate Reasoning}

\usepackage[utf8]{inputenc}
\usepackage{graphicx}
\usepackage{tikz}
\usepackage{amsmath}
\usepackage{amssymb}
\usepackage{mathtools}
\usepackage{csquotes}
\usepackage{centernot}
\usepackage{enumerate}
\usepackage{cleveref}
\usepackage{subfig}
\usepackage{soul}

\usepackage{amsthm}
\usepackage{algorithm}
\usepackage[noend]{algpseudocode}

\usetikzlibrary{arrows}

\usetikzlibrary{patterns}

\sloppy
\def\hb{\hbox to 10.7 cm{}}
\newtheoremstyle{break}
  {\topsep}{\topsep}%
  {\itshape}{}%
  {\bfseries}{}%
  {\newline}{}%
\theoremstyle{break}
\usepackage[inline, shortlabels]{enumitem}  
\setlist{leftmargin=*}

\newtheorem{proposition}{Proposition}

\newtheorem{definition}{Definition}
\newtheorem{lemma}{Lemma}
\newtheorem{corollary}{Corollary}
\newtheorem{principle}{Principle}

\usepackage[normalem]{ulem}

\captionsetup[subfigure]{labelformat=simple,labelsep=colon,
listofformat=subsimple}
\captionsetup{lofdepth=2}
\makeatletter
\renewcommand{\p@subfigure}{}
\makeatother


\def\QBAG{\mbox{\ensuremath{(\Args, \is, \att, \supp)}}}
\def\QBAF{\QBAG}
\def\QBAFF{\mbox{\ensuremath{(\Args', \is', \att', \supp')}}}
\def\QBAFSTAR{\mbox{\ensuremath{(\Args^*, \is^*, \att^*, \supp^*)}}}

\def\Args{\ensuremath{\mathit{Args}}} 
\def\Att{\ensuremath{\mathit{Att}}}
\def\Supp{\ensuremath{\mathit{Supp}}}
\def\is{\ensuremath{\tau}}
\def\fs{\ensuremath{\sigma}}
\def\att{\Att}
\def\supp{\Supp}

\def\interval{\ensuremath{\mathbb{I}}}

\def\arga{\ensuremath{\mathsf{a}}}
\def\argb{\ensuremath{\mathsf{b}}}
\def\argc{\ensuremath{\mathsf{c}}}
\def\argd{\ensuremath{\mathsf{d}}}
\def\arge{\ensuremath{\mathsf{e}}}
\def\argf{\ensuremath{\mathsf{f}}}
\def\argx{\ensuremath{\mathsf{x}}}
\def\argy{\ensuremath{\mathsf{y}}}
\def\argz{\ensuremath{\mathsf{z}}}

\newcommand{\argnode}[3]{\mbox{\ensuremath{#1~(#2)\!:\!\mathbf{#3}}}}

\usepackage{xifthen}
\usepackage{xargs}
\newcommandx{\scon}[4][3=, 4=]{  
	\relax
	\ifmmode
		\ifthenelse{\isempty {#3}}
		{ {\ensuremath{#1\ \sim #2}} }
		{ {\ensuremath{#1\ \sim_{\sigma,#3,#4} #2}} }
	\else
		\ifthenelse{\isempty {#3}}
		{ {\ensuremath{#1\ \sim #2}} }
		{ {\ensuremath{#1\ \sim_{\sigma,#3,#4} #2}} }
	\fi
						     }  
\newcommandx{\nscon}[4][3=, 4=]{
	\relax
	\ifmmode
		\ifthenelse{\isempty {#3}}
		{ {\ensuremath{#1\ \not\sim #2}} }
		{ {\ensuremath{#1\ \not\sim_{\sigma,#3,#4} #2}} }
	\else
		\ifthenelse{\isempty {#3}}
		{ {\ensuremath{#1\ \not\sim #2}} }
		{ {\ensuremath{#1\ \not\sim_{\sigma,#3,#4} #2}} }
	\fi
						      }  

\newcommandx{\comp}[3][3=]{
	\relax
	\ifmmode
		\ifthenelse{\isempty {#3}}
		{ {\ensuremath{#1\ \lesseqgtr #2}} }
		{ {\ensuremath{#1\ \lesseqgtr_{\sigma,#3} #2}} }
	\else
		\ifthenelse{\isempty {#3}}
		{ {\ensuremath{#1\ \lesseqgtr #2}} }
		{ {\ensuremath{#1\ \lesseqgtr_{\sigma,#3} #2}} }
	\fi
						      }

\newcommandx{\ncomp}[3][3=]{
	\relax
	\ifmmode
		\ifthenelse{\isempty {#3}}
		{ {\ensuremath{#1\ \not\lesseqgtr #2}} }
		{ {\ensuremath{#1\ \not\lesseqgtr_{\sigma,#3} #2}} }
	\else
		\ifthenelse{\isempty {#3}}
		{ {\ensuremath{#1\ \not\lesseqgtr #2}} }
		{ {\ensuremath{#1\ \not\lesseqgtr_{\sigma,#3} #2}} }
	\fi
						      } 

\def\scons{strength-consistent}
\def\nscons{strength-inconsistent}
\def\sconsy{strength consistency}
\def\nsconsy{strength inconsistency}
\newtheorem{example}{Example}
\newtheorem{excont}{Example}

\begin{document}
\begin{frontmatter}
\title{Change in Quantitative Bipolar Argumentation: Sufficient, Necessary, and Counterfactual Explanations} %

\author{Timotheus Kampik}
\ead[url]{tkampik@cs.umu.se}
\address{Umeå University, Sweden}
\address{SAP Signavio, Germany}

\author{Kristijonas \v{C}yras}
\ead[url]{kristijonas.cyras@ericsson.com}
\address{Ericsson, USA}

\author{José Ruiz Alarcón}
\ead[url]{mai21jrn@cs.umu.se}
\address{Umeå University, Sweden}

\begin{abstract}
This paper presents a formal approach to explaining change of inference in Quantitative Bipolar Argumentation Frameworks (QBAFs). When drawing conclusions from a QBAF and updating the QBAF to then again draw conclusions (and so on), our approach traces changes -- which we call \emph{strength inconsistencies} -- in the partial order over argument strengths that a semantics establishes on some arguments of interest, called \emph{topic arguments}. We trace the causes of strength inconsistencies to specific arguments, which then serve as explanations. 
We identify sufficient, necessary, and counterfactual explanations for strength inconsistencies and show that \nsconsy\ explanations exist if and only if an update leads to \nsconsy.
We define a heuristic-based approach to facilitate the search for \nsconsy\ explanations, for which we also provide an implementation.
\end{abstract}

\begin{keyword}
quantitative argumentation\sep explainable AI\sep formal methods
\end{keyword}

\end{frontmatter}

%
\section{Introduction}\label{intro}
A key challenge in the domain of eXplainable Artificial Intelligence (XAI) is the explanation of an agent's \emph{change of mind}: if the agent has inferred a set of decisions $A$ at time $t_0$, why does it infer another set of decisions $A'$ at $t_1 > t_0$?
This challenge is at the core of fundamental approaches to decision-making and reasoning.
From the perspective of microeconomic decision theory (see, e.g.~\cite{osbournerubinsteintextbook}), a basic assumption is that the agent has \emph{consistent preferences}.
Assuming two independent choices $A$ and $A'$, the agent must not decide $A$ and then change its mind to $A'$ if $A'$ has been available all along and $A$ is still available, all other relevant circumstances being equal.
If an agent's preferences on the available decision options are not consistent, one would expect an explanation that highlights this relevant change in circumstances that violates the \emph{ceteris paribus}\footnote{Translates to: ``all else unchanged'' and is a crucial assumption in classical models of economic rationality.} condition. 
In the parlance of automated reasoning, one would expect that an explanation is provided if monotony of entailment is violated: if the agent first infers $A$ and then $A'$, such that $A \not \subseteq A'$, an explanation of why $A$ is to be rejected should be provided.

One collection of approaches to automated reasoning that has in the past decades gained substantial research interest in the knowledge representation and reasoning community is \emph{formal argumentation}~\cite{Bench-Capon:2007:AAI:1284911.1285088,Baroni2018-BARHOF}.
Typically, formal argumentation approaches draw inferences from graphs representing the relationships between so-called \emph{arguments}, which may, for example, model logical statements.
From an application perspective, formal argumentation is still nascent; it is typically not (or not yet) deployed in large-scale applications.
Yet, because of its somewhat intuitive nature, there are hopes that formal argumentation can bridge the gap between symbolic and subsymbolic AI and facilitate interactive human-machine reasoning~\cite{DBLP:journals/frai/DietzKM22}.
Intuitively, argumentation is inherently dynamic and formally often modelled as an \emph{argumentation dialogue}, where new arguments are added to the argumentation graph over time.
Also, the ``weighting'' of arguments that may correspond to different choice or inference options reflects the dynamic and non-monotonic nature of argumentation-based reasoning.
As a consequence, changes to the relative strengths of arguments during an argumentation process or dialogue may lead to a change in conclusions drawn or actions taken.
Hence, we consider the study of change explanations in formal argumentation as a worthwhile pursuit.

In this paper, we define change explanations in the setting of evolving Quantitative Bipolar Argumentation Frameworks (QBAFs)~\cite{Baroni:Rago:Toni:2019}, i.e.\ for QBAFs from which we draw inferences, then apply changes, in order to draw new inferences from the updated QBAF.
In quantitative bipolar argumentation, arguments are assigned initial strengths, i.e.\ they are nodes with (typically numerical) weights.
Two binary relations on the nodes in the argumentation graph model attacks and supports, respectively, between arguments.
An \emph{argumentation semantics} (inference function) is applied to the graph to infer the \emph{final strengths} of the arguments, considering initial strengths and graph topology.
Recently, the research interest in quantitative (bipolar) argumentation has increased, in particular because of its potential feasibility in application scenarios such as explainable recommendation systems~\cite{DBLP:conf/ijcai/RagoCT18} and neural network explainability~\cite{DBLP:conf/aiia/Sukpanichnant0L21}. 
Hence, we focus on quantitative bipolar argumentation in this paper. 
However, since real-life application scenarios of QBAFs are still nascent, we do not (or not yet) see means for evaluating the QBAF explainability approach that we present in this paper in real-life applications. Instead, we supply theoretical analyses illustrated with simplistic examples, alongside a computational implementation, of a novel form of explanations in QBAFs.

Our goal is to explain the relative change in strength of specified arguments in a QBAF that is updated by changing its arguments, their initial strengths and/or relationships. 
We strive for explanations of changes in arguments' relative strengths that pertain to minimal information causing those changes. 
We adopt the notions of (attributive) sufficient, necessary, and counterfactual explanations\footnote{See e.g.\ \cite{Darwiche_Ji_2022,10.1007/978-3-030-86772-0_4} for notions of sufficient and necessary explanations; see \cite{Stepin.et.al:2021} for an excellent overview of counterfactual explanations.} to the setting of explaining changes in the partial ordering of argument strengths in evolving QBAFs.
In the example below, we give intuitive readings of the introduced concepts; rigorous definitions follow later.
\begin{example}
\label{ex:intro}

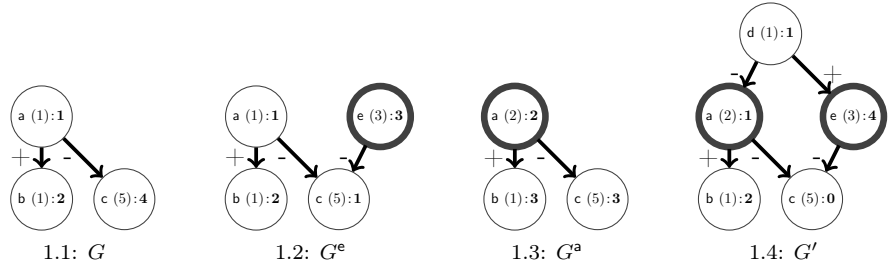
\begin{figure}[!ht]
    \subfloat[$G$]{
    \label{fig:intro:G}
    \begin{tikzpicture}[scale=0.55,
        noanode/.style={scale=0.55,dashed, circle, draw=black!60, minimum size=10mm, font=\bfseries},
        unanode/.style={scale=0.55,circle, draw=black!75, minimum size=10mm, font=\bfseries},
        invnode/.style={scale=0.55,circle, draw=white!0, minimum size=0mm, font=\bfseries},
        anode/.style={scale=0.55,circle, fill=lightgray, draw=black!60, minimum size=10mm, font=\bfseries},
        ]
        \node[unanode]    (a)    at(0,2)  {\argnode{\arga}{1}{1}};
        \node[unanode]  (b)    at(0,0)  {\argnode{\argb}{1}{2}};
        \node[unanode]    (c)    at(2,0)  {\argnode{\argc}{5}{4}};
        \path [->, line width=0.5mm]  (a) edge node[left] {+} (b);
        \path [->, line width=0.5mm]  (a) edge node[left] {-} (c);
    \end{tikzpicture}
    }
    \hspace{10pt}
    \centering
    \subfloat[$G^{\arge}$]{
    \label{fig:intro:Ge}
    \begin{tikzpicture}[scale=0.55,
        noanode/.style={scale=0.55,dashed, circle, draw=black!60, minimum size=10mm, font=\bfseries},
        unanode/.style={scale=0.55,circle, draw=black!75, minimum size=10mm, font=\bfseries},
        xunanode/.style={scale=0.55,circle, draw=black!75, minimum size=10mm, font=\bfseries, line width=2.5pt},
        invnode/.style={scale=0.55,circle, draw=white!0, minimum size=0mm, font=\bfseries},
        anode/.style={scale=0.55,circle, fill=lightgray, draw=black!60, minimum size=10mm, font=\bfseries},
        xanode/.style={scale=0.55, circle, fill=lightgray, draw=black!60, minimum size=10mm, font=\bfseries,line width=2.5pt},
        xnoanode/.style={scale=0.55, circle, dashed, draw=black!60, minimum size=10mm, font=\bfseries,line width=2.5pt}
        ]
        \node[unanode]    (a)    at(0,2)  {\argnode{\arga}{1}{1}};
        \node[unanode]  (b)    at(0,0)  {\argnode{\argb}{1}{2}};
        \node[unanode]    (c)    at(2,0)  {\argnode{\argc}{5}{1}};
        \node[xunanode]    (e)    at(3,2)  {\argnode{\arge}{3}{3}};
        
        \path [->, line width=0.5mm]  (a) edge node[left] {+} (b);
        \path [->, line width=0.5mm]  (a) edge node[left] {-} (c);
        \path [->, line width=0.5mm]  (e) edge node[left] {-} (c);
    \end{tikzpicture}
    }
    \hspace{10pt}
    \centering
    \subfloat[$G^{\arga}$]{
    \label{fig:intro:Ga}
    \begin{tikzpicture}[scale=0.55,
        noanode/.style={scale=0.55,dashed, circle, draw=black!60, minimum size=10mm, font=\bfseries},
        unanode/.style={scale=0.55,circle, draw=black!75, minimum size=10mm, font=\bfseries},
        xunanode/.style={scale=0.55,circle, draw=black!75, minimum size=10mm, font=\bfseries, line width=2.5pt},
        invnode/.style={scale=0.55,circle, draw=white!0, minimum size=0mm, font=\bfseries},
        anode/.style={scale=0.55,circle, fill=lightgray, draw=black!60, minimum size=10mm, font=\bfseries},
        xanode/.style={scale=0.55, circle, fill=lightgray, draw=black!60, minimum size=10mm, font=\bfseries,line width=2.5pt},
        xnoanode/.style={scale=0.55, circle, dashed, draw=black!60, minimum size=10mm, font=\bfseries,line width=2.5pt}
        ]
        \node[xunanode]    (a)    at(0,2)  {\argnode{\arga}{2}{2}};
        \node[unanode]  (b)    at(0,0)  {\argnode{\argb}{1}{3}};
        \node[unanode]    (c)    at(2,0)  {\argnode{\argc}{5}{3}};
        
        \path [->, line width=0.5mm]  (a) edge node[left] {+} (b);
        \path [->, line width=0.5mm]  (a) edge node[left] {-} (c);
    \end{tikzpicture}
    }
    \hspace{10pt}
    \centering
    \subfloat[$G'$]{
    \label{fig:intro:G'}
    \begin{tikzpicture}[scale=0.55,
        noanode/.style={scale=0.55,dashed, circle, draw=black!60, minimum size=10mm, font=\bfseries},
        unanode/.style={scale=0.55,circle, draw=black!75, minimum size=10mm, font=\bfseries},
        xunanode/.style={scale=0.55,circle, draw=black!75, minimum size=10mm, font=\bfseries, line width=2.5pt},
        invnode/.style={scale=0.55,circle, draw=white!0, minimum size=0mm, font=\bfseries},
        anode/.style={scale=0.55,circle, fill=lightgray, draw=black!60, minimum size=10mm, font=\bfseries},
        xanode/.style={scale=0.55, circle, fill=lightgray, draw=black!60, minimum size=10mm, font=\bfseries,line width=2.5pt},
        xnoanode/.style={scale=0.55, circle, dashed, draw=black!60, minimum size=10mm, font=\bfseries,line width=2.5pt}
        ]
        \node[xunanode]    (a)    at(0,2)  {\argnode{\arga}{2}{1}};
        \node[unanode]  (b)    at(0,0)  {\argnode{\argb}{1}{2}};
        \node[unanode]    (c)    at(2,0)  {\argnode{\argc}{5}{0}};
        \node[unanode]    (d)    at(1,4)  {\argnode{\argd}{1}{1}};
        \node[xunanode]    (e)    at(3,2)  {\argnode{\arge}{3}{4}};
        
        \path [->, line width=0.5mm]  (a) edge node[left] {+} (b);
        \path [->, line width=0.5mm]  (a) edge node[left] {-} (c);
        \path [->, line width=0.5mm]  (e) edge node[left] {-} (c);
        \path [->, line width=0.5mm]  (d) edge node[left] {-} (a);
        \path [->, line width=0.5mm]  (d) edge node[right] {+} (e);
    \end{tikzpicture}
    } 
\caption{$G$ and different updates thereof. Here and henceforth, a node labelled $\argnode{\argx}{i}{f}$ carries argument $\argx$ with initial strength $\is(\argx) = i$ and final strength $\fs(\argx) = \mathbf{f}$. Edges labelled $+$ and $-$ respectively represent support and attack. Arguments with bold borders are \nsconsy\ explanation arguments, i.e.\ they explain the change in relative strength of the topic arguments $\argb$ and $\argc$.
            }
\label{fig:intro}
\end{figure}

We start with the QBAF depicted in Figure~\ref{fig:intro:G}, which we denote by $G$. We have the nodes (\emph{arguments}) $\arga$ (with initial strength $\is(\arga) = 1$), $\argb$ (with $\is(\argb) = 1$), and $\argc$ (with $\is(\argc) = 5$); $\arga$ supports $\argb$ and attacks $\argc$.
Here, $\argb$ and $\argc$ are \emph{topic} arguments, i.e.\ arguments that we want to weigh against each other: the topic argument with the highest final strength can be considered the most \emph{promising}. 
For example, $\argb$ may represent a buying decision of a particular commodity and $\argc$ the decision to sell it.
Then, $\arga$ could represent an environmental condition that supports (positively influences) buying the commodity, while attacking (negatively influencing) its sale.
The final strength $\fs(\arga)$ is not directly relevant to the decision but impacts the final strength of topic arguments.
Typically, we determine $\fs(\argx)$ of an argument $\argx$ by aggregating the final strengths of its supporters and attackers. 
For instance, we can add to $\is(\argx)$ the final strengths of supporters of $\argx$ and subtract the final strengths of attackers of $\argx$, iteratively, starting with the neither attacked nor supported leaf arguments (whose final strengths equal their initial strengths). 
Here, we get $\fs(\argb)$ by adding $\fs(\arga) = \is(\arga)$ to $\is(\argb)$: $1+1=2$; and $\fs(\argc)$ by subtracting $\fs(\arga)$ from $\is(\argc)$: $5-1=4$. 
Consequently, $\argc$ is the topic argument with the highest final strength (and so represents the inference drawn from $G$).
\end{example}
In this paper, we are interested in \emph{changing} QBAFs, i.e. in QBAFs whose graphs are subject to different updates.
We continue with the example to provide an intuition of how such updates look like, how they affect the inferences that we draw, and how changes in these inferences can potentially be explained.
\begin{excont}[Continued]
Let us assume that our initial QBAF in Figure~\ref{fig:intro:G} receives a knowledge update, in different forms of changes to the QBAF: we give examples of the different resulting situations in Figures~\ref{fig:intro:Ge}, \ref{fig:intro:Ga}, and \ref{fig:intro:G'}.
These updates can be due to changes in the environment that an agent observes, for example due to changes in market conditions that directly or indirectly influence whether, and to what extent, our agent wants to buy or sell (continuing the scenario of commodity buying or selling).
As we will spell out shortly, we determine the final strengths of $\argb$ and $\argc$ (using the same approach as before) in each situation and find that, after any of the updates, $\argb$, rather than $\argc$, is the highest-ranking topic argument. 
We are then interested in explaining why the ranking of $\argb$ relative to $\argc$ has changed.
Below, we consider three different updates.
The first two situations are rather straightforward; the third one is more involved.

\begin{enumerate}[wide, label=({\bf{\roman*}})]
    \item In Figure~\ref{fig:intro:Ge}, the argument $\arge$ with initial strength of $3$ is added to $G$ thus resulting in $G^{\arge}$. The final strengths of $\argb$ and $\argc$ in $G^{\arge}$ then are $2$ and $1$, respectively. 
    Here, the new argument $\arge$ directly decreases the final strength of $\argc$. 
    Intuitively, (the addition of) $\arge$ explains the change in the relative ordering of the final strengths for $\{ \argb, \argc \}$. 
    \item In Figure~\ref{fig:intro:Ga}, the initial strength of $\arga$ changes from $1$ in $G$ to $2$ in the resulting $G^{\arga}$. The final strengths of $\argb$ and $\argc$ then become equal to $3$. 
    Intuitively, (the change in the initial strength of) $\arga$ explains the change in the relative ordering of the final strengths for $\{ \argb, \argc \}$. 
    \item In Figure~\ref{fig:intro:G'}, we have both addition of new arguments $\argd$ and $\arge$ and relationships thereof, as well as a change to the initial strength of $\arga$. In the resulting $G'$ the final strength of $\argb$ is $2$ and that of $\argc$ is $0$. 
    Now, one could say that here all the changes collectively explain the change in the relative ordering of the final strengths for $\{ \argb, \argc \}$.
    \end{enumerate}
\end{excont}
As we can see, we can naively take all arguments that have been changed (including additions and removals) to explain the change in the relative ordering of the final strengths of two topic arguments. However, to explain this change in a more intuitive manner, let us search for sets of arguments that are in some sense \emph{minimal} explanations.
    \begin{excont}[Continued] 
    For instance, the addition of only $\arge$ \emph{suffices} to make $\argb$ stronger than $\argc$, in the absence of other changes: this is the situation in Figure~\ref{fig:intro:Ge}. 
    Additionally, without adding $\arge$ and in the absence of the other changes we would just have $G$ we started with as in Figure~\ref{fig:intro:G} -- this speaks to \emph{minimality} of the addition of $\arge$ in bringing the changes to the final strengths of the topic arguments. 
    We conclude that $\{ \arge \}$ is a minimal sufficient explanation of the change in the relative ordering of the final strengths for $\{ \argb, \argc \}$. 
    
    Similarly, without the addition of $\argd$ and $\arge$, with only the change to 
    $\arga$, we would be in the situation in Figure~\ref{fig:intro:Ga}, where $\argc$ is not stronger than $\argb$. 
    Hence, $\{ \arga \}$ is also a minimal explanation of the change in relative ordering of the final strengths for $\{ \argb, \argc \}$. 
    
    How about other combinations? 
    Without the change to $\arga$ (but with the addition of $\arge$ and $\argd$), we would find $\fs(\arga) = 0$ and thus $\fs(\argb) = 1 = \fs(\argc)$. I.e.\ the relative strengths of $\argb$ and $\argc$ would change from $G$. So, intuitively, $\{ \argd, \arge \}$ also explains the change. But it is not a minimal explanation, because $\{ \arge \}$ is properly contained in $\{ \argd, \arge \}$. 
    On the other hand, without the addition of $\arge$, we would find $\fs(\arga) = \is(\arga) - \fs(\argd) = 2 - \is(\argd) = 1$,  
    and the final strengths of $\argb$ and $\argc$ would be $\fs(\argb) = \is(\argb) + \fs(\arga) = 2$ and $\fs(\argc) = \is(\argc) - \fs(\arga) = 4$, just as in $G$ to begin with. So $\{ \arga, \argd \}$ is not an explanation, for there is no change in the relative strengths of $\argb$ and $\argc$. 
    Similarly, if the addition of $\argd$ was the only change, we would find $\fs(\arga) = 0$ and the final strengths of $\argb$ and $\argc$ equal to their initial strengths. So $\argd$ alone is not an explanation, either. 
    In the end, we have the following explanations as sets of arguments whose change is \emph{sufficient} to entail the change in the relative strengths of the topic arguments \underline{when changes to other arguments do not happen}: $\{ \arga \}$ and $\{ \arge \}$ are two $\subset$-minimal sufficient explanations; $\{ \argd, \arge \}$ is also  sufficient, but not $\subset$-minimally so; and so is $\{ \arga, \argd, \arge \}$.
\end{excont}

In the above example we observed what we will later call sufficient explanations -- intuitively, changes that are sufficient to imply the change of inference (in the relative order of the topic arguments) in the absence of other changes: which updates in the environment are sufficient to lead the agent to buy rather than sell, all else being equal?
Let us continue the example to illustrate what we will call counterfactual explanations -- intuitively, updates that if reversed, restore the initial inference even if other updates do take place: when would the agent revert back to selling even if the rest of the environment changes?

\begin{excont}[Continued] 

    Another kind of explanatory change that we can observe in updating $G$ to $G'$ is a \emph{counterfactual} one. 
    Let us pose the question as to which changes, if reverted back while keeping all the other changes, would annul the relative change in the ranking of the final strengths of $\argb$ and $\argc$ (i.e.\ would restore \sconsy). 
    Observe first that the absence of the changes to $\arga$ would not counterfactually restore \sconsy: as shown above, if the initial strength of $\arga$ were $1$ in $G'$ with both $\argd$ and $\arge$ as they are, we would have $\fs(\argb) = \fs(\argc) = 1$ in $G'$, whereas $\fs(\argb) < \fs(\argc)$ in $G$. 
    Thus, to restore \sconsy, it does not suffice to revert (changes to) $\arga$ while keeping the other changes.
    On the other hand, as likewise shown above, if $\arge$ were absent from $G'$ with $\arga$ and $\argd$ as they are, we would have $\fs(\argb) = 2 < 4 = \fs(\argc)$; so, counterfactually, if the addition of $\arge$ had not taken place, \sconsy\ would not have happened. 
    In other words, it suffices to revert (changes to) $\arge$ to restore \sconsy\ assuming that all other changes take place.
    Thus, $\{ \arge \}$ is a counterfactual explanation: (the change to) $\arge$ both leads to \nsconsy\ on its own and, if reverted, would restore \sconsy\ \underline{while keeping the other changes}.
    In fact, $\{ \arge \}$ is $\subset$-minimally such: when keeping all the other changes, we \emph{must} revert (the addition of) $\arge$ in order to restore \sconsy\ of $\argb$ and $\argc$ in $G'$; otherwise, we were to revert nothing and thus witness \nsconsy\ in $G'$.
    Therefore, $\{\arge\}$ is a $\subset$-minimal counterfactual explanation.
\end{excont}

The above illustrated changes -- sufficient and counterfactual -- which lead to the change of inference. 
Let us also illustrate changes that are necessitated by the change in inference instead: how must the environment have updated given that the agent switches from selling to buying?

\begin{excont}[Continued] 

    Let us now see which changes are actually necessary for, i.e.\ entailed by, the change in the relative strengths of the topic arguments.
    First, changes to neither only $\arga$ nor only $\arge$ could be said to be \emph{necessary}, because changing neither one specifically is needed for \nsconsy\ (precisely because both $\arga$ and $\arge$ are individually sufficient). 
    Clearly from the above, $\argd$ is not necessary either. 
    Instead, collectively $\{ \arga, \arge \}$ can be said to be necessary, as 
    it is needed to make a change with respect to some argument in $\{ \arga, \arge \}$ to explain $\argc$ ceasing to be stronger than $\argb$ when updating $G$ to $G'$. In other words, if both changes with respect to $\arga$ and $\arge$ were absent, $\argc$ would still be stronger than $\argb$. 
    So $\{ \arga, \arge \}$ is a necessary explanation as a set of arguments changes to at least some of which are \emph{needed} to (entailed by) the change in the relative strengths of the topic arguments, \underline{whether or not changes to other arguments happen}.

\end{excont}

Collectively, the above explanations exhibit the following properties.
Arguments in a sufficient explanation are such that it suffices to make changes to these arguments to bring about \nsconsy, even if no other changes that occur in the update of the QBAF are made. Colloquially, it is sufficient to make changes to only these arguments (and safe to ignore the others) to explain \nsconsy. 
Meanwhile, arguments in a minimal counterfactual explanation lead to \nsconsy\ and are such that reverting changes to exactly these arguments (while keeping the other changes that occur in the update of the QBAF) restores \sconsy. Colloquially, making changes to these arguments explains \nsconsy\ while reverting changes to these arguments explains \sconsy. 
Lastly, arguments in a necessary explanation are such that it is necessary to make changes to at least one of these arguments to bring about \nsconsy, whether or not other changes that occur in the update of the QBAF are made. Colloquially, without changes to at least one of these arguments, there would be no \nsconsy\ to explain. 

To summarise, the explanations' objective is explaining any change in the partial order that the assignment of the final strengths establishes on a set of arguments of interest.
This is achieved by identifying arguments whose \emph{change} (addition, removal, or change of initial strength) is pertinent to the change in the partial order of the final strengths.

In what follows we formalise the intuition given above by defining and analysing novel forms of explanations in QBAFs.
We provide the formal preliminaries in Section~\ref{prelim}. We introduce in Section~\ref{main} our formal framework for explaining change of inference in QBAFs. We analyse the properties of our explanations in Section~\ref{analysis}.
An analysis of properties and constraints that aid the search for minimal explanations and hence facilitate efficient implementation is presented in Section~\ref{implementation}; the source code and documentation of an implementation (in C, with Python bindings to facilitate usage in popular data science and machine learning tool chains) is available at \url{http://s.cs.umu.se/t6xfz2}.
In Section~\ref{discussion} we discuss the novelty and relevance of our work in the context of related research\footnote{Note that this paper revises and extends an earlier conference paper~\cite{DBLP:conf/comma/KampikC22}; most notably, the present paper introduces a third type of explanations, namely necessary explanations, deals with additional edge cases, and provides formal analyses that can serve as foundations for explanation generation in applications, as well as a proof-of-concept implementation.}.
Let us highlight that QBAFs are relatively expressive in comparison to abstract argumentation, which is arguably the most prominent argumentation method; we can use this to our advantage and informally demonstrate the straightforward applicability of our approach to abstract argumentation in Subsection~\ref{applicability}.

\section{Preliminaries}
\label{prelim}
This section introduces the formal preliminaries of our work.
Let $\mathbb{I}$ be a set of elements and let $\preceq$ be a preorder on $\mathbb{I}$. 
Typically, $\mathbb{I} = [0, 1]$ is the unit interval. However, in our primary examples we use a simplistic semantics and hence a different interval, i.e.\ $\interval = \mathbb{R}$; in Subsection~\ref{applicability}, we use a totally ordered set with three elements to model a qualitative argumentation approach.
We stipulate that $\preceq \,=\, \leqslant$ is the standard less-than-equal ordering.
A \emph{quantitative bipolar argumentation framework} contains a set of arguments related by binary \emph{attack} and \emph{support} relations, and assigns an \emph{initial strength} in $\mathbb{I}$ to the arguments.
The initial strength can be thought of as initial credence in, or importance of, arguments. 
Typically, the greater the strength in say the unit interval, the more credible or important the argument is.
\begin{definition}[Quantitative Bipolar Argumentation Framework (QBAF)~\cite{Potyka:2019,Baroni:Rago:Toni:2019}]
A \emph{Quantitative Bipolar Argumentation Framework (QBAF)} is a quadruple
$\QBAF$ consisting of  a set of arguments 
$\Args$, an \emph{attack} relation 
$\Att \subseteq \Args \times \Args$, a \emph{support} relation 
$\Supp \subseteq \Args \times \Args$
and a total function 
$\is : \Args \to \interval$ 
that assigns the \emph{initial strength} $\is(\arga)$ to every $\arga \in \Args$.
\end{definition}

Henceforth, we assume as given a fixed but otherwise arbitrary QBAF $G = \QBAF$, unless specified otherwise. 
We also assume that $\Args$ is finite.

Given $\arga \in \Args$, the set $\Att_{G}(\arga) := \{ \argb~|~\argb \in \Args, (\argb, \arga) \in \Att \}$ is the set of attackers of $\arga$ and
each $\argb \in \Att_{G}(\arga)$ is an \emph{attacker} of $\arga$; 
the set $\Supp_{G}(\arga) := \{ \argc~|~\argc \in \Args, (\argc, \arga) \in  \Supp \}$ is the set of supporters of $\arga$ and 
each $\argc \in \Supp_{G}(\arga)$ is a \emph{supporter} of $\arga$. 
We may drop the subscript $\phantom{}_{G}$ when the context is clear. 
We say that for $\arga, \argb \in \Args$, $\argb$ is reachable from $\arga$ (in $G$), denoted $r_{G}(\arga, \argb)$, iff there is a path from $\arga$ to $\argb$ in the directed graph $(\Args, \Att \cup \Supp)$.
A QBAF is acyclic iff none of its arguments is reachable from itself (and cyclic otherwise).
Given $\Args' \subseteq \Args$, we define the \emph{restriction of $G$ to $\Args'$} as the QBAF $G\downarrow_{\Args} := \left(\Args', \tau \cap (\Args' \times \interval), \Att \cap (\Args' \times \Args'), \Supp \cap (\Args' \times \Args') \right)$.

Reasoning in QBAFs amounts to updating the initial strengths of arguments to their final strengths, taking into account the strengths of attackers and supporters. 
Specifically, given a QBAF, a strength function assigns final strengths to arguments in the QBAF. 
Different ways of defining a strength function are called gradual semantics~\cite{Baroni:Rago:Toni:2019,Potyka:2019}.
\begin{definition}[QBAF Semantics and Strength Functions]\label{semantics-strength-functions}
A \emph{gradual semantics} $\fs$ defines for $G = \QBAF$ a \emph{strength function} $\fs_{G} : \Args \to \interval \cup \{ \bot \}$
that assigns the \emph{final strength} $\fs_{G}(\arga)$ to each argument $\arga \in \Args$, where $\bot$ is a reserved symbol meaning `undefined'.
\end{definition}
We may abuse the notation and drop the subscript $\phantom{}_{G}$ so that $\fs$ denotes the strength function, whenever the context is clear.
The (final) strength of an argument can be thought of as its (final) credence or importance. 
Typically, the greater the strength in $\interval$, the more credible or important the argument is. 
In our examples, we use the real line as the interval: $\interval = \mathbb{R}$. 

A gradual semantics can define a strength function as a composition of multivariate real-valued functions that determines the strength of a given argument by aggregating the strengths of its attackers and supporters, taking into account the initial strengths~\cite{Potyka:2019}. 
A strength function so defined is recursive and generally takes iterated updates to produce a sequence of strength vectors, whence the final strengths are defined as the limits (or fixed points) if they exist. 
However, for \emph{acyclic} QBAFs defining a semantics and computing the final strengths can be more straightforward:
in the topological order of an acyclic QBAF as a graph, start with the leaves,\footnote{Here, leaves are nodes without incoming edges.} 
set their final strengths to equal their initial strengths, 
and then iteratively update the strengths of parents whose all children already have final strengths defined. 
For instance, in Figure \ref{fig:intro:G'} from Example \ref{ex:intro}, we can use the function $\fs(\argx) = \is(\argx) + \left( \sum_{\argy \in Supp(\argx)}\fs(\argy) - \sum_{\argz \in Att(\argx)}\fs(\argz) \right)$ 
defined as a composition, namely sum, of the initial strength ($\is(\argx)$) and the difference between the added final strengths of the supporters and the added final strengths of the attackers ($\sum_{\argx \in Supp(\argx)}\fs(\argy) - \sum_{\argz \in Att(\argx)}\fs(\argz)$). 
It gives final strengths of arguments in the topological order of $G'$: 
first $\fs(\argd) = \is(\argd) = 1$, 
then $\fs(\arga) = \is(\arga) - \fs(\argd) = 1$ and $\fs(\arge) = \is(\arge) + \fs(\argd) = 4$, 
and then $\fs(\argb) = \is(\argb) + \fs(\arga) = 2$ and $\fs(\argc) = \is(\argc) - \fs(\arga) - \fs(\arge) = 0$.

We call \emph{aggregation-influence semantics} such a gradual semantics that given an acyclic QBAF, determines a final strength of an argument by recursively \emph{aggregating} the final strengths of its attackers and supporters to then combine with the \emph{influence} of the aggregation\footnote{Mossakowski and Neuhaus introduce these semantics as \emph{modular semantics}~\cite{mossakowski2018modular}; here, we opt for a more descriptive term.}.

\begin{definition}[Aggregation-Influence Semantics]
\label{definition:aggregation-influence}
  A gradual semantics $\sigma$ is an aggregation-influence semantics iff, given an acyclic QBAF $G = \QBAF$ and argument $\argx \in \Args$ it holds that $\fs(\argx) = f_{\is(\argx)}\left(g(\{ \sigma(\argy) | \argy \in \Att(\argx) \}, \{ \sigma(\argy) | \argy \in \Supp(\argx) \})\right)$, 
  where $g: \interval^{|\Att(\argx)|} \times \interval^{|\Supp(\argx)|} \rightarrow \interval$ (aggregation function) and $f_{\is(\argx)}: \interval \rightarrow \interval$ (parameterised influence function).
\end{definition}
Table~\ref{table:semantics} provides a list of common influence and aggregation functions.
Table~\ref{table:semanticsExamples} shows some examples of aggregation-influence semantics.
\begin{table}[ht!]
\footnotesize{
\begin{tabular}{lll}
\hline
\multicolumn{3}{c}{\textbf{Aggregation Functions}} \\ \hline
Sum & $\alpha^{\Sigma}_{v}: [0,1]^n \rightarrow \mathbb{R}$ & $\alpha^{\Sigma}_{v}(s) = \sum_{i = 1}^n v_i \times s_i $  \\
Product & $\alpha^{\Pi}_{v}: [0,1]^n \rightarrow [-1, 1]$ & $\alpha^{\Pi}_{v}(s) = \prod_{i:v_i=-1} (1 - s_i) - \prod_{i:v_i=1} (1 - s_i)$  \\
Top & $\alpha^{max}_{v}: [0,1]^n \rightarrow [-1, 1]$ & $a_v^{max}(s) = M_v(s) - M_{-v}(s),$ \\
 &  & where $M_v(s) = max\{0,v_1 \times s_1, \dots, v_n \times s_n\}$  \\
\hline
\multicolumn{3}{c}{\textbf{Influence Functions}} \\ \hline
Linear($k$) & $\iota^{l}_{w}: [-k, k] \rightarrow [0, 1]$ & $\iota^{l}_{w}(s) = w - \frac{w}{k} \times max\{0,-s\} + \frac{1-w}{k} \times max\{0, s\}$ \\
Euler-based & $\iota^{e}_{w}: \mathbb{R} \rightarrow [w^2, 1]$ & $\iota^{e}_{w}(s) = 1 - \frac{1-w^2}{1 + w \times e^s}$ \\
p-Max($k$) & $\iota^{p}_{w}: \mathbb{R} \rightarrow [0, 1]$ & $\iota^{p}_{w}(s) = w - w \times h(- \frac{s}{k}) + (1-w) \times h(\frac{s}{k}),$  \\
for $p \in \mathbb{N}$ &  & where $h(x) = \frac{max\{0,x\}^p}{1 + max\{0,x\}^p}$  \\
\end{tabular}}
\caption{Common aggregation $\alpha$ and (parametrized) influence $\iota$ functions~\cite[pp.~1724 Table 1; with a fixed typo for p-Max($k$)]{Potyka:2019}. Parameter $s$ represents the strength of an argument at that state, $w$ the initial strength,  and $s_i$ and $v_i \in \{ -1, 1 \}$ the strengths and relationships, respectively, of the argument's attackers/supporters.}
\label{table:semantics}
\end{table}

\begin{table}[ht!]
\centering
\begin{tabular}{lll}
\hline
\textbf{Semantics}           & \textbf{Aggregation} & \textbf{Influence}  \\ \hline
QuadraticEnergyModel         & Sum         & 2-Max(1)  \\
SquaredDFQuADModel & Product     & 1-Max(1)  \\
EulerBasedTopModel & Top         & EulerBased \\
EulerBasedModel    & Sum         & EulerBased \\
DFQuADModel        & Product     & Linear(1) 
\end{tabular}
\caption{Examples of gradual semantics.}
\label{table:semanticsExamples}
\end{table}

While many gradual semantics can be defined for QBAFs in general, their convergence is not always guaranteed in a particular QBAF. 
For several well-studied semantics, convergence is however always guaranteed in acyclic QBAFs\footnote{See e.g.\ \cite{Potyka:2019} for a neat exposition of convergence results under various semantics.}. 
For illustration purposes to avoid dealing with the sometimes demanding definitions of strength functions, we use acyclic QBAFs and the above strength function $\sigma$ (in accordance with a topological ordering of an acyclic QBAF). We however note that both the formal definitions and theoretical analysis of \nsconsy\ explanations given in the paper apply to the general setting of gradual semantics giving total or partial strength functions.
In contrast, we will see that our implementation relies on well-defined semantics that give total strength functions and satisfy \emph{directional connectedness}~\cite{tree} (roughly, that an argument affects the final strength of another only if the latter is reachable from the former; we will define this in Section~\ref{implementation}).

\section{Change Explainability in QBAFs}
\label{main}
In this section, we introduce our formal approach to change explainability in QBAFs.
Henceforth in this and the next section, unless stated otherwise, we let $G = \QBAF$ and $G' = \QBAFF$ be QBAFs, let $\arga, \argb, \argx, \argy \in \Args \cap \Args'$, let $\sigma$ be a strength function, and let $S \subseteq \Args \cup \Args'$. Let us highlight that we do not formalise the QBAF change/update operation; instead, we assume that we have two QBAFs that have at least two arguments in common, and the second QBAF (namely $G'$) can be considered an updated (or evolved) version of the first one (namely $G$).

We start by defining the preliminary notion of \emph{comparability} to then introduce the central definition of \emph{\sconsy}.
We need argument comparability to make sure we are comparing well-defined strengths of arguments.

\begin{definition}[Argument Comparability w.r.t.\ Final Strength]\label{def:comparability}
We say that $\arga$ is \emph{comparable} with $\argb$ (in $G$), denoted by $\comp{\arga}{\argb}[G]$, iff $\fs_{G}(\arga) \neq \bot$ and $\fs_{G}(\argb) \neq \bot$ and ($\fs_{G}(\arga) \geq \fs_{G}(\argb)$ or $\fs_{G}(\arga) \leq \fs_{G}(\argb)$).
\end{definition}
If $\comp{\arga}{\argb}[G]$ does not hold, we call $\arga$ and $\argb$ \emph{incomparable}, denoted by $\ncomp{\arga}{\argb}[G]$, and when there is no ambiguity, we drop the subscripts and write $\comp{\arga}{\argb}$ to denote comparability and $\ncomp{\arga}{\argb}$ to denote incomparability.

Now we can introduce the central notion to qualitatively evaluate final strength changes in evolving QBAFs.
\begin{definition}[Strength Consistency]\label{def:po-consistency}
We say that $\arga$ is \emph{\scons} w.r.t.\ $\argb$, 
denoted by $\scon{\arga}{\argb}[G][G']$, iff the following statements hold true:
\begin{itemize}
    \item If $\fs_{G}(\arga) > \fs_{G}(\argb)$ then $\fs_{G'}(\arga) > \fs_{G'}(\argb)$;
    \item If $\fs_{G}(\arga) < \fs_{G}(\argb)$ then $\fs_{G'}(\arga) < \fs_{G'}(\argb)$;
    \item If $\fs_{G}(\arga) = \fs_{G}(\argb)$ then $\fs_{G'}(\arga) = \fs_{G'}(\argb)$;
    \item $\comp{\arga}{\argb}[G]$ iff $\comp{\arga}{\argb}[G']$.
\end{itemize}
\end{definition}
Intuitively, two arguments are \scons\ just in case their relative strengths correspond between the two QBAFs. 
In an obvious way, $\nscon{\arga}{\argb}[G][G']$ denotes the negation of $\scon{\arga}{\argb}[G][G']$ and we say that $\arga$ and $\argb$ are \emph{\nscons}. 
When there is no ambiguity, we drop the subscripts and write $\arga \sim \argb$ to denote that $\arga$ is \scons\ w.r.t.\ $\argb$, and similarly for the derived notions.

The main objective of this paper is to define explanations as to what changes in an evolving QBAF lead to any two \emph{topic} arguments being \nscons\ (or \scons). 
As illustrated in the \nameref{intro} section, we seek to define explanations of three different types as follows.

\begin{enumerate}
\item Sufficient explanations should capture only those arguments with respect to which the changes in the update of the QBAF entail the same partial order of the final strengths of topic arguments as is obtained after the actual update.

\item 
Counterfactual explanations should capture only those arguments with respect to which changes in the update of the QBAF both (i) entail the same partial order of the final strengths of topic arguments as is obtained after the actual update, and, (ii) if reverted, would restore the order before the update.

\item Necessary explanations should capture all those arguments with respect to which changes in the update of the QBAF are entailed by the change in the partial order of the final strengths of topic arguments obtained after the actual update.
\end{enumerate}

As a prerequisite for defining our explanations, we introduce the notion of a QBAF \emph{reversal} with respect to a set of arguments, where such sets of arguments will later play the role of explanations. 
Colloquially speaking, given QBAFs $G$ and its update $G'$, a reversal of $G'$ to $G$ w.r.t.\ a set of arguments $S$ updates the properties of every argument from $S$ in $G'$ so that they reflect the properties of the same argument in $G$: arguments from $S$ that are not in $G$ are deleted and arguments from $S$ that are in $G$ but not in $G'$ are restored.
\begin{definition}[QBAF Reversal]\label{def:qbaf-reversal}
We define the \emph{reversal} of $G'$ to $G$ w.r.t.\ $S \subseteq \Args \cup Args'$, denoted by $G_{\leftarrow G'}(S)$, as a QBAF $(\Args^*, \tau^*, \Att^*, \Supp^*)$, where:
\begin{itemize}
    \item $\Args^* = \left( \Args' \cup S \right) \setminus (S \setminus \Args)$;
    \item $\Att^* = \left(\underbrace{(\Att'\setminus(S\times\Args))}_{\text{$\Att'$, but not from $S$ to $Args$}} \cup \underbrace{\left((S\times\Args^*)\cap\Att\right)}_{\text{$\Att$ from $S$ to $\Args^*$}}\right) \cap (\Args^* \times \Args^*)$;
    \item $\Supp^* = \Bigl(\left(\Supp'\setminus(S\times\Args)\right) \cup \left((S\times\Args^*)\cap\Supp\right)\Bigr) \cap (\Args^* \times \Args^*)$; 
    \item $\tau^*: \Args^* \rightarrow \mathbb{I}$ and $\forall \argx \in \Args^*$ the following statement holds true:
    \begin{align*}
       &{} \tau^*(\argx) = 
       \begin{cases}
        \tau(\argx), \text{ if } \argx \in \Args \cap S; \\
        \tau'(\argx), \text{ otherwise }.
    \end{cases}
    \end{align*}
\end{itemize}
\end{definition}
Intuitively:
for arguments that were removed (i.e.\ arguments from $\Args \setminus \Args'$), those from $S$ are added back;
for arguments that were added (i.e.\ arguments from $\Args' \setminus \Args$), those from $S$ are removed. 
The arguments are restored with the associated initial strengths, attacks and supports: in the reversal, we restore ``old'' attacks and supports from $S$; 
we leave ``new'' attacks and supports unless they are from $S$ either to the ``old'' arguments or to arguments that have been removed with the reversal. 
For visual intuition, a Venn diagram of the set $\Args^*$ is given in Figure \ref{fig:venn} and two examples of QBAF reversals for QBAFs from Example~\ref{ex:intro} are given in Figure~\ref{fig:reversals-e}. 
\begin{figure}[ht!]
\centering
\begin{tikzpicture}[line width=0.35mm]
  \draw[pattern=north west lines, pattern color=lightgray] (1.5,0) circle (1.5cm) node[right] {\phantom{e} $\Args'$};
  \draw[pattern=north west lines, pattern color=lightgray] (0.75,-0.3) circle (0.9cm) node[above] {$S$};
    \begin{scope}
    \clip (0.75,-0.3) circle (0.865cm);
    \fill[white] (1.5,0) circle (1.5cm);
    \clip (0,0) circle (1.5cm);
    \fill[pattern=north west lines, pattern color=lightgray] (1.5,0) circle (1.5cm);
    \draw (0.75,-0.3) node[above] {$S$};
    \end{scope}
    \draw (0,0) circle (1.5cm) node[left] {$\Args$ \phantom{e}};
\end{tikzpicture}

\caption{Venn diagram for $\Args^* = (\Args' \cup S) \setminus (S \setminus \Args)$ (shaded with lines) in the reversal $G_{\leftarrow G'}(S) = \QBAFSTAR{}$ of $G'$ to $G$ w.r.t.\ $S \subseteq \Args' \cup \Args$.}
\label{fig:venn}
\end{figure}
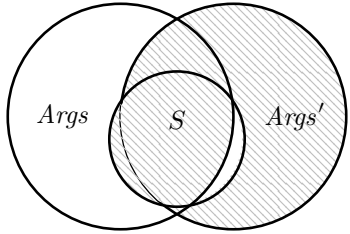

\begin{figure}[!ht]
    \subfloat[$G$]{
    \label{fig:reversal-original}
    \begin{tikzpicture}[scale=0.55,
        unanode/.style={scale=0.55,circle, draw=black!75, minimum size=10mm, font=\bfseries}
        ]
        \node[unanode]    (a)    at(0,2)  {\argnode{\arga}{1}{1}};
        \node[unanode]  (b)    at(0,0)  {\argnode{\argb}{1}{2}};
        \node[unanode]    (c)    at(2,0)  {\argnode{\argc}{5}{4}};
        \path [->, line width=0.5mm]  (a) edge node[left] {+} (b);
        \path [->, line width=0.5mm]  (a) edge node[left] {-} (c);
    \end{tikzpicture}
    }
    \centering
    \subfloat[$G'$]{
    \label{fig:reversal-update}
    \begin{tikzpicture}[scale=0.55,
        unanode/.style={scale=0.55,circle, draw=black!75, minimum size=10mm, font=\bfseries}
        ]
        \node[unanode]    (a)    at(0,2)  {\argnode{\arga}{2}{1}};
        \node[unanode]  (b)    at(0,0)  {\argnode{\argb}{1}{2}};
        \node[unanode]    (c)    at(2,0)  {\argnode{\argc}{5}{0}};
        \node[unanode]    (d)    at(1,4)  {\argnode{\argd}{1}{1}};
        \node[unanode]    (e)    at(3,2)  {\argnode{\arge}{3}{4}};
        
        \path [->, line width=0.5mm]  (a) edge node[left] {+} (b);
        \path [->, line width=0.5mm]  (a) edge node[left] {-} (c);
        \path [->, line width=0.5mm]  (e) edge node[left] {-} (c);
        \path [->, line width=0.5mm]  (d) edge node[left] {-} (a);
        \path [->, line width=0.5mm]  (d) edge node[right] {+} (e);
    \end{tikzpicture}
    }
    \centering
    \subfloat[$G_{\gets G'}(\{\arge\})$]{
    \label{fig:reversal-e}
    \begin{tikzpicture}[scale=0.55,
        old/.style={scale=0.55,circle, draw=black!75, minimum size=10mm, font=\bfseries, line width=1pt},
        restored/.style={scale=0.55,circle, draw=black!65, minimum size=10mm, font=\bfseries, line width=1pt},
        new/.style={scale=0.55,circle, dashed, draw=black!80, minimum size=10mm, font=\bfseries, line width=1pt}
        ]
        \node[new]    (a)    at(0,2)  {\argnode{\arga}{2}{1}};
        \node[old]  (b)    at(0,0)  {\argnode{\argb}{1}{2}};
        \node[old]    (c)    at(2,0)  {\argnode{\argc}{5}{4}};
        \node[new]    (d)    at(1,4)  {\argnode{\argd}{1}{1}};
        
        \node at(3.5,2)  {\phantom{ }};  
        
        \path [->, line width=0.5mm]  (a) edge node[left] {+} (b);
        \path [->, line width=0.5mm]  (a) edge node[left] {-} (c);
        \path [->, line width=0.5mm, dashed]  (d) edge node[left] {-} (a);
    \end{tikzpicture} 
    } 
    \centering
    \subfloat[$G_{\gets G'}(\{ \arga, \argb, \argc, \argd \})$]{
    \label{fig:reversal-but-e}
    \begin{tikzpicture}[scale=0.55,
        old/.style={scale=0.55,circle, draw=black!75, minimum size=10mm, font=\bfseries, line width=1pt},
        restored/.style={scale=0.55,circle,  densely dotted, draw=black!65, minimum size=10mm, font=\bfseries, line width=1pt},
        new/.style={scale=0.55,circle, dashed, draw=black!80, minimum size=10mm, font=\bfseries, line width=1pt}
        ]
        \node[restored]    (a)    at(0,2)  {\argnode{\arga}{1}{1}};
        \node[old]  (b)    at(0,0)  {\argnode{\argb}{1}{2}};
        \node[old]    (c)    at(2,0)  {\argnode{\argc}{5}{1}};
        \node[new]    (e)    at(3,2)  {\argnode{\arge}{3}{3}};

        \node at(5,2)  {\phantom{ }};  
        
        \path [->, line width=0.5mm]  (a) edge node[left] {+} (b);
        \path [->, line width=0.5mm]  (a) edge node[left] {-} (c);
        \path [->, line width=0.5mm, dashed]  (e) edge node[left] {-} (c);
    \end{tikzpicture}
    }
\caption{Using QBAFs $G$ and $G'$ from Example~\ref{ex:intro} (for reference in Figures \ref{fig:reversal-original} and \ref{fig:reversal-update}), we illustrate reversals of $G'$ to $G$ w.r.t.\ sets $\{ \arge \}$ and $\left( \Args \cup \Args' \right) \setminus \{ \arge \} = \{ \arga, \argb, \argc, \argd \}$ (i.e.\ w.r.t.\ $\arge$ and everything except $\arge$) in Figures \ref{fig:reversal-e} and \ref{fig:reversal-but-e}, respectively. Arguments (and relationships, if any) with dashed borders are those that were changed and not reverted; the reverted ones have pointed borders; all others stay unchanged before and after the reversal.}
\label{fig:reversals-e}
\end{figure}
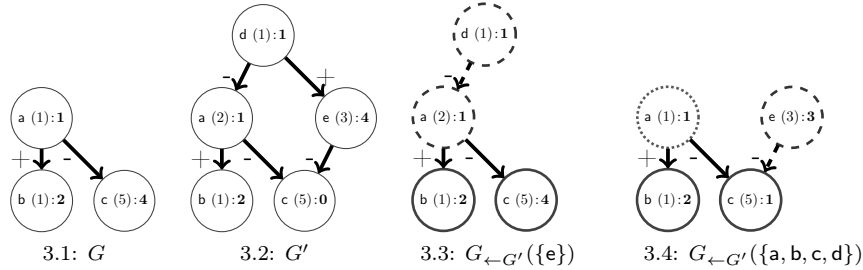

Using the notion of a QBAF reversal, we introduce different notions of \emph{\nsconsy\ explanations}, that are sets of arguments intuitively described as follows.
\begin{itemize}
    \item $\emptyset$ is a sufficient, necessary, and counterfactual explanation 
    if we do not find \nsconsy\ after the update from $G$ to $G'$.
    \item $S \neq \emptyset$ is a sufficient explanation of \nsconsy\ after the update from $G$ to $G'$ if the inconsistency persists when we \emph{revert everything except $S$} back. Note that this formulation coincides with sufficing to change only $S$ to cause inconsistency.
    \item $C \neq \emptyset$ is a counterfactual explanation of \nsconsy\ after the update from $G$ to $G'$ if $C$ suffices for the inconsistency and the inconsistency ceases when we \emph{revert $C$ itself} back while keeping the other changes.
    \item $N \neq \emptyset$ is a necessary explanation of \nsconsy\ after the update from $G$ to $G'$ if $N$ meets every sufficient explanation, i.e.\ no sufficient explanation is disjoint from $N$. Note that this formulation coincides with necessitating to change something in $N$ to cause inconsistency, because inconsistency is caused by some sufficient changes and those are met by $N$.
\end{itemize}
For all three types of explanations, we are mostly interested in their $\subset$-minimal versions\footnote{In particular, to respect the semantics of the word ``necessary'', we will define only minimally necessary explanations and will illustrate thereafter why.}.
\begin{definition}[Strength Inconsistency Explanations]\label{def:inconsistency-explanations}
We say that 
\begin{itemize}
    \item $S \subseteq \Args' \cup \Args$ is a \emph{Sufficient Strength Inconsistency (SSI) explanation} of $\argx$ and $\argy$ w.r.t.\ $\fs$, $G$, $G'$, and a class of QBAFs $\cal G$ iff the following statement holds true:
    \begin{align*}
        &{} \textbf{either}
        \underbrace{\left(
        S = \emptyset \quad\text{ and }\quad \scon{\argx}{\argy}[G][G'] 
        \right)}_{\text{$\argx$ and $\argy$ are \scons, so empty explanation}}\\
        &{} \textbf{or}
        \underbrace{(
        \nscon{\argx}{\argy}[G][G'] \quad\text{ and }\quad 
        \nscon{\argx}{\argy}[G][G_{\leftarrow G'}((\Args \cup \Args')\setminus S)]
        }_{\text{$\argx$ and $\argy$ are \nscons\ and remain so after reversing everything but $S$ back}}\\
        &{} \quad\quad \textbf{and} \quad\quad\:
        \underbrace{G_{\leftarrow G'}((\Args \cup \Args')\setminus S) \in {\cal G}}_{\text{the reversal is in our search space $\cal G$}}
        )
    \end{align*}
    $SX_{{\cal G}}(\nscon{\argx}{\argy}[G][G'])$ denotes the set of all SSI explanations of $\argx$ and $\argy$ w.r.t.\ $\fs$, $G$, $G'$, and $\cal G$
    and $SX_{\subset_{\min},{\cal G}}(\nscon{\argx}{\argy}[G][G'])$ denotes the set of all $\subset$-minimal SSI explanations of $\argx$ and $\argy$ w.r.t.\ $\fs$, $G$, $G'$, and $\cal G$.
    \item $C \subseteq \Args' \cup \Args$ is a \emph{Counterfactual Strength Inconsistency (CSI) explanation} of $\argx$ and $\argy$ w.r.t.\ $\fs$, $G$, $G'$, and a class of QBAFs $\cal G$ iff the following statement holds true:
    \begin{align*} %
        &{} \quad\quad\quad\quad\quad\quad\:\:\: \underbrace{C \in SX_{{\cal G}}(\nscon{\argx}{\argy}[G][G'])}_{\text{$C$ is an SSI of $\argx$ and $\argy$}} \\
        &{} \textbf{and } \underbrace{\scon{\argx}{\argy}[G][G_{\leftarrow G'}(C)]}_{\text{$\argx$ and $\argy$ become \scons\ after reversing $C$}} \\
         &{}\textbf{and } \quad \quad \quad \underbrace{G_{\leftarrow G'}(C) \in {\cal G}}_{\text{the reversal is in our search space $\cal G$}} 
    \end{align*} 
    $CX_{{\cal G}}(\nscon{\argx}{\argy}[G][G'])$ denotes the set of all CSI explanations of $\argx$ and $\argy$ w.r.t.\ $\fs$, $G$, $G'$, and $\cal G$ and 
    $CX_{\subset_{\min},{\cal G}}(\nscon{\argx}{\argy}[G][G'])$ denotes the set of all $\subset$-minimal CSI explanations of $\argx$ and $\argy$ w.r.t.\ $\fs$, $G$, $G'$, and $\cal G$.
    \item $N \subseteq \Args' \cup \Args$ is a \emph{Necessary Strength Inconsistency (NSI) explanation} of $\argx$ and $\argy$ w.r.t.\ $\sigma$, $G$, $G'$, and a class of QBAFs $\cal G$ iff the following statement holds true:
    \begin{align*}
        &{} \quad\quad\quad\quad\quad\quad\quad\quad\:\: \underbrace{N \in SX_{{\cal G}}(\nscon{\argx}{\argy}[G][G'])}_{\text{$N$ is an SSI of $\argx$ and $\argy$}} \\
        &{} \textbf{and } \underbrace{\nexists S \subseteq (\Args \cup \Args') \setminus N \text{ s.t.\ } S \in SX_{{\cal G}}(\nscon{\argx}{\argy}[G][G'])}_{\text{no $S$ disjoint from $N$ is an SSI of $\argx$ and $\argy$}} \\
        &{} \textbf{and } N \text{ is $\subset$-minimal such set.}
    \end{align*}
    $NX_{\subset_{\min},{\cal G}}(\nscon{\argx}{\argy}[G][G'])$ denotes the set of all ($\subset$-minimal) NSI explanations of $\argx$ and $\argy$ w.r.t.\ $\fs$, $G$, $G'$, and $\cal G$.
\end{itemize}
Analogously to the case of \sconsy, when there is no ambiguity, we may drop the subscripts and write simply $SX(\argx \not \sim \argy)$ to denote all SSI explanations of $\argx$ and $\argy$ (w.r.t.\ the implicit $\fs$, $G$, $G'$, and ${\cal G}$), and similarly for the derived notions; then, we assume by default the implicit ${\cal G}$ is the class of all QBAFs.
\end{definition}
We explain further below (Example~\ref{ex:cycles}) why making the explanations dependent on a class of QBAFs is of practical relevance. Intuitively, changes to arguments in a $\subset$-minimal SSI explanation are minimally sufficient to explain \nsconsy\ even if other changes do not happen. 
Changes to arguments in a CSI explanation $C$ are also sufficient but need not be necessary for \nsconsy\ to take place, because there may be some combination (i.e.\ an SSI explanation) $S$ of other changes that result in \nsconsy; instead, the absence of changes to arguments in $C$ while keeping all other changes would restore \sconsy. 
Finally, changes to at least some arguments in an NSI explanation $N$ are needed, because $N$ meets every SSI explanation $S$, and so, contrapositively, if no changes to arguments in $N$ were made, then, in particular, no minimal SSI explanation $S$ would have all of its arguments changed, and so changes to at least some arguments would be lacking to bring about the \nsconsy. 

Let us revisit the example from the \emph{\nameref{intro}} section to illustrate how \nsconsy\ explanations explain change of inference in QBAFs, this time with formal notation.
\begin{example}[Example~\ref{ex:intro} revisited]
\label{ex:inconsistency-explanations}
\begin{figure}[!ht]  
    \subfloat[$G$]{
    \label{fig:SSI-example:G}\label{fig:example:G}
    \begin{tikzpicture}[scale=0.55,
        noanode/.style={scale=0.55,dashed, circle, draw=black!60, minimum size=10mm, font=\bfseries},
        unanode/.style={scale=0.55,circle, draw=black!75, minimum size=10mm, font=\bfseries},
        invnode/.style={scale=0.55,circle, draw=white!0, minimum size=0mm, font=\bfseries},
        anode/.style={scale=0.55,circle, fill=lightgray, draw=black!60, minimum size=10mm, font=\bfseries},
        ]
        \node[unanode]    (a)    at(0,2)  {\argnode{\arga}{1}{1}};
        \node[unanode]  (b)    at(0,0)  {\argnode{\argb}{1}{2}};
        \node[unanode]    (c)    at(2,0)  {\argnode{\argc}{5}{4}};
        \path [->, line width=0.5mm]  (a) edge node[left] {+} (b);
        \path [->, line width=0.5mm]  (a) edge node[left] {-} (c);
    \end{tikzpicture}
    }
    \hspace{20pt}
    \centering
    \subfloat[$G'$]{
    \label{fig:SSI-example:G'}\label{fig:example:G'}
    \begin{tikzpicture}[scale=0.55,
        noanode/.style={scale=0.55,dashed, circle, draw=black!60, minimum size=10mm, font=\bfseries},
        unanode/.style={scale=0.55,circle, draw=black!75, minimum size=10mm, font=\bfseries},
        xunanode/.style={scale=0.55,circle, draw=black!75, minimum size=10mm, font=\bfseries, line width=2.5pt},
        invnode/.style={scale=0.55,circle, draw=white!0, minimum size=0mm, font=\bfseries},
        anode/.style={scale=0.55,circle, fill=lightgray, draw=black!60, minimum size=10mm, font=\bfseries},
        xanode/.style={scale=0.55, circle, fill=lightgray, draw=black!60, minimum size=10mm, font=\bfseries,line width=2.5pt},
        xnoanode/.style={scale=0.55, circle, dashed, draw=black!60, minimum size=10mm, font=\bfseries,line width=2.5pt}
        ]
        \node[xunanode]    (a)    at(0,2)  {\argnode{\arga}{2}{1}};
        \node[unanode]  (b)    at(0,0)  {\argnode{\argb}{1}{2}};
        \node[unanode]    (c)    at(2,0)  {\argnode{\argc}{5}{0}};
        \node[unanode]    (d)    at(1,4)  {\argnode{\argd}{1}{1}};
        \node[xunanode]    (e)    at(3,2)  {\argnode{\arge}{3}{4}};
        
        \path [->, line width=0.5mm]  (a) edge node[left] {+} (b);
        \path [->, line width=0.5mm]  (a) edge node[left] {-} (c);
        \path [->, line width=0.5mm]  (e) edge node[left] {-} (c);
        \path [->, line width=0.5mm]  (d) edge node[left] {-} (a);
        \path [->, line width=0.5mm]  (d) edge node[right] {+} (e);
    \end{tikzpicture} 
    } 
    \hspace{20pt}
    \centering
    \subfloat[$G^{\arga} = G_{\gets G'}(S_{\{ \arga \}})$]{
    \label{fig:example:Ga}
    \begin{tikzpicture}[scale=0.55,
        noanode/.style={scale=0.55,dashed, circle, draw=black!60, minimum size=10mm, font=\bfseries},
        unanode/.style={scale=0.55,circle, draw=black!75, minimum size=10mm, font=\bfseries},
        xunanode/.style={scale=0.55,circle, draw=black!75, minimum size=10mm, font=\bfseries, line width=2.5pt},
        invnode/.style={scale=0.55,circle, draw=white!0, minimum size=0mm, font=\bfseries},
        anode/.style={scale=0.55,circle, fill=lightgray, draw=black!60, minimum size=10mm, font=\bfseries},
        xanode/.style={scale=0.55, circle, fill=lightgray, draw=black!60, minimum size=10mm, font=\bfseries,line width=2.5pt},
        xnoanode/.style={scale=0.55, circle, dashed, draw=black!60, minimum size=10mm, font=\bfseries,line width=2.5pt}
        ]
        \node[xunanode]    (a)    at(0,2)  {\argnode{\arga}{2}{2}};
        \node[unanode]  (b)    at(0,0)  {\argnode{\argb}{1}{3}};
        \node[unanode]    (c)    at(2,0)  {\argnode{\argc}{5}{3}};
        \node at(6,2)  {\phantom{ }};  
        \path [->, line width=0.5mm]  (a) edge node[left] {+} (b);
        \path [->, line width=0.5mm]  (a) edge node[left] {-} (c);
    \end{tikzpicture}
    }
    \\
    \centering
    \subfloat[$G^{*} \newline\phantom{ }\qquad= G_{\gets G'}(\{\arga\})$]{
    \label{fig:SSI-example:G-reversal2}\label{fig:example:G*}
    \begin{tikzpicture}[scale=0.55,
        noanode/.style={scale=0.55,dashed, circle, draw=black!60, minimum size=10mm, font=\bfseries},
        unanode/.style={scale=0.55,circle, draw=black!75, minimum size=10mm, font=\bfseries},
        xunanode/.style={scale=0.55,circle, draw=black!75, minimum size=10mm, font=\bfseries, line width=2.5pt},
        invnode/.style={scale=0.55,circle, draw=white!0, minimum size=0mm, font=\bfseries},
        anode/.style={scale=0.55,circle, fill=lightgray, draw=black!60, minimum size=10mm, font=\bfseries},
        xanode/.style={scale=0.55, circle, fill=lightgray, draw=black!60, minimum size=10mm, font=\bfseries,line width=2.5pt},
        xnoanode/.style={scale=0.55, circle, dashed, draw=black!60, minimum size=10mm, font=\bfseries,line width=2.5pt}
        ]
        \node[xunanode]    (a)    at(0,2)  {\argnode{\arga}{1}{0}};
        \node[unanode]  (b)    at(0,0)  {\argnode{\argb}{1}{1}};
        \node[unanode]    (c)    at(2,0)  {\argnode{\argc}{5}{1}};
        \node[unanode]    (d)    at(1,4)  {\argnode{\argd}{1}{1}};
        \node[unanode]    (e)    at(3,2)  {\argnode{\arge}{3}{4}};
        \node at(5,2)  {\phantom{ }};  
        \path [->, line width=0.5mm]  (a) edge node[left] {+} (b);
        \path [->, line width=0.5mm]  (a) edge node[left] {-} (c);
        \path [->, line width=0.5mm]  (e) edge node[left] {-} (c);
        \path [->, line width=0.5mm]  (d) edge node[left] {-} (a);
        \path [->, line width=0.5mm]  (d) edge node[right] {+} (e);
    \end{tikzpicture} 
    } 
    \hspace{5pt}
    \centering
    \subfloat[$G^{\arge} \newline\phantom{ }\qquad= G_{\gets G'}(S_{\{ \arge \}})$]{
    \label{fig:example:Ge}
    \begin{tikzpicture}[scale=0.55,
        noanode/.style={scale=0.55,dashed, circle, draw=black!60, minimum size=10mm, font=\bfseries},
        unanode/.style={scale=0.55,circle, draw=black!75, minimum size=10mm, font=\bfseries},
        xunanode/.style={scale=0.55,circle, draw=black!75, minimum size=10mm, font=\bfseries, line width=2.5pt},
        invnode/.style={scale=0.55,circle, draw=white!0, minimum size=0mm, font=\bfseries},
        anode/.style={scale=0.55,circle, fill=lightgray, draw=black!60, minimum size=10mm, font=\bfseries},
        xanode/.style={scale=0.55, circle, fill=lightgray, draw=black!60, minimum size=10mm, font=\bfseries,line width=2.5pt},
        xnoanode/.style={scale=0.55, circle, dashed, draw=black!60, minimum size=10mm, font=\bfseries,line width=2.5pt}
        ]
        \node[unanode]    (a)    at(0,2)  {\argnode{\arga}{1}{1}};
        \node[unanode]  (b)    at(0,0)  {\argnode{\argb}{1}{2}};
        \node[unanode]    (c)    at(2,0)  {\argnode{\argc}{5}{1}};
        \node[xunanode]    (e)    at(3,2)  {\argnode{\arge}{3}{3}};
        \node at(5,2)  {\phantom{ }};  
        \path [->, line width=0.5mm]  (a) edge node[left] {+} (b);
        \path [->, line width=0.5mm]  (a) edge node[left] {-} (c);
        \path [->, line width=0.5mm]  (e) edge node[left] {-} (c);
    \end{tikzpicture}
    }
    \hspace{5pt}
    \centering
    \subfloat[$G^{**} \newline\phantom{ }\qquad= G_{\gets G'}(\{\arge\})$]{
    \label{fig:SSI-example:G-reversal4}\label{fig:example:G**}
    \begin{tikzpicture}[scale=0.55,
        noanode/.style={scale=0.55,dashed, circle, draw=black!60, minimum size=10mm, font=\bfseries},
        unanode/.style={scale=0.55,circle, draw=black!75, minimum size=10mm, font=\bfseries},
        xunanode/.style={scale=0.55,circle, draw=black!75, minimum size=10mm, font=\bfseries, line width=2.5pt},
        invnode/.style={scale=0.55,circle, draw=white!0, minimum size=0mm, font=\bfseries},
        anode/.style={scale=0.55,circle, fill=lightgray, draw=black!60, minimum size=10mm, font=\bfseries},
        xanode/.style={scale=0.55, circle, fill=lightgray, draw=black!60, minimum size=10mm, font=\bfseries,line width=2.5pt},
        xnoanode/.style={scale=0.55, circle, dashed, draw=black!60, minimum size=10mm, font=\bfseries,line width=2.5pt}
        ]
        \node[unanode]    (a)    at(0,2)  {\argnode{\arga}{2}{1}};
        \node[unanode]  (b)    at(0,0)  {\argnode{\argb}{1}{2}};
        \node[unanode]    (c)    at(2,0)  {\argnode{\argc}{5}{4}};
        \node[unanode]    (d)    at(1,4)  {\argnode{\argd}{1}{1}};
        \node at(5,2)  {\phantom{ }};  
        \path [->, line width=0.5mm]  (a) edge node[left] {+} (b);
        \path [->, line width=0.5mm]  (a) edge node[left] {-} (c);
        \path [->, line width=0.5mm]  (d) edge node[left] {-} (a);
    \end{tikzpicture} 
    } 
\caption{QBAFs for explanations from Example~\ref{ex:intro}/\ref{ex:inconsistency-explanations}.}
\label{fig:explanations}
\end{figure}
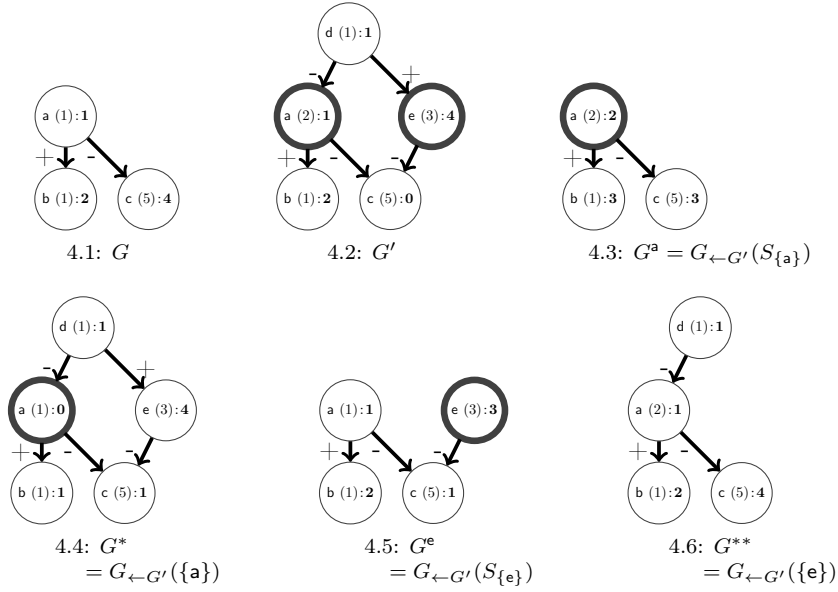
Figures~\ref{fig:example:G} and~\ref{fig:example:G'} depict again the QBAFs $G = (\{\arga, \argb, \argc\}, \is, \{(\arga, \argc)\}, \{(\arga, \argb)\})$ and 
$G' = (\{\arga, \argb, \argc, \argd, \arge\}, \is', \{(\arga, \argc), (\arge, \argc), (\argd, \arga) \}, \{(\arga, \argb), (\argd, \arge) \})$ from Example~\ref{ex:intro}, where:
\begin{itemize}
    \item $\is(\arga) = \is(\argb) = 1$ and $\is(\argc) = 5$;
    \item $\is'(\arga) = 2$, $\is'(\argb) = \is'(\argd) = 1$, $\is'(\argc) = 5$ and $\is'(\arge) = 3$.
\end{itemize}
Consider the gradual semantics $\sigma$ defined using the illustrative strength function $\fs(\argx) = \is(\argx) + \left( \sum_{\argy \in Supp(\argx)}\fs(\argy) - \sum_{\argz \in Att(\argx)}\fs(\argz) \right)$ that updates the strengths of arguments in an acyclic QBAF according to its topological ordering, as previously discussed.
Denote $\fs_{G}$ and $\fs_{G'}$ by $\fs$ and $\fs'$, respectively. 
We are primarily interested in the final strengths of the topic arguments $b$ and $c$:
$\fs(\argb) = 2 < 4 = \fs(\argc)$. 
In contrast, $\fs'(\argb) = 2 > 0 = \fs'(\argc)$. 
Hence, $\argb$ is \nscons\ w.r.t.\ $\argc$ 
($\nscon{\argb}{\argc}$),
for which we have the following sets of, respectively, (minimal) sufficient, counterfactual, and necessary explanations: 
\begin{itemize}
\item $SX_{\subset_{\min}}(\nscon{\argb}{\argc}) = \{\{\arga\}, \{\arge\}\}$;
\item $CX_{\subset_{\min}}(\nscon{\argb}{\argc}) = \{\{\arge\}\}$;
\item $NX_{\subset_{\min}}(\nscon{\argb}{\argc}) = \{\{\arga, \arge\}\}$.
\end{itemize}

Considering first $\{ \arga \}$, let us analyse the reversal $G^{\arga} \coloneqq G_{\gets G'}(S_{\{ \arga \}})$ of $G'$ to $G$ w.r.t.\ the relative complement $S_{\{ \arga \}} \coloneqq (\Args \cup \Args') \setminus \{ \arga \} = \{ \argb, \argc, \argd, \arge \}$ of $\{ \arga \}$.
$G^{\arga}$ will have the following arguments (recall Definition~\ref{def:qbaf-reversal}):
\begin{align*}
    &\big{(} \Args' \cup S_{\{ \arga \}} \big{)} \big{\backslash} \big{(} S_{\{ \arga \}} \setminus \Args \big{)} = \\
    &\left( \{ \arga, \argb, \argc, \argd, \arge \} \cup \{ \argb, \argc, \argd, \arge \} \right) \setminus \left( \{ \argb, \argc, \argd, \arge \} \setminus \{ \arga, \argb, \argc \} \right)= \{ \arga, \argb, \argc, \argd, \arge \} \setminus \{ \argd, \arge \} = \Args.
\end{align*}
To determine the initial strengths of arguments in $G^{\arga}$ (recall Definition~\ref{def:qbaf-reversal}), we find that $\Args \cap S_{\{ \arga \}} = \{ \argb, \argc \}$. 
Thus, reversing w.r.t.\ all arguments except $\arga$ yields
\begin{align*}
    &G^{\arga} = G_{\gets G'}(S_{\{ \arga \}}) = (\Args^{\arga}, \is^{\arga}, \Att^{\arga}, \Supp^{\arga}) = \\
    &\left( (\Args' \cup S_{\{ \arga \}}) \setminus (S_{\{ \arga \}} \setminus \Args), \is^{\arga}, \right. \\
    &{} \quad \left. \left(\Att'\setminus(S_{\{ \arga \}}\times\Args)\right) \cup \left((S_{\{ \arga \}}\times\Args^{\arga})\cap\Att\right) \cap \Args^{\arga} \times \Args^{\arga}, \Supp^{\arga} \right) =\\
    &\left( \Args, \{ (\arga, \is'(\arga)), (\argb, \is(\argb)), (\argc, \is(\argc))\}, \Att'\cup\emptyset, \Supp'\cup\emptyset \right) =\\
    &\left( \Args, \{ (\arga, 2), (\argb, 1), (\argc, 5) \}, \{ (\arga, \argc) \}, \{ (\arga, \argb) \} \right).
\end{align*}
So $G^{\arga}$ is like $G$ but with $\arga$'s initial strength changed to $2$, as depicted in Figure~\ref{fig:intro:Ga} and discussed in Example~\ref{ex:intro}, replicated for convenience in Figure~\ref{fig:example:Ga}.
Thus, 
$\fs_{G^{\arga}}(\argb) = 3 = \fs_{G^{\arga}}(\argc)$. 
So, $\argb$ and $\argc$ are \nscons\ (when updating from $G$ to $G'$) and remain so after reversing everything but $\{ \arga \}$ back. 
Hence, $\{ \arga \}$ is a $\subset$-minimal SSI explanation, by Definition~\ref{def:inconsistency-explanations}.

Now, let us inspect why $\{ \arga \}$ is not a CSI explanation. Observe that reversing w.r.t.\ $\arga$ yields
\begin{align*}
    &G^{*} \coloneqq G_{\gets G'}(\{ \arga \}) = (\Args^{*}, \is^{*}, \Att^*, \Supp^*)  = \\
    &\left( (\Args' \cup \{ \arga \}) \setminus (\{ \arga \} \setminus \Args), \is^{*}, \right. \\ &{} \quad 
    \left.\left(\Att'\setminus(\{ \arga \}\times\Args)\right) \cup \left((\{ \arga \}\times\Args^*)\cap\Att\right) \cap \Args^{*} \times \Args^{*}, \Supp^{*} \right) =\\
    &\left( \Args', \{ (\arga, \is(\arga)), (\argb, \is'(\argb)), (\argc, \is'(\argc)), (\argd, \is'(\argd)), (\arge, \is'(\arge)) \}, \Att', \Supp' \right) =\\
    &\left( \Args', \{ (\arga, 1), (\argb, 1), (\argc, 5), (\argd, 1), (\arge, 3) \}, \Att', \Supp' \right).
\end{align*}
So $G^{*}$ is like $G'$ but with $\arga$'s initial strength unchanged from $1$ (depicted in Figure~\ref{fig:example:G*}), thus giving  
$\fs_{G^{*}}(\argb) = 1 = \fs_{G^{*}}(\argc)$. 
That is, $\argb$ and $\argc$ do \textbf{not} become \scons\ after reversing $\{ \arga \}$ 
(i.e.\ $\nscon{\argb}{\argc}[G][G_{\gets G'}(\{ \arga \})]$), 
whence $\{ \arga \}$ is \textbf{not} a CSI explanation. 
Intuitively, even though the change to $\arga$ is minimally sufficient to bring about \nsconsy\ of $\argb$ and $\argc$ when updating from $G$ to $G'$, it is not a minimal reversal that would restore \sconsy\ back.

Now let us likewise consider $\{ \arge \}$. 
For $S_{\{ \arge \}} \coloneqq (\Args \cup \Args') \setminus \{ \arge \} = \{ \arga, \argb, \argc, \argd \}$ we find
$\left( \Args' \cup S_{\{ \arge \}} \right) \setminus \left( S_{\{ \arge \}} \setminus \Args \right) =
\{ \arga, \argb, \argc, \argd, \arge \} \setminus \left( \{ \arga, \argb, \argc, \argd \} \setminus \{ \arga, \argb, \argc \} \right) = \{ \arga, \argb, \argc, \argd, \arge \} \setminus \{ \argd \} = \{ \arga, \argb, \argc, \arge \}$.
It follows that reversing w.r.t.\ all arguments except $\arge$ yields
\begin{align*}
    &G^{\arge} \coloneqq G_{\gets G'}(S_{\{ \arge \}}) =\\
    &\left( \{ \arga, \argb, \argc, \arge \}, \{ (\arga, \is(\arga)), (\argb, \is(\argb)), (\argc, \is(\argc)), (\arge, \is'(\arge)) \}, \{ (\arga, \argc), (\arge, \argc) \}, \{ (\arga, \argb) \} \right) =\\
    &\left( \{ \arga, \argb, \argc, \arge \}, \{ (\arga, 1), (\argb, 1), (\argc, 5), (\arge, 3) \}, \{ (\arga, \argc), (\arge, \argc) \}, \{ (\arga, \argb) \} \right).
\end{align*}
So $G^{\arge}$ is $G$ with $\arge$ and the attack $(\arge, \argc)$ added (as depicted in Figure~\ref{fig:intro:Ge} and discussed in Example~\ref{ex:intro}, replicated for convenience in Figure~\ref{fig:example:Ge}), thus giving  
$\fs_{G^{\arge}}(\argb) = 2$ and $\fs_{G^{\arge}}(\argc) = 1$. 
That is, $\argb$ and $\argc$ remain \nscons\ after reversing everything but $\{ \arge \}$ back, and so $\{ \arge \}$ is a $\subset$-minimal SSI explanation.

Further, reversing w.r.t.\ $\arge$ yields
\begin{align*}
    &{} G^{**} \coloneqq G_{\gets G'}(\{ \arge \}) = (\Args^{**}, \is^{**}, \Att^{**}, \Supp^{**})=\\
    &{}( (\Args' \cup \{ \arge \}) \setminus (\{ \arge \} \setminus \Args), \is^{**}, \\ &{} \quad \left(\Att'\setminus(\{ \arge \}\times\Args)\right) \cup \left((\{ \arge \}\times\Args^{**})\cap\Att\right) \cap \Args^{**} \times \Args^{**}, \Supp^{**}) =\\
    &{}\left( \{ \arga, \argb, \argc, \argd \}, \{ (\arga, \is'(\arga)), (\argb, \is'(\argb)), (\argc, \is'(\argc)), (\argd, \is'(\argd)) \}, \{ (\arga, \argc), (\argd, \arga) \}, \{ (\arga, \argb) \} \right).
\end{align*}
$G^{**}$ is thus like $G'$ but without $\arge$ (depicted in Figure~\ref{fig:example:G**}), giving  
$\fs_{G^{**}}(\argb) = 2$ and $\fs_{G^{**}}(\argc) = 4$. 
So $\argb$ and $\argc$ \textbf{do} become \scons\ after reversing $\{ \arge \}$, 
whence $\{ \arge \}$ \textbf{is} a CSI explanation. 
Clearly, reversing w.r.t.\ $\emptyset$ yields $G'$, so that $\emptyset$ is not a CSI explanation, and hence $\{ \arge \}$ is a $\subset$-minimal CSI explanation.
Intuitively, the change to $\arge$ is both sufficient (indeed, minimally so) to bring about \nsconsy\ of $\argb$ and $\argc$ when updating from $G$ to $G'$, and it is a minimal reversal that would restore \sconsy\ back.

Now, recall that in Example~\ref{ex:intro} we already discussed that adding only $\argd$ to $G$ leaves the final strengths of $\argb$ and $\argc$ unchanged from their initial strengths, so that $\{ \argd \}$ is not an SSI explanation. It thus cannot be a CSI explanation, either. 
So, $\{ \arge \}$ is the only $\subset$-minimal CSI explanation. 

Finally, regarding NSI explanations, we have as well observed in Example~\ref{ex:intro} that while sufficient, neither $\{ \arga \}$ nor $\{ \arge \}$ is necessary: indeed, neither meets all other SSI explanations. 
However, $\{ \arga, \arge \}$ is both an SSI explanation of $\argb$ and $\argc$ (updating $G$ with only changes to $\arga$ and $\arge$ causes final strengths of $\argb$ and $\argc$ to become $3$ and $0$, respectively)
and clearly meets every other SSI explanation (because every other contains either $\arga$ or $\arge$). 
Thus, $\{\arga, \arge\} $ is a $\subset$-minimal NSI explanation.
Intuitively, a change to at least one of these two arguments is necessary to cause \nsconsy\ of $\argb$ and $\argc$ when updating from $G$ to $G'$.
Note that $\{ \arga, \argd, \arge \}$ would satisfy the first two conditions of an NSI explanation, but not the minimality one: it clearly is an SSI explanation and meets every other SSI explanation, but $\argd$ is not needed to cause the \nsconsy. In other words, necessary explanations should not contain anything that is never needed to explain \nsconsy. This illustrates why we define necessary explanations to be $\subset$-minimal, in contrast to both minimal and non-minimal versions for sufficient and counterfactual explanations.
\end{example}

Let us highlight that in our approach, all changes made to a particular argument are considered atomic.
E.g., in case several outgoing attacks are added and the argument's initial strength is changed, we cannot attribute the change in relative final strength to one particular change of the argument, but merely to the sum of changes.
We claim that in this regard, our approach trades off granularity for simplicity.
Breaking down the sum of changes to each argument would require substantial additional sophistication in the reversal definition and the overhead of this may outweigh the benefit.
As an analogy, the reader may consider explanations of causes of undesired effects to a running software system that are provided as pointers to lines of code that have been changed in the update introducing the breaking change.
Here, it is intuitively reasonable to merely focus on the line of code and use a source control system-provided visualisation of the differences between current and previous state that get a reasonably concise overview of the breaking change.

Another concern regarding argument-level atomicity are explanations generated for updates that change existing arguments instead of adding/removing them.
First, let us highlight that such changes can be considered to be less intuitive 
and not necessarily well-aligned with the philosophy of formal argumentation, where the dynamics are intuitively supposed to be modelled by (most commonly) adding or removing arguments to the graph. Here, a particularly popular concept is the notion of a normal expansion~\cite{baumann2010expanding} that models the addition of arguments to an (abstract) argumentation graph, leaving existing nodes and their relationships with each other unchanged.
Of course, limiting update operations to the addition of new arguments and to attacks/supports from and to these arguments may also make sense in quantitative bipolar argumentation.
Still, we maintain that our explanations can be useful even in scenarios where only initially existing arguments are changed.
Below, we describe and discuss an example that provides evidence for this claim.

\begin{example}
\label{ex:reviewer}
Consider $G = \QBAF = (\{\arga, \argb, \argc \}, \is, \emptyset, \emptyset)$ with $\is(\arga) = 2, \is(\argb) = 2, \is(\argc) = 1$
and two consecutive updates resulting in 
$G' = (\Args, \is, \Att', \Supp)$ where $\arga$ attacks $\argb$,
and $G'' = (\Args, \is, \Att'', \Supp)$ where $\argb$ attacks $\arga$ instead: see Figures \ref{fig:reviewer:G}, \ref{fig:reviewer:G'} and \ref{fig:reviewer:G''}. 
Let $\argb$ and $\argc$ be the topic arguments. 
Note that $\argb$ is stronger than $\argc$ to begin with.
However, $\argb$ becomes weaker when the attack from $\arga$ to $\argb$ is added (so that $\nscon{\argb}{\argc}[G][G']$) and becomes stronger again when that attack is removed and instead the attack from $\argb$ to $\arga$ is added (so that $\nscon{\argb}{\argc}[G'][G'']$). 
Let us see what the $\subset$-minimal SSI explanations are; for simplicity we consider only these explanations.

\begin{figure}[!ht]  
    \subfloat[$G$]{
    \label{fig:reviewer:G}
    \begin{tikzpicture}[scale=0.55,
        unanode/.style={scale=0.55,circle, draw=black!75, minimum size=10mm, font=\bfseries},
        ]
        \node[unanode]    (a)    at(1,0)  {\argnode{\arga}{2}{2}};
        \node[unanode]  (b)    at(0,3)  {\argnode{\argb}{2}{2}};
        \node[unanode] (c) at (2,3) {\argnode{\argc}{1}{1}};
    \end{tikzpicture}
    }
    \hspace{5pt}
    \centering
    \subfloat[$G'$]{
    \label{fig:reviewer:G'}
    \begin{tikzpicture}[scale=0.55,
        unanode/.style={scale=0.55,circle, draw=black!75, minimum size=10mm, font=\bfseries},
        ]
        \node[unanode]    (a)    at(1,0)  {\argnode{\arga}{2}{2}};
        \node[unanode]  (b)    at(0,3)  {\argnode{\argb}{2}{0}};
        \node[unanode] (c) at (2,3) {\argnode{\argc}{1}{1}};
        \path [->, line width=0.5mm]  (a) edge node[left] {-} (b);
    \end{tikzpicture} 
    }
    \hspace{5pt}
    \centering
    \subfloat[$G''$]{
    \label{fig:reviewer:G''}
    \begin{tikzpicture}[scale=0.55,
        unanode/.style={scale=0.55,circle, draw=black!75, minimum size=10mm, font=\bfseries},
        ]
        \node[unanode]    (a)    at(1,0)  {\argnode{\arga}{2}{0}};
        \node[unanode]  (b)    at(0,3)  {\argnode{\argb}{2}{2}};
        \node[unanode] (c) at (2,3) {\argnode{\argc}{1}{1}};
        \path [->, line width=0.5mm]  (b) edge node[left] {-} (a);
    \end{tikzpicture} 
    }
    \hspace{5pt}
    \centering
    \subfloat[$G^*$]{
    \label{fig:reviewer:G*}
    \begin{tikzpicture}[scale=0.55,
        unanode/.style={scale=0.55,circle, draw=black!75, minimum size=10mm, font=\bfseries},
        ]
        \node[unanode]    (a)    at(1,0)  {\argnode{\arga}{2}{\bot}};
        \node[unanode]  (b)    at(0,3)  {\argnode{\argb}{2}{\bot}};
        \node[unanode] (c) at (2,3) {\argnode{\argc}{1}{1}};
        \path [<->, line width=0.5mm]  (a) edge node[left] {-} (b);
    \end{tikzpicture} 
    }
\caption{QBAF updates affecting only relationships among arguments.}
\label{fig:reviewer}
\end{figure}
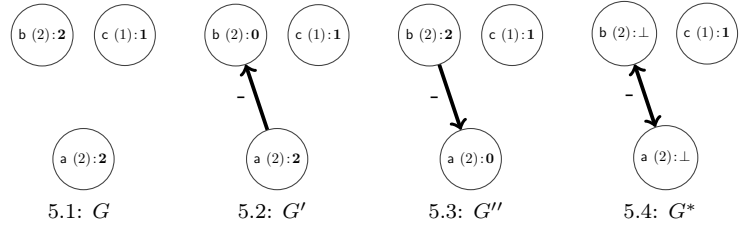

First, consider $\{ \arga \}$. The reversal of $G'$ to $G$ w.r.t.\ everything but $\arga$ yields 
$G_{\gets G'}(\Args \setminus \{\arga\}) = G_{\gets G'}(\{\argb, \argc\}) = G'$. 
We thus have $\nscon{\argb}{\argc}[G][G_{\gets G'}(\Args \setminus \{\arga\})]$, i.e.\ $\argb$ and $\argc$ remain \nscons\ after reversing everything but $\arga$ back in the first update. 
Thus, $\{ \arga \}$ is a $\subset$-minimal SSI explanation of $\argb$ and $\argc$ w.r.t.\ $G$ and $G'$: $\{ \arga \} \in SX_{\subset-min}(\nscon{\argb}{\argc}[G][G'])$. 
It intuitively allows us to indicate that the addition of the attack from $\arga$ is what explains \nsconsy.

Further, the reversal of $G''$ to $G'$ w.r.t.\ everything but $\arga$ yields 
$G'_{\gets G''}(\Args \setminus \{\arga\}) = G$. 
So $\argb$ and $\argc$ remain \nscons\ after reversing everything but $\arga$ back in the second update, too: $\nscon{\argb}{\argc}[G'][G'_{\gets G''}(\Args \setminus \{\arga\})]$. 
Hence, $\{ \arga \} \in SX_{\subset-min}(\nscon{\argb}{\argc}[G'][G''])$, which allows us to indicate that the removal of the attack from $\arga$ explains \nsconsy.

Now, we can check that $\{ \argb \}$ is not an SSI explanation of $\argb$ and $\argc$ in the first update. 
Indeed, the reversal of $G'$ to $G$ w.r.t.\ everything but $\argb$ yields $G$ itself: $G_{\gets G'}(\Args \setminus \{\argb\}) = G$. 
Thus, $\argb$ and $\argc$ become \scons: $\scon{\argb}{\argc}[G][G_{\gets G'}(\Args \setminus \{\argb\})]$ -- i.e., intuitively, the (absence of) changes to $\argb$ is clearly not sufficient to explain it becoming weaker than $\argc$ after the first update.

On the other hand, $\{ \argb \}$ is a $\subset$-minimal SSI explanation of $\argb$ and $\argc$ in the second update. 
Indeed, the reversal of $G''$ to $G'$ w.r.t.\ everything but $\argb$ yields 
$G'_{\gets G''}(\Args \setminus \{\argb\}) = G^*$ where $\arga$ and $\argb$ attack one another -- see Figure~\ref{fig:reviewer:G*}. 
There then, the final strengths of $\argb$ and $\argc$ are undefined: 
$\fs_{G^*}(\argb) = \bot = \fs_{G^*}(\argc)$. 
Consequently, $\argb$ and $\argc$ are incomparable (Definition~\ref{def:comparability}) in $G^*$: $\ncomp{\argb}{\argc}[G^*]$. 
Thus, since $\argb$ and $\argc$ are comparable in $G'$ (i.e.\ $\comp{\argb}{\argc}[G']$), we have that $\nscon{\argb}{\argc}[G'][G^*]$ (to the last bullet of Definition~\ref{def:po-consistency}) -- i.e.\ $\argb$ and $\argc$ remain \nscons\ after reversing everything but $\argb$ back in the second update. 
Therefore, $\{ \argb \} \in SX_{\subset-min}(\nscon{\argb}{\argc}[G'][G''])$. 
Intuitively, this explanation allows us to indicate that the gain of attack from $\argb$ also explains \nsconsy.
\end{example}

The previous example highlights another challenge with our strength inconsistency explanations:
considering our definition of change inconsistency, reversing to a QBAF in which one or both of the topic arguments have undefined final strengths typically retains strength inconsistency.
This can have undesirable consequences, in particular given an argumentation semantics that is only defined for acyclic QBAFs.
In the previous example, cycles leading to undefined final strengths do not cause unintuitive explanations.
However, in some cases, issues that require a dedicated fix may emerge, which we demonstrate in the example below.

\begin{example}\label{ex:cycles}
Consider the QBAFs $G = \QBAF = (\{\arga, \argb, \argc, \argd \}, \is, \{(\arga, \argd), (\argd. \argb), \\ (\argd. \argc)\}, \emptyset)$, where $\is(\arga) = \is(\argc) = \is(\argd) = 1$ and $\is(\argb) = 2$, and the update $G'$ where the attack $(\arga, \argd)$ is inverted to  $(\argd, \arga)$ and the initial strength of $\argc$ is changed to $3$ (Figures~\ref{fig:cycles:G} and \ref{fig:cycles:G'}), with topic arguments $\argb$ and $\argc$.
We have $\sigma_{G}(\argb) = 2 > \sigma_{G}(\argc) = 1$ but $\sigma_{G'}(\argb) = 2 < \sigma_{G}(\argc) = 3$, so that
$\nscon{\argb}{\argc}[G][G']$.
The intuitive explanation is that $\argc$ was changed: its initial strength has increased from $1$ to $3$.
Indeed, it holds that $\{\argc\} \in SX(\nscon{\argb}{\argc}[G][G'])$ (again, we focus on $\subset$-minimal SSI explanations).
However, it \emph{also} holds that $\{\argd\} \in SX(\nscon{\argb}{\argc}[G][G'])$: the reversal of $G'$ to $G$ w.r.t.\ everything but $\argd$ yields $G_{\gets G'}(\Args \setminus \{\argd\}) = G^*$ (Figure~\ref{fig:cycles:G*}) and $\sigma_{G^*}(\argb) = \sigma_{G^*}(\argd) = \bot$.
This is undesirable since we would not expect any argument in $\{\arga, \argd\}$ to explain a change in the relative final strengths of our topic arguments, as $\arga$ and $\argd$ affect $\argb$ and $\argc$ in the same way.
We can colloquially say that this problem is caused by the fact that while we only want to determine final strengths for acyclic QBAFs, the reversal forces us to do so for cyclic ones.
We can solve this problem in two ways:
\begin{description}
    \item[Option 1.] We can only consider reversals to a particular class of argumentation frameworks, in this case acyclic QBAFs.
    \item[Option 2.] We can adjust the definition of strength consistency so that it ignores cases in which the final strengths of topic arguments are undefined.
\end{description}
Let us claim that both adjustments can be achieved in relatively straightforward ways.
However, we consider Option 1 as more elegant (and hence use it in our implementation, see Section~\ref{implementation}): it allows us to still deal with topic arguments having undefined final strengths, in cases where this may be relevant.
Because the definitions of our strength inconsistency explanations are dependent on a class of QBAFs, they support this option.
Indeed, reversing to QBAFs that are not in the class of ``solvable'' argumentation frameworks can be generally considered confusing and counter-intuitive.

    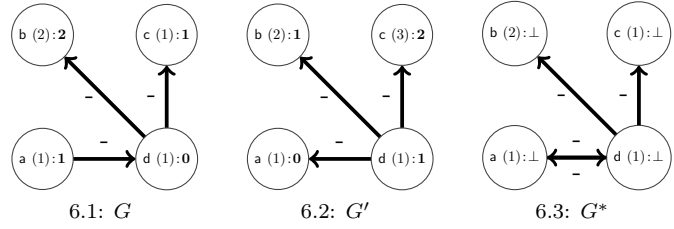
\begin{figure}[!ht]  
    \subfloat[$G$]{
    \label{fig:cycles:G}
    \begin{tikzpicture}[scale=0.55,
        unanode/.style={scale=0.55,circle, draw=black!75, minimum size=10mm, font=\bfseries},
        ]
        \node[unanode]    (a)    at(0,0)  {\argnode{\arga}{1}{1}};
        \node[unanode]    (d)    at(3,0)  {\argnode{\argd}{1}{0}};
        \node[unanode]  (b)    at(0,3)  {\argnode{\argb}{2}{2}};
        \node[unanode] (c) at (3,3) {\argnode{\argc}{1}{1}};
        \path [->, line width=0.5mm]  (a) edge node[left, above] {-} (d);
        \path [->, line width=0.5mm]  (d) edge node[left] {-} (b);
        \path [->, line width=0.5mm]  (d) edge node[left] {-} (c);
    \end{tikzpicture}
    }
    \hspace{5pt}
    \centering
    \subfloat[$G'$]{
    \label{fig:cycles:G'}
    \begin{tikzpicture}[scale=0.55,
        unanode/.style={scale=0.55,circle, draw=black!75, minimum size=10mm, font=\bfseries},
        ]
        \node[unanode]    (a)    at(0,0)  {\argnode{\arga}{1}{0}};
        \node[unanode]    (d)    at(3,0)  {\argnode{\argd}{1}{1}};
        \node[unanode]  (b)    at(0,3)  {\argnode{\argb}{2}{1}};
        \node[unanode] (c) at (3,3) {\argnode{\argc}{3}{2}};
        \path [->, line width=0.5mm]  (d) edge node[left, above] {-} (a);
        \path [->, line width=0.5mm]  (d) edge node[left] {-} (b);
        \path [->, line width=0.5mm]  (d) edge node[left] {-} (c);
    \end{tikzpicture} 
    }
    \hspace{5pt}
    \centering
    \subfloat[$G^*$]{
    \label{fig:cycles:G*}
    \begin{tikzpicture}[scale=0.55,
        unanode/.style={scale=0.55,circle, draw=black!75, minimum size=10mm, font=\bfseries},
        ]
        \node[unanode]    (a)    at(0,0)  {\argnode{\arga}{1}{\bot}};
        \node[unanode]    (d)    at(3,0)  {\argnode{\argd}{1}{\bot}};
        \node[unanode]  (b)    at(0,3)  {\argnode{\argb}{2}{\bot}};
        \node[unanode] (c) at (3,3) {\argnode{\argc}{1}{\bot}};
        \path [->, line width=0.5mm]  (d) edge node[left, above] {-} (a);
        \path [->, line width=0.5mm]  (a) edge node[left, below] {-} (d);
        \path [->, line width=0.5mm]  (d) edge node[left] {-} (b);
        \path [->, line width=0.5mm]  (d) edge node[left] {-} (c);
    \end{tikzpicture} 
    }
\caption{Cycles causing counter-intuitive explanations.}
\label{fig:cycles}
\end{figure}
\end{example}

Note that $\subset$-minimal \sconsy\ explanations are not necessarily unique: we saw this in the case of SSI explanations in Example \ref{ex:inconsistency-explanations}, but it may happen for both CSI and NSI explanations too, 
where several $\subset$-minimal explanations may be seen as relying on common ``crucial yet insufficient'' arguments.
Below, we provide an example illustrating this phenomenon.
\begin{example}\label{ex:nsi-not-unique}
Consider $G = \QBAF = (\{\arga, \argb\}, \is, \emptyset, \emptyset)$ and $G' = \QBAFF = (\{\arga, \argb, \argc, \argd, \arge\}, \is', \{(\arge, \argb)\}, \{(\argc, \arga), (\argd, \arga)\})$, both depicted in Figure~\ref{fig:necessary}, where:
\begin{itemize}
    \item $\is(\arga) = \is'(\arga) = 1$;
    \item $\is(\argb) = \is'(\argb) = 6$;
    \item $\is'(\argc) = \is'(\argd) = 2$;
    \item $\is'(\arge) = 4$.
\end{itemize}
Our topic arguments are $\arga$ and $\argb$.
We have $\fs_{G}(\arga) = 1 < \fs_{G}(\argb) = 6$, but $\fs_{G'}(\arga) = 5 > \fs_{G'}(\argb) = 2$; consequently, $\nscon{\arga}{\argb}[G][G']$.
When comparing $G$ and $G'$, we observe that the \nsconsy\ can be achieved by the minimal additions of either arguments $\{\argc, \arge\}$ (alongside support $(\argc, \arga)$ and attack $(\arge, \argb)$)  
or arguments $\{\argd, \arge\}$ (alongside support $(\argd, \arga)$ and  attack $(\arge, \argb)$); 
\emph{i.e.}, $SX_{\subset-min}(\nscon{\arga}{\argb}[G][G']) = \{\{\argc, \arge\}, \{\argd, \arge\}\}$.
Intuitively, because $\arge$ is crucial for \nsconsy\ and entailed by every sufficient \nsconsy\ explanation, it holds that $NX_{\subset-min}(\nscon{\arga}{\argb}[G][G']) = SX_{\subset-min}(\nscon{\arga}{\argb}[G][G'])$.
Colloquially, we may say that $\arge$ is a necessary argument (to change), but it is in itself not sufficient, so we cannot consider it an explanation: in addition, either $\argc$ or $\argd$ is required for a necessary explanation. 

In addition, note that even though $\arge$ is sufficient to revert in order to restore \sconsy\ (keeping both $\argc$ and $\argd$ without $\arge$ yields the final strengths of $\arga$ and $\argb$ equal to $5$ and $6$, respectively), $\{ \arge \}$ is \emph{not} a CSI explanation, precisely because it is not an SSI explanation. Instead, we have $CX_{\subset-min}(\nscon{\arga}{\argb}[G][G']) = SX_{\subset-min}(\nscon{\arga}{\argb}[G][G']) = \{\{\argc, \arge\}, \{\argd, \arge\}\}$ too. So $\subset$-minimal CSI explanations are not unique either.

\begin{figure}[!ht]  
    \subfloat[$G$]{
    \label{fig:NSI-example:G}
    \begin{tikzpicture}[scale=0.55,
        noanode/.style={scale=0.55,dashed, circle, draw=black!60, minimum size=10mm, font=\bfseries},
        unanode/.style={scale=0.55,circle, draw=black!75, minimum size=10mm, font=\bfseries},
        invnode/.style={scale=0.55,circle, draw=white!0, minimum size=0mm, font=\bfseries},
        anode/.style={scale=0.55,circle, fill=lightgray, draw=black!60, minimum size=10mm, font=\bfseries},
        ]
        \node[unanode]    (a)    at(0,0)  {\argnode{\arga}{1}{1}};
        \node[unanode]  (b)    at(3,0)  {\argnode{\argb}{6}{6}};
    \end{tikzpicture}
    }
    \hspace{10pt}
    \centering
    \subfloat[$G'$]{
    \label{fig:NSI-example:G'}
    \begin{tikzpicture}[scale=0.55,
        noanode/.style={scale=0.55,dashed, circle, draw=black!60, minimum size=10mm, font=\bfseries},
        unanode/.style={scale=0.55,circle, draw=black!75, minimum size=10mm, font=\bfseries},
        xunanode/.style={scale=0.55,circle, draw=black!75, minimum size=10mm, font=\bfseries, line width=2.5pt},
        invnode/.style={scale=0.55,circle, draw=white!0, minimum size=0mm, font=\bfseries},
        anode/.style={scale=0.55,circle, fill=lightgray, draw=black!60, minimum size=10mm, font=\bfseries},
        xanode/.style={scale=0.55, circle, fill=lightgray, draw=black!60, minimum size=10mm, font=\bfseries,line width=2.5pt},
        xnoanode/.style={scale=0.55, circle, dashed, draw=black!60, minimum size=10mm, font=\bfseries,line width=2.5pt}
        ]
        \node[unanode]    (a)    at(0,0)  {\argnode{\arga}{1}{5}};
        \node[unanode]  (b)    at(3,0)  {\argnode{\argb}{6}{2}};
        \node[xunanode]    (c)    at(-1,3)  {\argnode{\argc}{2}{2}};
        \node[xunanode]    (d)    at(1,3)  {\argnode{\argd}{2}{2}};
        \node[xunanode]    (e)    at(3,3)  {\argnode{\arge}{4}{4}};
        
        \path [->, line width=0.5mm]  (c) edge node[left] {+} (a);
        \path [->, line width=0.5mm]  (d) edge node[left] {+} (a);
        \path [->, line width=0.5mm]  (e) edge node[left] {-} (b);
    \end{tikzpicture} 
    }
\caption{$\subset$-minimal NSI and CSI explanations are not necessarily unique (Example~\ref{ex:inconsistency-explanations}).}
\label{fig:necessary}
\end{figure}
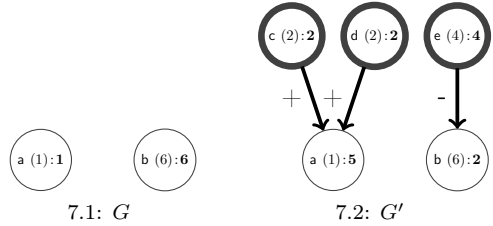
\end{example}

Before turning to formal analysis of some properties of our explanations, let us highlight that our approach can straightforwardly be adjusted to explaining the change in the final strength of a single topic argument.  For this, we merely need to introduce an unconnected ``dummy'' argument that serves as a reference point for the (changing) strength of a single topic argument. Let us illustrate this approach with another example.
 \begin{example}
 Assume we want to find out why the strength of $\argc$ decreases to equal or less than $1$, considering $G$ and $G'$ in Figure~\ref{fig:intro}/Example~\ref{ex:inconsistency-explanations}. To achieve this, we can introduce the dummy argument $\argf$ with initial strength $1$ and add it to $G$ and $G'$, without any incoming or outgoing attacks or supports (Figure~\ref{fig:intro-2}). Here, we can say that the (only) $\subset$-minimal sufficient, counterfactual, and/or necessary explanation of the change of $\argc$ to a final strength equal or less than $1$ is $\{\arge\}$. We can simply infer this from the fact that the only $\subset$-minimal sufficient/counterfactual/necessary strength inconsistency explanation of $\argc$ and $\argf$ with respect to $G$ and $G'$ is $\{\arge\}$.
 \end{example}
    \begin{figure}[!ht]
    \subfloat[$G$]{
    \label{fig:intro-2:G}
    \begin{tikzpicture}[scale=0.55,
        noanode/.style={scale=0.55,dashed, circle, draw=black!60, minimum size=10mm, font=\bfseries},
        unanode/.style={scale=0.55,circle, draw=black!75, minimum size=10mm, font=\bfseries},
        dunanode/.style={scale=0.55,circle, text=gray, draw=gray!75, minimum size=10mm, font=\bfseries},
        invnode/.style={scale=0.55,circle, draw=white!0, minimum size=0mm, font=\bfseries},
        anode/.style={scale=0.55,circle, fill=lightgray, draw=black!60, minimum size=10mm, font=\bfseries},
        ]
        \node[unanode]    (a)    at(0,2)  {\argnode{\arga}{1}{1}};
        \node[unanode]  (b)    at(0,0)  {\argnode{\argb}{1}{2}};
        \node[unanode]    (c)    at(2,0)  {\argnode{\argc}{5}{4}};
        \node[dunanode]    (f)    at(4,0)  {\argnode{\argf}{1}{1}};
        \path [->, line width=0.5mm]  (a) edge node[left] {+} (b);
        \path [->, line width=0.5mm]  (a) edge node[left] {-} (c);
    \end{tikzpicture}
    }
    \hspace{10pt}
    \centering
    \subfloat[$G'$]{
    \label{fig:intro-2:G'}
    \begin{tikzpicture}[scale=0.55,
        noanode/.style={scale=0.55,dashed, circle, draw=black!60, minimum size=10mm, font=\bfseries},
        unanode/.style={scale=0.55,circle, draw=black!75, minimum size=10mm, font=\bfseries},
        dunanode/.style={scale=0.55,circle, text=gray, draw=gray!75, minimum size=10mm, font=\bfseries},
        xunanode/.style={scale=0.55,circle, draw=black!75, minimum size=10mm, font=\bfseries, line width=2.5pt},
        invnode/.style={scale=0.55,circle, draw=white!0, minimum size=0mm, font=\bfseries},
        anode/.style={scale=0.55,circle, fill=lightgray, draw=black!60, minimum size=10mm, font=\bfseries},
        xanode/.style={scale=0.55, circle, fill=lightgray, draw=black!60, minimum size=10mm, font=\bfseries,line width=2.5pt},
        xnoanode/.style={scale=0.55, circle, dashed, draw=black!60, minimum size=10mm, font=\bfseries,line width=2.5pt}
        ]
        \node[unanode]    (a)    at(0,2)  {\argnode{\arga}{2}{1}};
        \node[unanode]  (b)    at(0,0)  {\argnode{\argb}{1}{2}};
        \node[unanode]    (c)    at(2,0)  {\argnode{\argc}{5}{0}};
        \node[unanode]    (d)    at(1,4)  {\argnode{\argd}{1}{1}};
        \node[xunanode]    (e)    at(3,2)  {\argnode{\arge}{3}{4}};
        \node[dunanode]    (f)    at(4,0)  {\argnode{\argf}{1}{1}};
        
        \path [->, line width=0.5mm]  (a) edge node[left] {+} (b);
        \path [->, line width=0.5mm]  (a) edge node[left] {-} (c);
        \path [->, line width=0.5mm]  (e) edge node[left] {-} (c);
        \path [->, line width=0.5mm]  (d) edge node[left] {-} (a);
        \path [->, line width=0.5mm]  (d) edge node[right] {+} (e);
    \end{tikzpicture}
    } 
\caption{Explaining the change of the final strength of the single topic argument $\argc$ when updating from $G$ to $G'$ from Figure~\ref{fig:intro}, with the introduction of a dummy topic argument $\argf$ (grey) to $G$ and $G'$.}
\label{fig:intro-2}
\end{figure}
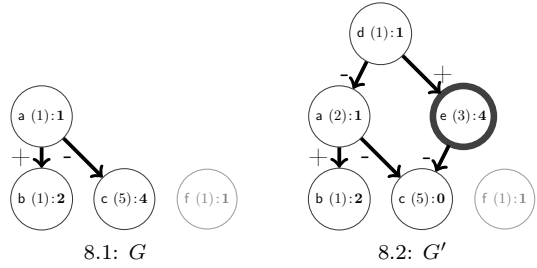

\section{Theoretical Analysis}
\label{analysis}

In this section, we let $G = \QBAF$ and $G' = \QBAFF$ be QBAFs, $\argx, \argy \in \Args \cap \Args'$, and $\fs$ be a strength function. 
We show that both minimal sufficient, counterfactual, and necessary explanations are sound and complete: either we have \nsconsy\ and at least one non-empty set (and no empty set) of explanation arguments, or we have \sconsy\ explained by (and only by) the empty set.

First, if two arguments are \scons, then there is no \nsconsy\ to explain and the only explanation is the empty set ($SX$-soundness). 

\begin{proposition}[$SX$-Soundness]\label{prop:ssi-a}
If $\scon{\argx}{\argy}$, then $SX(\nscon{\argx}{\argy}) = \{\emptyset\}$.
\end{proposition}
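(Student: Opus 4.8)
The plan is to unfold Definition~\ref{def:inconsistency-explanations} directly and argue by cases according to whether a candidate set $S$ is empty. The whole proof hinges on a single structural observation: the two disjuncts that define an SSI explanation are mutually exclusive under our hypothesis. The first disjunct requires $S = \emptyset$, whereas the very first conjunct of the second disjunct is $\nscon{\argx}{\argy}[G][G']$, which is by definition the negation of the assumed $\scon{\argx}{\argy}[G][G']$. So at most one disjunct can ever fire, and which one does is already fixed by the hypothesis.

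First I would establish membership, namely $\emptyset \in SX(\nscon{\argx}{\argy})$. This is immediate from the first disjunct: instantiating $S = \emptyset$, the condition ``$S = \emptyset$ and $\scon{\argx}{\argy}[G][G']$'' holds, the left conjunct trivially and the right conjunct by hypothesis. Hence $\emptyset$ is an SSI explanation.

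Next I would show that no non-empty set qualifies. Fix an arbitrary $S \neq \emptyset$. The first disjunct fails because it demands $S = \emptyset$. The second disjunct fails at its leading conjunct $\nscon{\argx}{\argy}[G][G']$, which contradicts the assumed $\scon{\argx}{\argy}[G][G']$. Therefore $S \notin SX(\nscon{\argx}{\argy})$, and crucially the reversal condition and the search-space membership condition $G_{\leftarrow G'}((\Args \cup \Args') \setminus S) \in {\cal G}$ never need to be inspected, since the disjunction is already refuted.

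Combining the two steps yields $SX(\nscon{\argx}{\argy}) = \{\emptyset\}$. I do not anticipate any genuine obstacle here: the statement is a direct consequence of the mutual exclusivity of the two disjuncts, and no reasoning about the reversal operator $G_{\leftarrow G'}(\cdot)$ or about the class ${\cal G}$ is required. The only point worth flagging is the notational convention stated immediately after Definition~\ref{def:inconsistency-explanations}, that when the subscripts are dropped ${\cal G}$ defaults to the class of all QBAFs; this plays no active role, precisely because the second disjunct is excluded before its $\cal G$-membership clause is ever reached.
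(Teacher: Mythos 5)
Your proof is correct and follows essentially the same route as the paper's: instantiate the first disjunct of Definition~\ref{def:inconsistency-explanations} to get $\emptyset \in SX(\nscon{\argx}{\argy})$, then observe that no non-empty $S$ can satisfy either disjunct since the second one begins with $\nscon{\argx}{\argy}[G][G']$, contradicting the hypothesis. The paper states both steps more tersely (``directly by Definition'' and ``by definition, precisely because $\scon{\argx}{\argy}$''), while you make the mutual exclusivity of the disjuncts explicit, but the argument is identical.
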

\begin{proof}
Let $\scon{\argx}{\argy}$.
Then $\emptyset$ is an SSI explanation directly by Definition~\ref{def:inconsistency-explanations}.
On the other hand, no $S \neq \emptyset$ can be an SSI explanation, by definition, precisely because $\scon{\argx}{\argy}$. So $SX(\nscon{\argx}{\argy}) \subseteq \{ \emptyset \}$. 
Hence, $SX(\nscon{\argx}{\argy}) = \{\emptyset\}$ as required. 
\end{proof}

If arguments are \nscons\ though, then there exists an explanation, but no empty explanation ($SX$-completeness). 

\begin{proposition}[$SX$-Completeness]\label{prop:ssi-b}
If $\nscon{\argx}{\argy}$, then $|SX(\nscon{\argx}{\argy})| \geq 1$ and $\emptyset \not \in SX(\nscon{\argx}{\argy})$.
\end{proposition}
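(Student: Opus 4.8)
The plan is to prove the two conjuncts separately, in each case exploiting the two extreme reversals: reverting nothing, $G_{\leftarrow G'}(\emptyset)$, which recovers $G'$, and reverting everything, $G_{\leftarrow G'}(\Args \cup \Args')$, which recovers $G$. Throughout I would use that, in the abbreviated notation $SX(\nscon{\argx}{\argy})$, the implicit class $\cal G$ is the class of all QBAFs, so the clause ``$\ldots \in {\cal G}$'' in Definition~\ref{def:inconsistency-explanations} is vacuously satisfied by any QBAF and can be dropped from consideration.

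For the first conjunct, $|SX(\nscon{\argx}{\argy})| \geq 1$, I would exhibit the full set $S = \Args \cup \Args'$ as a witness. Since $(\Args \cup \Args') \setminus (\Args \cup \Args') = \emptyset$, the relevant reversal is $G_{\leftarrow G'}(\emptyset)$. Unfolding Definition~\ref{def:qbaf-reversal} with $S = \emptyset$ gives $\Args^* = \Args'$, $\Att^* = \Att'$, $\Supp^* = \Supp'$, and $\tau^* = \tau'$ (the ``otherwise'' branch applies to every argument, as $\Args \cap \emptyset = \emptyset$), so $G_{\leftarrow G'}(\emptyset) = G'$. The second disjunct of the SSI definition then reads $\nscon{\argx}{\argy}[G][G']$ together with $\nscon{\argx}{\argy}[G][G']$, both of which hold by the hypothesis $\nscon{\argx}{\argy}$. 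Hence $\Args \cup \Args' \in SX(\nscon{\argx}{\argy})$ and the set is non-empty.

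For the second conjunct, $\emptyset \notin SX(\nscon{\argx}{\argy})$, I would show that neither disjunct of the SSI definition can hold for $S = \emptyset$. The first disjunct requires $\scon{\argx}{\argy}$, which directly contradicts the hypothesis $\nscon{\argx}{\argy}$. The second disjunct requires $\nscon{\argx}{\argy}[G][G_{\leftarrow G'}((\Args \cup \Args') \setminus \emptyset)]$, i.e.\ $\nscon{\argx}{\argy}[G][G_{\leftarrow G'}(\Args \cup \Args')]$. Unfolding Definition~\ref{def:qbaf-reversal} with $S = \Args \cup \Args'$ yields $\Args^* = \Args$, and one checks that the ``new'' relations surviving the first set-difference (those targeting $\Args' \setminus \Args$) are discarded by the final intersection with $\Args^* \times \Args^*$, while the ``old'' relations supplied by $(S \times \Args^*) \cap \Att$ and $(S \times \Args^*) \cap \Supp$ are reinstated, giving $\Att^* = \Att$ and $\Supp^* = \Supp$; moreover $\tau^* = \tau$ on $\Args$, since every argument of $\Args^* = \Args$ lies in $\Args \cap S$. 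Thus $G_{\leftarrow G'}(\Args \cup \Args') = G$, and the required condition becomes $\nscon{\argx}{\argy}[G][G]$. But strength consistency is reflexive: putting $G' = G$ in Definition~\ref{def:po-consistency} makes every clause trivially true, so $\scon{\argx}{\argy}[G][G]$ holds and $\nscon{\argx}{\argy}[G][G]$ fails. Both disjuncts therefore fail, and $\emptyset \notin SX(\nscon{\argx}{\argy})$.

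The only non-routine part is the bookkeeping for the two boundary reversals, especially $G_{\leftarrow G'}(\Args \cup \Args') = G$: one must verify that the set-theoretic expressions for $\Att^*$ and $\Supp^*$ in Definition~\ref{def:qbaf-reversal} really collapse to $\Att$ and $\Supp$ when $S$ is the whole universe of arguments. I expect this to be the main (though still mechanical) obstacle; the reflexivity of $\sim$ and the vacuity of the class condition under the default $\cal G$ are both immediate.
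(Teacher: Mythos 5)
Your proof is correct and follows essentially the same route as the paper's: both parts hinge on the boundary reversals $G_{\leftarrow G'}(\emptyset) = G'$ and $G_{\leftarrow G'}(\Args \cup \Args') = G$, together with the reflexivity of strength consistency. The only cosmetic difference is that you exhibit $\Args \cup \Args'$ as a direct witness for the first conjunct, whereas the paper wraps the same argument in a proof by contradiction; your explicit verification that the reversal expressions collapse to $G'$ and $G$ is a detail the paper simply asserts "by definition."
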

\begin{proof}
Let $\nscon{\argx}{\argy}[G][G']$.

\noindent \textbf{Proof of $|SX(\nscon{\argx}{\argy}[G][G'])| \geq 1$.} 

\noindent
By definition of an SSI explanation, since $\nscon{\argx}{\argy}[G][G']$, any $S \subseteq \Args \cup \Args'$ is an SSI explanation of $\argx$ and $\argy$ (w.r.t.\ $\fs$, $G$, and $G'$) iff $\nscon{\argx}{\argy}[G][G_{\leftarrow G'}((\Args \cup \Args')\setminus S)]$. 
Suppose for a contradiction that such a set $S$ does not exist: $\forall S \subseteq \Args \cup \Args'$, $\scon{\argx}{\argy}[G][G_{\leftarrow G'}((\Args \cup \Args')\setminus S)]$. 
In particular, for $\Args \cup \Args'$, we have $\scon{\argx}{\argy}[G][G_{\leftarrow G'}((\Args \cup \Args')\setminus (\Args \cup \Args'))]$. 
Since $G_{\leftarrow G'}((\Args \cup \Args')\setminus(\Args \cup \Args')) = G_{\leftarrow G'}(\emptyset) = G'$ by definition of QBAF reversal (Definition~\ref{def:qbaf-reversal}), it follows that $\scon{\argx}{\argy}[G][G']$, contradicting $\nscon{\argx}{\argy}[G][G']$. 
By contradiction, there is at least one $S \in SX(\nscon{\argx}{\argy}[G][G'])$.

\noindent \textbf{Proof of $\emptyset \not \in SX(\nscon{\argx}{\argy}[G][G'])$.}

\noindent
Suppose for a contradiction $\emptyset \in SX(\nscon{\argx}{\argy}[G][G'])$. 
Since $\nscon{\argx}{\argy}[G][G']$, we have $\nscon{\argx}{\argy}[G][G_{\leftarrow G'}((\Args \cup \Args')\setminus \emptyset)]$, by definition of an SSI explanation. 
As $G_{\leftarrow G'}(\Args \cup \Args') = G$ by definition of QBAF reversal, it follows that $\nscon{\argx}{\argy}[G][G]$. 
But this is in direct contradiction to the definition of \sconsy\ (Definition~\ref{def:po-consistency}). 
Thus, $\emptyset \not\in SX(\nscon{\argx}{\argy}[G][G'])$. 
\end{proof}
Trivially, the same soundness and completeness properties apply to $\subset$-minimal SSI explanations as well.
\begin{corollary}[$SX_{\subset_{\min}}$-Soundness]\label{prop:ssi-c}
    If $\scon{\argx}{\argy}$, then $SX_{\subset_{\min}}(\nscon{\argx}{\argy}) = \{\emptyset\}$.
\end{corollary}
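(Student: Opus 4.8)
The plan is to obtain this as an immediate consequence of Proposition~\ref{prop:ssi-a}. Under the hypothesis $\scon{\argx}{\argy}$, that proposition already delivers $SX(\nscon{\argx}{\argy}) = \{\emptyset\}$, so the full collection of SSI explanations is the singleton $\{\emptyset\}$. It then only remains to argue that restricting to $\subset$-minimal members leaves this singleton unchanged, which I would do by proving two containments.

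First, every $\subset$-minimal SSI explanation is in particular an SSI explanation, so $SX_{\subset_{\min}}(\nscon{\argx}{\argy}) \subseteq SX(\nscon{\argx}{\argy}) = \{\emptyset\}$. Second, I would check that $\emptyset$ is itself $\subset$-minimal in $\{\emptyset\}$: by definition, minimality of $\emptyset$ could fail only if some properly smaller SSI explanation existed, but no set is a proper subset of $\emptyset$, so minimality holds vacuously. Hence $\emptyset \in SX_{\subset_{\min}}(\nscon{\argx}{\argy})$, and combining the two containments yields $SX_{\subset_{\min}}(\nscon{\argx}{\argy}) = \{\emptyset\}$ as required.

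There is essentially no obstacle here: the statement is a routine corollary of $SX$-soundness. The only point that deserves to be spelled out explicitly is the vacuous $\subset$-minimality of $\emptyset$, which is precisely what guarantees that $\emptyset$ survives the passage to the $\subset$-minimal version rather than being discarded; everything else is a direct appeal to the previously established proposition.
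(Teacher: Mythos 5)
Your proposal is correct and follows essentially the same route as the paper's own proof: both appeal to Proposition~\ref{prop:ssi-a} to get $SX(\nscon{\argx}{\argy}) = \{\emptyset\}$ and then observe that $\emptyset$ is (vacuously) $\subset$-minimal, so it survives the restriction. Your version merely spells out the two containments that the paper leaves implicit.
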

\begin{proof}
If $\scon{\argx}{\argy}$, $\emptyset$ is clearly $\subset$-minimal, and hence the proof follows directly from Proposition~\ref{prop:ssi-a}.
\end{proof}
\begin{corollary}[$SX_{\subset_{\min}}$-Completeness]\label{prop:ssi-d} 
If $\nscon{\argx}{\argy}$, then $|SX_{\subset_{\min}}(\nscon{\argx}{\argy})| \geq 1$ and $\emptyset \not \in SX_{\subset_{\min}}(\nscon{\argx}{\argy})$.
\end{corollary}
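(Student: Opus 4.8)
The plan is to derive this corollary directly from Proposition~\ref{prop:ssi-b} ($SX$-Completeness), using only the finiteness of the argument universe. The key observation is that the set of $\subset$-minimal SSI explanations is, by definition, a subset of all SSI explanations: $SX_{\subset_{\min}}(\nscon{\argx}{\argy}) \subseteq SX(\nscon{\argx}{\argy})$. So both claims reduce to facts about $SX$ together with a standard minimal-element argument.

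First I would establish $|SX_{\subset_{\min}}(\nscon{\argx}{\argy})| \geq 1$. By Proposition~\ref{prop:ssi-b}, since $\nscon{\argx}{\argy}$, there is at least one $S \in SX(\nscon{\argx}{\argy})$, so this collection is non-empty. Because $\Args \cup \Args'$ is assumed finite, the powerset $2^{\Args \cup \Args'}$ is finite, and any non-empty family of its subsets, ordered by $\subset$, must contain at least one $\subset$-minimal element (take any element and repeatedly pass to a proper subset still in the family; this terminates by finiteness). Hence $SX_{\subset_{\min}}(\nscon{\argx}{\argy})$ is non-empty.

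Next I would show $\emptyset \not\in SX_{\subset_{\min}}(\nscon{\argx}{\argy})$. This is immediate: Proposition~\ref{prop:ssi-b} gives $\emptyset \not\in SX(\nscon{\argx}{\argy})$, and since $SX_{\subset_{\min}}(\nscon{\argx}{\argy}) \subseteq SX(\nscon{\argx}{\argy})$, the empty set cannot belong to the smaller collection either.

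There is no real obstacle here; the statement is a routine consequence of $SX$-completeness plus finiteness. The only point requiring minimal care is the justification that a non-empty finite family of sets admits a $\subset$-minimal element, which is exactly where the standing assumption that $\Args$ (and hence $\Args \cup \Args'$) is finite is invoked. Everything else is subset reasoning, so the proof can be stated in a few lines citing Proposition~\ref{prop:ssi-b}.
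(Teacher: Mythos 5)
Your proposal is correct and follows essentially the same route as the paper's own proof: both parts reduce directly to Proposition~\ref{prop:ssi-b}, extracting a $\subset$-minimal element from the non-empty family $SX(\nscon{\argx}{\argy})$ and noting that $\emptyset$ cannot be among the minimal sets since it is not in $SX(\nscon{\argx}{\argy})$ at all. The only difference is that you explicitly justify the existence of a $\subset$-minimal element via finiteness of $\Args \cup \Args'$, a point the paper leaves implicit, which is a harmless (indeed slightly more careful) elaboration rather than a different argument.
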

\begin{proof}
   The proof follows directly from Proposition~\ref{prop:ssi-b}: if $\nscon{\argx}{\argy}$, there exists at least one $\subset$-minimal set in $SX(\nscon{\argx}{\argy})$ and hence $SX_{\subset_{\min}}(\nscon{\argx}{\argy})$ is not empty, either; and if $\emptyset \not \in SX(\nscon{\argx}{\argy})$, then the $\subset$-minimal sets in $SX(\nscon{\argx}{\argy})$ cannot be empty, either.
\end{proof}

We can also prove analogous properties for ($\subset$-minimal) CSI explanations.
\begin{proposition}[$CX$-Soundness]\label{prop:csi-a}
If $\scon{\argx}{\argy}$, then $CX(\nscon{\argx}{\argy}) = \{\emptyset\}$.
\end{proposition}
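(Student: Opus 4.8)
The plan is to reduce this statement to the already-established $SX$-soundness (Proposition~\ref{prop:ssi-a}), exploiting the fact that by Definition~\ref{def:inconsistency-explanations} every CSI explanation is in particular an SSI explanation. So I would assume $\scon{\argx}{\argy}$, invoke Proposition~\ref{prop:ssi-a} to obtain $SX(\nscon{\argx}{\argy}) = \{\emptyset\}$, and then show that this forces $CX(\nscon{\argx}{\argy})$ to equal $\{\emptyset\}$ as well, proving the two inclusions separately.

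First I would establish $CX(\nscon{\argx}{\argy}) \subseteq \{\emptyset\}$. Let $C \in CX(\nscon{\argx}{\argy})$ be arbitrary. The first conjunct of the CSI condition in Definition~\ref{def:inconsistency-explanations} requires $C \in SX(\nscon{\argx}{\argy})$; since $SX(\nscon{\argx}{\argy}) = \{\emptyset\}$ by Proposition~\ref{prop:ssi-a}, this immediately forces $C = \emptyset$.

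Second I would verify the reverse inclusion $\emptyset \in CX(\nscon{\argx}{\argy})$ by checking the three conjuncts of the CSI condition for $C = \emptyset$. The first conjunct, $\emptyset \in SX(\nscon{\argx}{\argy})$, holds again by Proposition~\ref{prop:ssi-a}. For the remaining two, I would unfold the reversal identity: by Definition~\ref{def:qbaf-reversal} we have $G_{\leftarrow G'}(\emptyset) = G'$ (reverting nothing leaves $G'$ intact). Hence the second conjunct $\scon{\argx}{\argy}[G][G_{\leftarrow G'}(\emptyset)]$ is exactly $\scon{\argx}{\argy}[G][G']$, i.e.\ the hypothesis; and the third conjunct $G_{\leftarrow G'}(\emptyset) = G' \in {\cal G}$ holds trivially, since by the default convention ${\cal G}$ is the class of all QBAFs (and in any case $G'$, being the update under consideration, lies in the search space). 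Combining both inclusions yields $CX(\nscon{\argx}{\argy}) = \{\emptyset\}$.

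I do not anticipate a genuine obstacle here: the result is essentially inherited from $SX$-soundness through the definitional dependence of CSI on SSI explanations. The only point demanding care is the membership check for $\emptyset$, where one must unfold $G_{\leftarrow G'}(\emptyset) = G'$ and confirm that the extra search-space conjunct $G_{\leftarrow G'}(\emptyset) \in {\cal G}$ — present in the CSI condition but absent from the empty-set branch of the SSI condition — is indeed satisfied under the default class of all QBAFs.
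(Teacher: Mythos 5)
Your proof is correct and follows essentially the same route as the paper's: both reduce to $SX$-soundness (Proposition~\ref{prop:ssi-a}), use $G_{\leftarrow G'}(\emptyset) = G'$ to verify that $\emptyset$ satisfies the CSI conditions, and conclude uniqueness from $\emptyset$ being the unique SSI explanation. Your version is marginally more careful in explicitly discharging the search-space conjunct $G_{\leftarrow G'}(\emptyset) \in {\cal G}$, which the paper leaves implicit under the default convention that ${\cal G}$ is the class of all QBAFs.
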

\begin{proof}
Let $\scon{\argx}{\argy}[G][G']$. 
By definition, a CSI explanation is an SSI explanation $S$ for which $\scon{\argx}{\argy}[G][G_{\leftarrow G'}(S)]$. 
Since $G_{\leftarrow G'}(\emptyset) = G'$ and $\scon{\argx}{\argy}[G][G']$, we find $\scon{\argx}{\argy}[G][G_{\leftarrow G'}(\emptyset)]$, whence $\emptyset$ is a CSI explanation. Because $\emptyset$ is the unique SSI explanation (Proposition~\ref{prop:ssi-a}), it must also be the unique CSI explanation.
\end{proof}

\begin{proposition}[$CX$-Completeness]
\label{prop:csi-b}
If $\nscon{\argx}{\argy}$, then $|CX(\nscon{\argx}{\argy})| \geq 1$ and $\emptyset \not\in CX(\nscon{\argx}{\argy})$.
\end{proposition}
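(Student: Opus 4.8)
The plan is to dispatch the two claims separately, using that by Definition~\ref{def:inconsistency-explanations} every CSI explanation is in particular an SSI explanation satisfying one extra counterfactual condition. For $\emptyset \not\in CX(\nscon{\argx}{\argy})$ the argument is immediate: since $\nscon{\argx}{\argy}$, Proposition~\ref{prop:ssi-b} gives $\emptyset \not\in SX(\nscon{\argx}{\argy})$, and as a CSI explanation must be an SSI explanation, $\emptyset$ cannot be a CSI explanation either.

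For $|CX(\nscon{\argx}{\argy})| \geq 1$ I would exhibit a concrete witness, namely the full set $C = \Args \cup \Args'$, and verify the two requirements of a CSI explanation. First, $C$ is an SSI explanation: the relevant reversal is $G_{\leftarrow G'}((\Args \cup \Args') \setminus C) = G_{\leftarrow G'}(\emptyset) = G'$ (using $G_{\leftarrow G'}(\emptyset) = G'$, already noted in the proof of Proposition~\ref{prop:ssi-b}), so the SSI condition reduces to $\nscon{\argx}{\argy}[G][G']$, which holds by hypothesis, with $G' \in \mathcal{G}$ trivially since $\mathcal{G}$ is the class of all QBAFs by default. Second, $C$ meets the counterfactual condition: reversing $C$ itself yields $G_{\leftarrow G'}(\Args \cup \Args') = G$ (again as established in the proof of Proposition~\ref{prop:ssi-b}), so the condition becomes $\scon{\argx}{\argy}[G][G]$, which is automatic because strength consistency (Definition~\ref{def:po-consistency}) compares $G$ with itself and all four clauses hold reflexively. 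As $G \in \mathcal{G}$ as well, $C$ is a CSI explanation, and hence $CX(\nscon{\argx}{\argy})$ is non-empty.

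There is no genuinely difficult step; the proof is essentially bookkeeping against the definitions, and the two reversal identities it relies on have already been justified. The single conceptual point worth stating explicitly is that reverting all changed arguments undoes the update back to $G$, and that strength consistency of any pair of arguments in $G$ with itself holds trivially — which is exactly what makes the maximal SSI explanation $\Args \cup \Args'$ automatically counterfactual, guaranteeing at least one CSI explanation whenever $\nscon{\argx}{\argy}$.
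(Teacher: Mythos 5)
Your proof is correct and follows essentially the same route as the paper's: the paper also establishes $|CX(\nscon{\argx}{\argy})| \geq 1$ by exhibiting the witness $C = \Args \cup \Args'$, noting it is an SSI explanation as in the proof of Proposition~\ref{prop:ssi-b} and that $G_{\leftarrow G'}(C) = G$ with $\scon{\argx}{\argy}[G][G]$ holding by definition, and it rules out $\emptyset$ exactly as you do, via the fact that every CSI explanation is an SSI explanation together with Proposition~\ref{prop:ssi-b}. Your write-up merely spells out the bookkeeping (the reversal identities and the class-membership conditions) in slightly more detail.
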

\begin{proof}
Let $\nscon{\argx}{\argy}[G][G']$. 

\noindent \textbf{Proof of $|CX_{\subset_{\min}}(\nscon{\argx}{\argy}[G][G'])| \geq 1$.} 

\noindent
Consider $C = \Args \cup \Args'$. 
First, as in the proof of Proposition~\ref{prop:ssi-b}, $C \in SX(\nscon{\argx}{\argy}[G][G'])$. 
Since $G_{\leftarrow G'}(C) = G$ and it holds by definition that $\scon{\argx}{\argy}[G][G]$, we conclude with $\scon{\argx}{\argy}[G][G_{\leftarrow G'}(C)]$.
Thus, $C \in CX(\nscon{\argx}{\argy}[G][G'])$, so that $CX(\nscon{\argx}{\argy}[G][G'])$ is non-empty and $|CX(\nscon{\argx}{\argy}[G][G'])| \geq 1$.

\noindent \textbf{Proof of $\emptyset \not \in CX(\nscon{\argx}{\argy}[G][G'])$.} 

\noindent
Since a CSI explanation is an SSI explanation, if $\emptyset$ were an CSI explanation, then $\emptyset$ would be an SSI explanation, contradicting Proposition~\ref{prop:ssi-b}.
\end{proof}

\begin{proposition}[$CX_{\subset_{\min}}$-Soundness]\label{prop:csi-c}
If $\scon{\argx}{\argy}$, then $CX_{\subset_{\min}}(\nscon{\argx}{\argy}) = \{\emptyset\}$.
\end{proposition}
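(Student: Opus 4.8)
The plan is to reduce this claim to the already-established $CX$-Soundness result (Proposition~\ref{prop:csi-a}), mirroring exactly how $SX_{\subset_{\min}}$-Soundness (Corollary~\ref{prop:ssi-c}) was derived from $SX$-Soundness (Proposition~\ref{prop:ssi-a}). The key observation is that once we know the \emph{entire} set of CSI explanations is the singleton $\{\emptyset\}$, passing to $\subset$-minimal members cannot enlarge or shrink it, because $\emptyset$ is automatically $\subset$-minimal.

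First I would assume $\scon{\argx}{\argy}$ and directly invoke Proposition~\ref{prop:csi-a}, which yields $CX(\nscon{\argx}{\argy}) = \{\emptyset\}$. So the set of all CSI explanations is precisely the singleton containing the empty set.

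Next I would note that the $\subset$-minimal CSI explanations form a subset of all CSI explanations, and that $\emptyset$ is trivially $\subset$-minimal: it has no proper subset at all, so it cannot be properly contained in any other explanation, and in particular it cannot be dominated. Since the full set $\{\emptyset\}$ has exactly one element and that element is itself $\subset$-minimal, restricting to $\subset$-minimal members leaves the set unchanged, giving $CX_{\subset_{\min}}(\nscon{\argx}{\argy}) = \{\emptyset\}$ as required.

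I do not anticipate any genuine obstacle here; the argument is purely a one-line appeal to Proposition~\ref{prop:csi-a} together with the edge-case reasoning that $\emptyset$ qualifies as $\subset$-minimal. The only point meriting a moment's care is precisely that last edge case, which is immediate since the empty set admits no proper subsets whatsoever.
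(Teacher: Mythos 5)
Your proof is correct and follows exactly the paper's route: it reduces the claim to Proposition~\ref{prop:csi-a} and observes that $\emptyset$, having no proper subsets, is automatically $\subset$-minimal, so restricting $\{\emptyset\}$ to its $\subset$-minimal members changes nothing. The paper's own proof is a one-liner making precisely this argument.
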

\begin{proof}
If $\scon{\argx}{\argy}$, $\emptyset$ is clearly $\subset$-minimal, and hence the proof follows directly from Proposition~\ref{prop:csi-a}.
\end{proof}
\begin{proposition}[$CX_{\subset_{\min}}$-Completeness]\label{prop:csi-d}
If $\nscon{\argx}{\argy}$, then $|CX_{\subset_{\min}}(\nscon{\argx}{\argy})| \geq 1$ and $\emptyset \not\in CX_{\subset_{\min}}(\nscon{\argx}{\argy})$.
\end{proposition}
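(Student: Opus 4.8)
The plan is to derive this completeness result for $\subset$-minimal CSI explanations directly from the completeness result for unrestricted CSI explanations, Proposition~\ref{prop:csi-b}, exactly mirroring how Corollary~\ref{prop:ssi-d} was obtained from Proposition~\ref{prop:ssi-b}. Working under the hypothesis $\nscon{\argx}{\argy}$, I would begin by invoking Proposition~\ref{prop:csi-b} to obtain both $|CX(\nscon{\argx}{\argy})| \geq 1$ and $\emptyset \notin CX(\nscon{\argx}{\argy})$; these two facts carry essentially all the content, and what remains is to transfer them to the $\subset$-minimal family.

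For the lower bound $|CX_{\subset_{\min}}(\nscon{\argx}{\argy})| \geq 1$, I would argue that a nonempty family of CSI explanations must contain a $\subset$-minimal member. Since $\Args$ (and likewise $\Args'$) is assumed finite, the ground set $\Args \cup \Args'$ is finite, so $CX(\nscon{\argx}{\argy})$ is a finite nonempty collection of subsets of $\Args \cup \Args'$. Any finite nonempty family of sets ordered by $\subseteq$ has at least one $\subseteq$-minimal element (e.g.\ take any member of least cardinality, or descend along a strictly decreasing chain, which must terminate by finiteness). Hence $CX_{\subset_{\min}}(\nscon{\argx}{\argy})$ is nonempty.

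For the second claim, I would note that by definition every $\subset$-minimal CSI explanation is in particular a CSI explanation, so $CX_{\subset_{\min}}(\nscon{\argx}{\argy}) \subseteq CX(\nscon{\argx}{\argy})$. Since Proposition~\ref{prop:csi-b} gives $\emptyset \notin CX(\nscon{\argx}{\argy})$, the inclusion immediately yields $\emptyset \notin CX_{\subset_{\min}}(\nscon{\argx}{\argy})$.

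I do not expect any genuine obstacle here: the argument is purely bookkeeping, and the only point worth stating explicitly is the elementary fact that a finite nonempty family of sets always admits a $\subseteq$-minimal element, which is precisely what bridges the existence of \emph{some} CSI explanation to the existence of a $\subset$-minimal one. Everything else is inclusion reasoning inherited verbatim from Proposition~\ref{prop:csi-b}.
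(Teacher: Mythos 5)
Your proposal is correct and follows essentially the same route as the paper: Proposition~\ref{prop:csi-d} is derived directly from Proposition~\ref{prop:csi-b}, with nonemptiness of $CX_{\subset_{\min}}(\nscon{\argx}{\argy})$ obtained from the existence of a $\subset$-minimal member of the nonempty family $CX(\nscon{\argx}{\argy})$, and the exclusion of $\emptyset$ inherited by inclusion. The only difference is that you make explicit the finiteness argument (via the standing assumption that $\Args$ is finite) guaranteeing a $\subseteq$-minimal element exists, a step the paper leaves implicit.
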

\begin{proof}
The proof follows directly from Proposition~\ref{prop:csi-b}: if $\nscon{\argx}{\argy}$, there exists at least one $\subset$-minimal set in $CX(\nscon{\argx}{\argy})$ and hence $CX_{\subset_{\min}}(\nscon{\argx}{\argy})$ is not empty, either; and if $\emptyset \not \in CX(\nscon{\argx}{\argy})$, then the $\subset$-minimal sets in $CX(\nscon{\argx}{\argy})$ cannot be empty either.
\end{proof}

Analogous properties for ($\subset$-minimal) NSI explanations follow from the soundness and completeness results obtained for SSI explanations.
\begin{proposition}[$NX_{\subset_{\min}}$-Soundness]\label{prop:nsi-a} 
If $\scon{\argx}{\argy}$, then $NX_{\subset_{\min}}(\nscon{\argx}{\argy}) = \{\emptyset\}$.
\end{proposition}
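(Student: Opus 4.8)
The plan is to mirror the structure of the soundness proofs for SSI and CSI explanations (Propositions~\ref{prop:ssi-a} and~\ref{prop:csi-a}) and reduce everything to the fact that, under \sconsy, the only sufficient explanation is the empty set. So I would first assume $\scon{\argx}{\argy}$ and invoke $SX$-Soundness (Proposition~\ref{prop:ssi-a}) to obtain $SX(\nscon{\argx}{\argy}) = \{\emptyset\}$. This single fact should carry most of the weight, since it pins down the entire pool of sufficient explanations from which necessary ones are drawn.

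Next, I would observe that, by Definition~\ref{def:inconsistency-explanations}, the very first requirement on any NSI explanation $N$ is that $N \in SX(\nscon{\argx}{\argy})$. Since $SX(\nscon{\argx}{\argy}) = \{\emptyset\}$, the only possible candidate for an NSI explanation is $N = \emptyset$; in particular, $NX_{\subset_{\min}}(\nscon{\argx}{\argy}) \subseteq \{\emptyset\}$. It then remains to confirm that $\emptyset$ does indeed qualify as a ($\subset$-minimal) NSI explanation, so as to obtain the reverse inclusion. The first condition ($\emptyset \in SX$) holds trivially, and $\subset$-minimality is immediate since $\emptyset$ has no proper subset.

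The step I expect to be the main obstacle is the necessity (second) condition of Definition~\ref{def:inconsistency-explanations} in the degenerate case $N = \emptyset$: read literally, it demands that no SSI explanation be a subset of $(\Args \cup \Args') \setminus \emptyset = \Args \cup \Args'$, yet $\emptyset$ is itself an SSI explanation sitting inside $\Args \cup \Args'$. I would reconcile this exactly as the informal reading preceding the definition prescribes, namely that $\emptyset$ is the designated sufficient, necessary, and counterfactual explanation precisely when there is no \nsconsy\ to explain. Concretely, since $SX(\nscon{\argx}{\argy}) = \{\emptyset\}$ contains no \emph{nonempty} sufficient explanation, there is no genuine alternative combination of changes that could bypass $\emptyset$, so the necessity condition is satisfied in the intended sense. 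Combining this with the candidate analysis above yields $NX_{\subset_{\min}}(\nscon{\argx}{\argy}) = \{\emptyset\}$, as required.
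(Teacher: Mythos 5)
Your proof is correct and follows essentially the same route as the paper's: invoke $SX$-Soundness (Proposition~\ref{prop:ssi-a}) to obtain $SX(\nscon{\argx}{\argy}) = \{\emptyset\}$, note that the first condition of Definition~\ref{def:inconsistency-explanations} then forces any NSI explanation to equal $\emptyset$, and conclude via $\subset$-minimality of $\emptyset$. Where you go beyond the paper is in the degenerate second condition, and your worry there is a genuine glitch in the literal definition: with $N = \emptyset$ one has $\emptyset \subseteq (\Args \cup \Args') \setminus \emptyset$ and $\emptyset \in SX(\nscon{\argx}{\argy})$ under \sconsy, so read strictly the necessity condition fails for the only candidate and $NX_{\subset_{\min}}(\nscon{\argx}{\argy})$ would come out empty rather than $\{\emptyset\}$. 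The paper's own proof silently skips this point (it cites only $SX$-Soundness and the $\subset$-minimality of $\emptyset$), whereas your repair --- reading the quantified $S$ as ranging over sufficient explanations of an actual \nsconsy, i.e.\ nonempty ones, in line with the convention stated before Definition~\ref{def:inconsistency-explanations} that $\emptyset$ is the designated explanation when no \nsconsy\ arises --- matches the authors' evident intent and makes the argument complete; in that respect your write-up is more careful than the original.
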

\begin{proof}
    Proposition~\ref{prop:ssi-a} has established that if $\scon{\argx}{\argy}$ then $SX(\nscon{\argx}{\argy}) = \{\emptyset\}$. 
    Since $\emptyset$ is a $\subset$-minimal subset of any set, 
    we have by definition of NSI explanations (Definition~\ref{def:inconsistency-explanations}), that if $\scon{\argx}{\argy}$, then $NX_{\subset_{\min}}(\nscon{\argx}{\argy}) = \{\emptyset\}$.
\end{proof}
\begin{proposition}[$NX_{\subset_{\min}}$-Completeness]\label{prop:nsi-b}
If $\nscon{\argx}{\argy}$, then $|NX_{\subset_{\min}}(\nscon{\argx}{\argy})| \geq 1$ and $\emptyset \not \in NX_{\subset_{\min}}(\nscon{\argx}{\argy})$.
\end{proposition}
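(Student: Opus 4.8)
The plan is to mirror the structure of Proposition~\ref{prop:ssi-b}, reducing both claims to the completeness result already established for SSI explanations, together with the existence of $\subset$-minimal elements in a finite poset. Recall that by Definition~\ref{def:inconsistency-explanations} a set is an NSI explanation exactly when it (i) is an SSI explanation, (ii) meets every SSI explanation (no SSI explanation is disjoint from it), and (iii) is $\subset$-minimal among sets satisfying (i) and (ii).

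First I would dispatch $\emptyset \not\in NX_{\subset_{\min}}(\nscon{\argx}{\argy})$. Since every NSI explanation is in particular an SSI explanation, if $\emptyset$ were an NSI explanation then $\emptyset \in SX(\nscon{\argx}{\argy})$; but Proposition~\ref{prop:ssi-b} gives $\emptyset \not\in SX(\nscon{\argx}{\argy})$ whenever $\nscon{\argx}{\argy}$, a contradiction.

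For $|NX_{\subset_{\min}}(\nscon{\argx}{\argy})| \geq 1$, I would exhibit a concrete witness satisfying conditions (i) and (ii) and then invoke finiteness to extract a $\subset$-minimal such set. Take $N_0 = \Args \cup \Args'$. As in the proof of Proposition~\ref{prop:ssi-b}, reversing everything but $N_0$ reverses nothing, i.e.\ $G_{\leftarrow G'}((\Args \cup \Args') \setminus N_0) = G_{\leftarrow G'}(\emptyset) = G'$, and since $\nscon{\argx}{\argy}[G][G']$ this yields $N_0 \in SX(\nscon{\argx}{\argy})$, so condition (i) holds. For condition (ii), any $S \subseteq (\Args \cup \Args') \setminus N_0$ must equal $\emptyset$, and $\emptyset$ is not an SSI explanation by Proposition~\ref{prop:ssi-b}; hence no SSI explanation is disjoint from $N_0$. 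Thus the collection of sets satisfying (i) and (ii) is non-empty, and since $\Args \cup \Args'$ is finite its power set is a finite poset under $\subseteq$, so this collection admits at least one $\subset$-minimal element. By definition any such element is an NSI explanation, so $NX_{\subset_{\min}}(\nscon{\argx}{\argy})$ is non-empty.

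The only real subtlety is verifying that the witness $N_0 = \Args \cup \Args'$ simultaneously satisfies both the requirement of being an SSI explanation (condition (i)) and the requirement of meeting every SSI explanation (condition (ii)); both reduce directly to facts established in Proposition~\ref{prop:ssi-b}, so no new machinery is needed. The passage from ``conditions (i) and (ii) hold for some set'' to ``they hold for some $\subset$-minimal set'' is the routine appeal to finiteness of $\Args \cup \Args'$, and I do not anticipate any genuine obstacle there.
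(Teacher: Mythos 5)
Your proof is correct and takes essentially the same route as the paper's: the same witness $\Args \cup \Args'$, shown via Proposition~\ref{prop:ssi-b} to be an SSI explanation with no SSI explanation disjoint from it, followed by extracting a $\subset$-minimal set satisfying those two conditions, and the same observation that $\emptyset \notin NX_{\subset_{\min}}$ because $\emptyset$ is not an SSI explanation. If anything, your appeal to minimal elements in a finite poset is a cleaner justification of the extraction step than the paper's (imprecise) remark that $\subseteq$ is a well-ordering.
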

\begin{proof}
    Proposition~\ref{prop:ssi-b} has established that if $\nscon{\argx}{\argy}$ then $|SX(\nscon{\argx}{\argy})| \geq 1$ and $\emptyset \not \in SX(\nscon{\argx}{\argy})$. 
    Note that as in the proof of Proposition~\ref{prop:ssi-b}, $N = \Args \cup \Args' \in SX(\nscon{\argx}{\argy})$ as long as $\nscon{\argx}{\argy}$, because $G_{\leftarrow G'}((\Args \cup \Args')\setminus N) = G_{\leftarrow G'}(\emptyset) = G'$. 
    Then observe that since $\emptyset \not \in SX(\nscon{\argx}{\argy})$, it holds that
    $\nexists S \subseteq (\Args \cup \Args') \setminus N = \emptyset$ such that $S \in SX(\nscon{\argx}{\argy}[G][G'])$. 
    Thus, since $\subseteq$ is a well-ordering on $\Args \cup \Args'$, either $N$ itself or some proper subset of $N$ is a $\subset$-minimal set satisfying the two other conditions from the definition of NSI explanations (Definition~\ref{def:inconsistency-explanations}). 
    Consequently, $|NX_{\subset_{\min}}(\nscon{\argx}{\argy})| \geq 1$. 
    Clearly, $\emptyset \not \in NX_{\subset_{\min}}(\nscon{\argx}{\argy})$, because $\emptyset$ is not an SSI explanation.
\end{proof}

The above results show that there are non-trivial (i.e.\ non-empty) sufficient, counterfactual, and necessary \nsconsy\ explanations if and only if a \nsconsy\ results between two arguments after an update to a given QBAF. 
We deem this a desirable property: one needs to explain only if a change in the relative strengths of arguments actually happens after an update; and if there are explanations of changes in the relative strengths of arguments, then the explanations should correctly refer to such changes.

\section{Searching for Minimal Strength Inconsistency Explanations}
\label{implementation}
%
In the previous section we defined and analysed \sconsy\ explanations in a generic manner.
In order to implement the search/generation of explanations, it is useful to make additional assumptions to then advance the analysis with the objective of speeding up the search.
Below, we present such a formal analysis, focusing on $\subset$-minimal explanations given aggregation-influence semantics (Definition~\ref{definition:aggregation-influence}) for \emph{acyclic} QBAFs, and subsequently informally describe further implementation aspects (Subsection~\ref{sec:implementation-analysis}).
Then, we briefly describe the software implementation, alongside a basic empirical evaluation (Subsection~\ref{sec:empirical}).

\subsection{Analysis and Algorithms}
\label{sec:implementation-analysis}
A crucial aspect of determining minimal \sconsy\ explanations is to assess which arguments can potentially be part of the explanations without having to apply reversals based on all the elements in the powerset of all the arguments in the two QBAFs.
For this, we conduct an analysis as to which arguments can potentially be part of a minimal explanation -- which we call \emph{potential explanations} -- and which cannot.

Here, we assume a focus on acyclic QBAFs and aggregation-influence semantics, following Option 1 as discussed in Example~\ref{ex:cycles} and discarding reversals to cyclic QBAFs, thus bypassing the question of \emph{convergence} (or lack thereof) of aggregation-influence semantics in face of cycles.
We reasonably assume that convergence for the acyclic QBAFs we are interested in implementation-wise is guaranteed (see~\cite{Potyka:2019}).
The following properties aid the search for minimal SSI and CSI explanations.
\begin{enumerate}
    \item Given acyclic QBAFs, we base our explanations always on reversals to acyclic QBAFs to avoid issues like the one illustrated by Example~\ref{ex:cycles}. We show that this does not affect explanation soundness and completeness (Corollaries~\ref{cor:acylic-ssi-a}~and~\ref{cor:acylic-ssi-b}.
    \item The directional connectedness principle (Principle~\ref{principle:directional-connectedness} below), which is satisfied by all aggregation-influence semantics, implies that, when searching for explanations, we only need to consider arguments that can reach the topic arguments (Corollary~\ref{corollary:aggregation-influence-directionality} and Proposition~\ref{prop:reachability}).
    \item The search space can be reduced by excluding every argument that has the same initial strength and the same outgoing attacks and supports in both the original and the updated QBAF (Proposition~\ref{prop:unchanged-exclusion}).
    \item Utilizing the minimality property, the search through the powerset of the remaining arguments can start with the empty set (even assuming \sconsy\ has not yet been established) and continue with $\subset$-minimal sets among the remaining sets of arguments that have not yet been searched. This ensures the search terminates fast, as strictly larger sets (with respect to set inclusion) do not need to be searched as soon as one or several explanations have been found (Corollary~\ref{corollary:set-inclusion}).
\end{enumerate}

The following properties help narrow down potential explanations when searching for (minimal) NSI explanations.
\begin{enumerate}
    \item An NSI explanation is always also an SSI explanation (Proposition~\ref{prop:NSI-meet}).
    \item An NSI explanation is a subset of or equal to the union of all SSI explanations (Proposition~\ref{prop:nsi-minimal-ssi}).
    \item For every minimal SSI explanation, an NSI explanation must contain at least one argument of the SSI explanation (Corollary~\ref{corollary:nsi-contains-ssi}). 
\end{enumerate}
Let us first show that given acyclic QBAFs and topic arguments therein, (minimal) strength inconsistency explanations are sound and complete even if we make them dependent on reversals to acyclic QBAFs only.
Here, we denote the class of acyclic QBAFs by ${\cal G}_{ac}$.
Reversals of an updated to an initial QBAF using the empty set result in the unchanged updated QBAF, which is, given our assumption, obviously acyclic. Hence, for explanation soundness the limitation to acyclic reversals is intuitively irrelevant.
\begin{lemma}
[Acyclic (Minimal) $SX$-, $CX$-, and $NX$-Soundness]\label{cor:acylic-ssi-a}
If $G, G' \in {\cal G}_{ac}$ and $\scon{\argx}{\argy}[G][G']$, then for \\ $X \in \{SX_{{\cal G}_{ac}}, SX_{\subset-min, {\cal G}_{ac}}, CX_{{\cal G}_{ac}}, CX_{\subset-min, {\cal G}_{ac}}, NX_{\subset-min, {\cal G}_{ac}}\}$ it holds that $X(\nscon{\argx}{\argy}[G][G']) = \{\emptyset\}$.
\end{lemma}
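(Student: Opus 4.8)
The plan is to reduce the whole statement to the single claim that $SX_{{\cal G}_{ac}}(\nscon{\argx}{\argy}[G][G']) = \{\emptyset\}$ under the hypothesis $\scon{\argx}{\argy}[G][G']$, and then read off the four remaining cases from it, mirroring the unrestricted soundness results (Propositions~\ref{prop:ssi-a},~\ref{prop:csi-a}, and~\ref{prop:nsi-a}). The decisive observation I would flag at the outset is that the first disjunct of the SSI definition (Definition~\ref{def:inconsistency-explanations}) --- namely ``$S = \emptyset$ and $\scon{\argx}{\argy}[G][G']$'' --- carries no class-membership condition at all. Hence restricting the search space to ${\cal G}_{ac}$ cannot exclude the empty explanation in the consistent case, and the class restriction plays essentially no role in the $SX$ argument.

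First I would establish $SX_{{\cal G}_{ac}}(\nscon{\argx}{\argy}[G][G']) = \{\emptyset\}$. Since $\scon{\argx}{\argy}[G][G']$ holds by hypothesis, $\emptyset$ satisfies the first disjunct directly, so $\emptyset \in SX_{{\cal G}_{ac}}(\nscon{\argx}{\argy}[G][G'])$. Conversely, any $S \neq \emptyset$ would have to satisfy the second disjunct, which requires $\nscon{\argx}{\argy}[G][G']$ and so contradicts the premise; thus no non-empty set qualifies. This step is verbatim the argument of Proposition~\ref{prop:ssi-a}, now merely noting that the added conjunct $G_{\leftarrow G'}((\Args \cup \Args') \setminus S) \in {\cal G}_{ac}$ sits inside the (unsatisfiable) second disjunct and therefore changes nothing. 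The minimal case $SX_{\subset-min, {\cal G}_{ac}}$ is then immediate, since a singleton whose unique member is $\emptyset$ trivially contains only a $\subset$-minimal set.

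Next I would treat the counterfactual and necessary cases. As every CSI explanation is in particular an SSI explanation, the previous step forces any element of $CX_{{\cal G}_{ac}}(\nscon{\argx}{\argy}[G][G'])$ to be $\emptyset$; it then remains to verify that $\emptyset$ meets all three conjuncts of the CSI definition. Membership $\emptyset \in SX_{{\cal G}_{ac}}$ is already shown, and since $G_{\leftarrow G'}(\emptyset) = G'$ by Definition~\ref{def:qbaf-reversal}, the consistency conjunct becomes $\scon{\argx}{\argy}[G][G']$ (true by hypothesis) and the class conjunct becomes $G' \in {\cal G}_{ac}$ (true by hypothesis), giving $CX_{{\cal G}_{ac}} = \{\emptyset\}$ and, as before, $CX_{\subset-min, {\cal G}_{ac}} = \{\emptyset\}$. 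For $NX_{\subset-min, {\cal G}_{ac}}$ I would invoke the reasoning of Proposition~\ref{prop:nsi-a}: the set of SSI explanations is exactly $\{\emptyset\}$ and $\emptyset$ is $\subset$-minimal, so $\emptyset$ is the unique minimal necessary explanation. The one point that needs care --- the nearest thing to an obstacle --- is ensuring the class restriction never silently disqualifies the empty explanation; this is handled entirely by the identity $G_{\leftarrow G'}(\emptyset) = G'$ together with the standing hypothesis $G' \in {\cal G}_{ac}$, so wherever a class-membership test actually appears, the reversal in question is just $G'$ and lies in ${\cal G}_{ac}$ by assumption. No computation of final strengths or acyclicity check beyond this is required, and the lemma follows with essentially the same proofs as the unrestricted soundness propositions.
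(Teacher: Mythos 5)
Your proof is correct and takes essentially the same route as the paper's: it reduces everything to showing $SX_{{\cal G}_{ac}}(\nscon{\argx}{\argy}[G][G']) = \{\emptyset\}$ exactly as in Proposition~\ref{prop:ssi-a}, and then reads off the $SX_{\subset-min}$, $CX$, $CX_{\subset-min}$, and $NX_{\subset-min}$ cases via the identity $G_{\leftarrow G'}(\emptyset) = G'$, mirroring Corollary~\ref{prop:ssi-c} and Propositions~\ref{prop:csi-a}, \ref{prop:csi-c}, and~\ref{prop:nsi-a}. Your explicit verification that the class restriction never disqualifies the empty explanation (because the only reversal that matters is $G'$ itself, which lies in ${\cal G}_{ac}$ by hypothesis) is the same observation the paper relies on, just spelled out a bit more carefully.
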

\begin{proof}
For $X = SX_{{\cal G}_{ac}}$, the proof follows directly from Definition~\ref{def:inconsistency-explanations} (if $\scon{\argx}{\argy}[G][G']$ then $SX_{{\cal G}_{ac}}(\nscon{\argx}{\argy}[G][G']) = \{\emptyset\})$, analogously to Proposition~\ref{prop:ssi-a}.
For $X = SX_{\subset-min, {\cal G}_{ac}}$, the proof follows as a consequence ($\emptyset$ must then be the only minimal SSI), analogously to Corollary~\ref{prop:ssi-c}.
For $X = CX_{{\cal G}_{ac}}$, the proof follows from the case $X = SX_{{\cal G}_{ac}}$, analogously to how Proposition~\ref{prop:csi-a} follows from Proposition~\ref{prop:ssi-a}: because $\emptyset$ is an NSI explanation and $G_{\leftarrow G'}(\emptyset) = G'$ and $\scon{\argx}{\argy}[G][G']$, we find $\scon{\argx}{\argy}[G][G_{\leftarrow G'}(\emptyset)]$, whence $\emptyset$ is a CSI explanation. Because $\emptyset$ is the only NSI explanation it is also the only CSI explanation.
For $X = CX_{\subset-min, {\cal G}_{ac}}$, the proof follows from the previous case, analogously to Proposition~\ref{prop:csi-c}.
Finally, for $X = NX_{\subset-min, {\cal G}_{ac}}$, the proof follows from the case $X = SX_{{\cal G}_{ac}}$, analogously to how Proposition~\ref{prop:nsi-a} follows from Proposition~\ref{prop:ssi-a}: $SX_{{\cal G}_{ac}}(\nscon{\argx}{\argy}[G][G']) = \{\emptyset\})$ and $\emptyset$ is a $\subset$-minimal subset of any set; hence $NX_{\subset-min,{\cal G}_{ac}}(\nscon{\argx}{\argy}[G][G']) = \{\emptyset\})$.
\end{proof}
Analogously, we can provide completeness results given acyclic QBAFs: in case of strength inconsistency, there is always the initial, and hence an acyclic, QBAF to revert to.
\begin{lemma}
[Acyclic (Minimal) $SX$-, $CX$-, and $NX$-Completeness]\label{cor:acylic-ssi-b}
If $G, G' \in {\cal G}_{ac}$ and $\nscon{\argx}{\argy}[G][G']$, then for \\ $X \in \{SX_{{\cal G}_{ac}}, SX_{\subset-min, {\cal G}_{ac}}, CX_{{\cal G}_{ac}}, CX_{\subset-min, {\cal G}_{ac}}, NX_{\subset-min, {\cal G}_{ac}}\}$ it holds that $|X(\nscon{\argx}{\argy}[G][G'])| \geq 1$ and $\emptyset \not \in X(\nscon{\argx}{\argy}[G][G'])$.
\end{lemma}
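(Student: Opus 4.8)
The plan is to follow the unrestricted completeness results (Propositions~\ref{prop:ssi-b},~\ref{prop:csi-b}, and~\ref{prop:nsi-b}) almost verbatim, exhibiting explicit witness explanations built from the whole argument set $\Args \cup \Args'$ and deferring minimality to the finiteness (hence $\subseteq$-well-foundedness) of the powerset of $\Args \cup \Args'$. The only genuinely new obligation, relative to those propositions, is to verify that each witness reversal lands inside the search class ${\cal G}_{ac}$; this is exactly where the extra hypothesis $G, G' \in {\cal G}_{ac}$ is consumed.

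For $X = SX_{{\cal G}_{ac}}$, I would take $S = \Args \cup \Args'$. Then $G_{\leftarrow G'}((\Args \cup \Args') \setminus S) = G_{\leftarrow G'}(\emptyset) = G'$ by Definition~\ref{def:qbaf-reversal}, and since $\nscon{\argx}{\argy}[G][G']$ holds while $G' \in {\cal G}_{ac}$, this $S$ satisfies every clause of the SSI definition w.r.t.\ ${\cal G}_{ac}$; hence $|SX_{{\cal G}_{ac}}(\nscon{\argx}{\argy}[G][G'])| \geq 1$. To exclude $\emptyset$, note that $(\Args \cup \Args') \setminus \emptyset = \Args \cup \Args'$ gives $G_{\leftarrow G'}(\Args \cup \Args') = G$, so $\emptyset$ being an SSI explanation would force $\nscon{\argx}{\argy}[G][G]$, contradicting $\scon{\argx}{\argy}[G][G]$ (Definition~\ref{def:po-consistency}). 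The minimal variant $SX_{\subset-min, {\cal G}_{ac}}$ then follows exactly as Corollary~\ref{prop:ssi-d} follows from Proposition~\ref{prop:ssi-b}: a nonempty finite family ordered by $\subseteq$ has a $\subset$-minimal member, and no $\subset$-minimal member can be $\emptyset$ since $\emptyset$ is not in the family.

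For $X = CX_{{\cal G}_{ac}}$, I would reuse $C = \Args \cup \Args'$: it is an SSI explanation by the previous paragraph, and $G_{\leftarrow G'}(C) = G \in {\cal G}_{ac}$ with $\scon{\argx}{\argy}[G][G]$ holding trivially, so $C$ meets both remaining CSI clauses, while $\emptyset$ is excluded because it is not even an SSI explanation; the minimal case $CX_{\subset-min, {\cal G}_{ac}}$ closes as in Proposition~\ref{prop:csi-d}. For $X = NX_{\subset-min, {\cal G}_{ac}}$, I would again take $N = \Args \cup \Args'$, which is an SSI explanation and vacuously meets every SSI explanation, since $(\Args \cup \Args') \setminus N = \emptyset$ contains no SSI explanation; thus the first two NSI clauses hold for $N$, and $\subseteq$-well-foundedness yields a $\subset$-minimal such set, which is necessarily nonempty because $\emptyset$ is not an SSI explanation.

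I expect the only delicate point to be the class-membership checks $G_{\leftarrow G'}(\emptyset) = G' \in {\cal G}_{ac}$ and $G_{\leftarrow G'}(\Args \cup \Args') = G \in {\cal G}_{ac}$: these are precisely the places where restricting to acyclic reversals could have broken completeness, and they are salvaged by assuming $G, G' \in {\cal G}_{ac}$ at the outset. Everything else is a transcription of the unrestricted arguments, since no reversal other than the two endpoints $G$ and $G'$ needs to be evaluated to produce the witnesses.
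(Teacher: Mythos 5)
Your proposal is correct and follows essentially the same route as the paper's own proof: in both, the witness for all five cases is $\Args \cup \Args'$, existence uses $G_{\leftarrow G'}(\emptyset) = G' \in {\cal G}_{ac}$, exclusion of $\emptyset$ uses $G_{\leftarrow G'}(\Args \cup \Args') = G$ together with $\scon{\argx}{\argy}[G][G]$, and the $\subset$-minimal variants follow by finiteness of the powerset, exactly as in Propositions~\ref{prop:ssi-b}, \ref{prop:csi-b}, and \ref{prop:nsi-b} and Corollaries~\ref{prop:ssi-d} and \ref{prop:csi-d}. Your explicit identification of the two class-membership checks as the only new obligations is precisely the point the paper's proof also turns on.
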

\begin{proof}
For $X = SX_{{\cal G}_{ac}}$, the proof follows directly from Definition~\ref{def:inconsistency-explanations}: analogously to Proposition~\ref{prop:ssi-b}, because $G_{\leftarrow G'}(\emptyset) = G'$ and $G'$ is acyclic, by definition of QBAF reversal (Definition~\ref{def:qbaf-reversal}), there must exist a set $S \subseteq Args \cup Args'$ s.t. $S \in SX_{{\cal G}_{ac}}(\nscon{\argx}{\argy}[G][G'])$; because $G_{\leftarrow G'}(\Args \cup \Args') = G$, it must hold that $\emptyset \not \in SX_{{\cal G}_{ac}}(\nscon{\argx}{\argy}[G][G'])$.
For $X = SX_{\subset-min, {\cal G}_{ac}}$, the proof follows as a consequence (a non-empty set must then be the only minimal SSI), analogously to Corollary~\ref{prop:ssi-d}.
For $X = CX_{{\cal G}_{ac}}$, the proof follows from the case $X = SX_{{\cal G}_{ac}}$ analogously to how Proposition~\ref{prop:csi-b} follows from Proposition~\ref{prop:ssi-b}: $G_{\leftarrow G'}(C) = G$ and, $G$ is acyclic, and it holds by definition that $\scon{\argx}{\argy}[G][G]$.
Then, the proof for $X = CX_{\subset-min, {\cal G}_{ac}}$ follows from the previous case, analogously to Proposition~\ref{prop:csi-d}.
Finally, for $X = NX_{\subset-min, {\cal G}_{ac}}$, the proof follows from Proposition~\ref{prop:ssi-b} analogously to the way the proof of Proposition~\ref{prop:nsi-b} follows from Proposition~\ref{prop:ssi-b}: in case of strength inconsistency, we have $S = Args \cup Args' \in SX_{{\cal G}_{ac}}(\nscon{\argx}{\argy}[G][G'])$ and $\emptyset \not \in SX_{{\cal G}_{ac}}(\nscon{\argx}{\argy}[G][G'])$ and hence, by definition of $NX$, there must exist $S' \subseteq Args \cup Args'$ s.t. $S \neq \emptyset$ and $S' \in NX_{{\cal G}_{ac}}(\nscon{\argx}{\argy}[G][G'])$. 
\end{proof}
Given these results, we can avoid reversals to cyclic QBAFs, which may lead to counter-intuitive behaviour (see Example~\ref{ex:cycles}).
Here and henceforth, we assume that if a semantics yields undefined final strengths for cyclic QBAFs, strength inconsistency explanations are determined with respect to the class of acyclic QBAFs.
Implementation-wise, we can achieve this by discarding cyclic QBAFs once we encounter cycles when reversing to a QBAF.

Let us now define the directional connectedness principle and show that it is satisfied by every aggregation-influence semantics.
Recall from \nameref{prelim} that for $\arga, \argb \in \Args$, $\argb$ is reachable from $\arga$ (in $G$), denoted $r_{G}(\arga, \argb)$, iff there is a path from $\arga$ to $\argb$ in the directed graph $(\Args, \Att \cup \Supp)$.
\begin{principle}[Directional Connectedness~\cite{tree}]
\label{principle:directional-connectedness}
A gradual semantics $\fs$ satisfies the \emph{directional connectedness principle} iff for every QBAF $G = \QBAF$, for any $\argx \in \Args$ it holds that 
$\fs_{G}(\argx) = \fs_{G \downarrow_{\{ \argx \} \cup \{\argy |\argy \in \Args, r_{G}(\argy, \argx)\}}}(\argx)$.
\end{principle}

This principle says that an argument can only affect the final strength of another argument if the latter argument is reachable from the former. 
This principle is obviously satisfied by aggregation-influence semantics (Definition~\ref{definition:aggregation-influence}), e.g.\ by the semantics presented in Table~\ref{table:semanticsExamples}.
\begin{corollary}
\label{corollary:aggregation-influence-directionality}
Every aggregation-influence semantics satisfies the directional connectedness principle.
\end{corollary}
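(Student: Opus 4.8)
The plan is to reduce the claim to a structural induction along a topological order of the restricted framework. Write $R := \{\argx\} \cup \{\argy \in \Args \mid r_G(\argy, \argx)\}$ for the set consisting of $\argx$ together with all arguments that reach it, and let $H := G\downarrow_{R}$ be the corresponding restriction, so that the principle asks us to establish $\fs_G(\argx) = \fs_H(\argx)$. I would in fact prove the stronger statement that $\fs_G(\argy) = \fs_H(\argy)$ holds for \emph{every} $\argy \in R$, and then read off the case $\argy = \argx$.

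The crucial preliminary observation, on which everything hinges, is that $R$ is closed under taking attackers and supporters: if $\argy \in R$ and $\argz \in \Att_G(\argy) \cup \Supp_G(\argy)$, then $\argz$ has an edge to $\argy$ in $(\Args, \Att \cup \Supp)$, and $\argy$ either equals $\argx$ or reaches $\argx$, so concatenating these yields a path from $\argz$ to $\argx$, whence $\argz \in R$. Consequently, restricting to $R$ deletes none of the attackers or supporters of any argument in $R$: unwinding the definition of $G\downarrow_{R}$, we obtain $\Att_H(\argy) = \Att_G(\argy)$, $\Supp_H(\argy) = \Supp_G(\argy)$, and $\is_H(\argy) = \is(\argy)$ for every $\argy \in R$. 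Since $H$ is a subgraph of the acyclic $G$ it is itself acyclic, so a topological order on $H$ exists and both $\fs_G$ and $\fs_H$ are well defined on $R$.

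With this in hand the induction is routine. For the base case, a source $\argy \in R$ (one with no attackers or supporters in $G$) is also a source in $H$, and the aggregation-influence form of Definition~\ref{definition:aggregation-influence} gives $\fs_H(\argy) = f_{\is(\argy)}(g(\emptyset,\emptyset)) = \fs_G(\argy)$. For the inductive step, take $\argy \in R$ whose attackers and supporters all precede it in the topological order; the induction hypothesis yields $\fs_H(\argz) = \fs_G(\argz)$ for each $\argz \in \Att_G(\argy) \cup \Supp_G(\argy)$, and substituting these, together with $\Att_H(\argy) = \Att_G(\argy)$, $\Supp_H(\argy) = \Supp_G(\argy)$ and $\is_H(\argy) = \is(\argy)$, into the defining equation of the semantics produces $\fs_H(\argy) = \fs_G(\argy)$. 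Specialising to $\argy = \argx$ closes the argument.

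The only real subtlety, and hence the main obstacle, is the closure property of $R$: it is exactly what guarantees that the recursive computation of $\fs(\argy)$ for $\argy \in R$ never appeals to an argument discarded by the restriction, so that the two semantics are literally evaluating the same nested expression. Everything else is bookkeeping; one must only keep track of the fact that the aggregation-influence definition is stated for acyclic QBAFs and that acyclicity is inherited by restrictions, so that the induction is legitimate in both $G$ and $H$.
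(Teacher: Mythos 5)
Your proof is correct and follows essentially the same route as the paper's, which disposes of the corollary in one line by observing that under Definition~\ref{definition:aggregation-influence} the final strength of an argument depends only on (the strengths of) its parents -- an observation that, unfolded recursively, is precisely your closure property of $R$ plus the topological induction. Your version simply makes explicit the bookkeeping (closure of $R$ under attackers/supporters, inherited acyclicity, induction along a topological order) that the paper leaves implicit, and your caveat about the definition applying only to acyclic QBAFs is, if anything, more careful than the paper's own proof.
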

\begin{proof}
This follows directly from Definition~\ref{definition:aggregation-influence} of aggregation-influence semantics, for which the final strength of any argument depends only on (the strengths of) its parents.
\end{proof}

Subsequently then, an argument should appear in a minimal explanation of \nsconsy\ only if it affects the strength of any of the topic arguments. 
To formalise this, let us define $r_{G,G'}(\arga, \argx, \argy) := r_{G}(\arga, \argx) \lor r_{G}(\arga, \argy) \lor r_{G'}(\arga, \argx) \lor r_{G'}(\arga, \argy)$ as a short-hand (given $\arga, \argx, \argy \in \Args \cup \Args'$), which holds iff either of $\argx$ and $\argy$ is reachable from $\arga$ in either of $G$ and $G'$. 
The next result shows that $\subset$-minimal explanations do not contain arguments from which topic arguments are not reachable.

\begin{proposition}\label{prop:reachability}
    Given an aggregation-influence semantics and $\arga \in \Args \cup \Args'$, if $\neg r_{G,G'}(\arga, \argx, \argy)$, then $\arga \not \in \bigcup_{S \in SX_{\subset_{\min}}(\nscon{\argx}{\argy}) \cup NX_{\subset_{\min}}(\nscon{\argx}{\argy})  \cup CX_{\subset_{\min}}(\nscon{\argx}{\argy})} S$.
\end{proposition}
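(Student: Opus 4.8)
The plan is to reduce all three cases to one ``toggling'' lemma about reversals and then argue by minimality. First, if $\scon{\argx}{\argy}$ holds, then by Propositions~\ref{prop:ssi-a},~\ref{prop:csi-a} and~\ref{prop:nsi-a} each of the three minimal explanation families equals $\{\emptyset\}$, so the union in the statement is $\emptyset$ and there is nothing to prove; hence I would assume throughout that $\nscon{\argx}{\argy}$. The lemma I would establish is: whenever $\neg r_{G,G'}(\arga,\argx,\argy)$, then for \emph{every} $T \subseteq \Args \cup \Args'$ the two reversals $G_{\leftarrow G'}(T)$ and $G_{\leftarrow G'}(T \cup \{\arga\})$ assign $\argx$ and $\argy$ the same final strengths and agree on $\comp{\argx}{\argy}$. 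Intuitively, whether $\arga$ is kept at its $G'$-state or reverted to its $G$-state is invisible to the relation between the two topic arguments, because $\arga$ does not influence either of them.

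Granting this lemma, each case is a short minimality argument. For a $\subset$-minimal SSI explanation $S$ with $\arga \in S$, take $T = (\Args \cup \Args') \setminus S$, so that $T \cup \{\arga\}$ is the complement of $S \setminus \{\arga\}$; the lemma shows that reversing everything but $S$ and reversing everything but $S \setminus \{\arga\}$ induce the same relation between $\argx$ and $\argy$, so $S \setminus \{\arga\}$ is again an SSI explanation, which is moreover non-empty (otherwise $\emptyset$ would be an SSI explanation, contradicting Proposition~\ref{prop:ssi-b}). This contradicts $\subset$-minimality of $S$. The same toggling, now comparing $G_{\leftarrow G'}(C)$ with $G_{\leftarrow G'}(C \setminus \{\arga\})$, shows that the counterfactual condition of Definition~\ref{def:inconsistency-explanations} is preserved when $\arga$ is dropped, so a $\subset$-minimal CSI explanation cannot contain $\arga$ either. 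For NSI, I would show that $N \setminus \{\arga\}$ is still an SSI explanation that meets every SSI explanation: if some SSI explanation $S$ were disjoint from $N \setminus \{\arga\}$ yet (since $N$ meets every SSI explanation) not disjoint from $N$, then necessarily $S \cap N = \{\arga\}$, and by the lemma $S \setminus \{\arga\}$ would be an SSI explanation disjoint from $N$ -- contradicting that $N$ meets all of them. Hence $N \setminus \{\arga\}$ witnesses both defining conditions of an NSI explanation, contradicting $\subset$-minimality of $N$.

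The real work is the toggling lemma, which is where I expect the main obstacle. Its engine is directional connectedness (Corollary~\ref{corollary:aggregation-influence-directionality}, via Principle~\ref{principle:directional-connectedness}): for any reversal $H$, the value $\fs_H(\argx)$ depends only on the restriction of $H$ to $\argx$ and the arguments that reach $\argx$ in $H$. Writing $H = G_{\leftarrow G'}(T)$ and $H' = G_{\leftarrow G'}(T \cup \{\arga\})$, Definition~\ref{def:qbaf-reversal} shows that $H$ and $H'$ can differ only in $\arga$'s initial strength, its presence as a node, and its \emph{outgoing} attacks and supports. It would therefore suffice to prove that $\arga$ reaches neither $\argx$ nor $\argy$ in $H$ or in $H'$ and that the two induced ancestor-subgraphs of $\argx$ (and of $\argy$) coincide; directional connectedness then yields $\fs_H(\argx) = \fs_{H'}(\argx)$ and likewise for $\argy$. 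The delicate step is the reachability bookkeeping: a reversal is a \emph{hybrid} of $G$ and $G'$, so an $\arga$-to-$\argx$ path in $H$ could traverse old edges out of reverted arguments and new edges out of retained ones, and such a mixed path need not be a path in $G$ alone nor in $G'$ alone. The crux is thus to rule out that the hybrid graph introduces a fresh $\arga$-to-topic path beyond those already present in $G$ or in $G'$; I would attack this by a careful edge-by-edge analysis of the two families of reversal edges in Definition~\ref{def:qbaf-reversal}, and, if needed, by restricting to acyclic reversals (the class ${\cal G}_{ac}$, Option~1 of Example~\ref{ex:cycles}) and excluding unchanged arguments (Proposition~\ref{prop:unchanged-exclusion}) to limit which edges can occur along such a path. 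This hybrid-reachability step is where I anticipate any additional hypothesis would need to be isolated.
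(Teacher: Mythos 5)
Your proposal reproduces, in a more explicit form, exactly the strategy of the paper's own proof: the paper likewise combines directional connectedness with $\subset$-minimality, asserting without further argument that if $S$ is an explanation and $S'$ consists of arguments satisfying $\neg r_{G,G'}(\cdot,\argx,\argy)$, then $S \setminus S'$ is again an explanation of the same kind --- which is precisely your toggling lemma. So you have located the crux correctly; the problem is that the crux cannot be established. The toggling lemma is \emph{false}, and with it Proposition~\ref{prop:reachability} itself fails, for exactly the hybrid-path reason you describe. Take $\Args = \Args' = \{\arga,\argb,\argx,\argy\}$ with topic arguments $\argx,\argy$, identical initial strengths $\is(\arga)=\is'(\arga)=\is(\argb)=\is'(\argb)=2$, $\is(\argx)=\is'(\argx)=6$, $\is(\argy)=\is'(\argy)=5$, no supports, $\Att = \{(\argb,\argx)\}$ and $\Att' = \{(\arga,\argb)\}$, under the paper's running sum semantics (an aggregation-influence semantics). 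Then $\arga$ has no outgoing edge in $G$, and in $G'$ it reaches only $\argb$, which is a sink there, so $\neg r_{G,G'}(\arga,\argx,\argy)$; moreover $\fs_G(\argx) = 4 < 5 = \fs_G(\argy)$ while $\fs_{G'}(\argx) = 6 > 5 = \fs_{G'}(\argy)$, so $\nscon{\argx}{\argy}[G][G']$. Now reverse w.r.t.\ $T = \{\argb,\argx,\argy\}$, the complement of $\{\arga\}$: in $G_{\leftarrow G'}(T)$ the old attack $(\argb,\argx)$ is restored because $\argb$ is reverted, while $\arga$, being retained, keeps its new attack $(\arga,\argb)$, producing the hybrid path $\arga \rightarrow \argb \rightarrow \argx$ that exists in neither $G$ nor $G'$. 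There $\fs(\argb) = 0$ and $\fs(\argx) = 6 > 5 = \fs(\argy)$, so the \nsconsy\ persists in this (acyclic) reversal and $\{\arga\} \in SX(\nscon{\argx}{\argy})$; since $\emptyset$ is not an SSI explanation in the presence of \nsconsy\ (Proposition~\ref{prop:ssi-b}), $\{\arga\}$ is a $\subset$-minimal SSI explanation containing an argument that reaches no topic argument in either QBAF. Equally, $G_{\leftarrow G'}(T \cup \{\arga\}) = G$ gives $\fs_G(\argx) < \fs_G(\argy)$, so toggling $\arga$ flips the topic comparison, refuting your lemma directly.

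You should therefore not read this as a repairable omission in your write-up: no proof of the proposition as stated exists, and the paper's own argument breaks at the same step --- its claimed implication, applied with $S = S' = \{\arga\}$ in the example above, would force $\emptyset \in SX(\nscon{\argx}{\argy})$, contradicting Proposition~\ref{prop:ssi-b}; note also that its appeal to directional connectedness only controls strengths in $G'$ and restrictions thereof, never in the hybrid reversals over which the definitions of $SX$, $CX$, and $NX$ actually quantify. What you anticipated as ``an additional hypothesis to be isolated'' is genuinely needed. The statement (and the pruning it licenses in Algorithm~\ref{alg:minSCSI}) becomes sound if one strengthens $\neg r_{G,G'}(\arga,\argx,\argy)$ to require that $\arga$ reach neither topic argument in \emph{any} reversal $G_{\leftarrow G'}(T)$ with $T \subseteq \Args \cup \Args'$ --- under that hypothesis your toggling lemma and all three of your minimality arguments (SSI, CSI, NSI) go through essentially verbatim --- or if one restricts the admissible updates so that hybrid paths cannot arise, e.g.\ by requiring that the outgoing attacks and supports of arguments common to $G$ and $G'$ are unchanged (as in normal-expansion-style updates), in which case reachability in every reversal is bounded by reachability in $G$ or in $G'$. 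In the example above both $\arga$ and $\argb$ have altered outgoing edges, which is exactly what lets the reversal splice an old edge onto a new one.
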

\begin{proof}
    Because every aggregation-influence semantics satisfies the directional connectedness principle (Corollary~\ref{corollary:aggregation-influence-directionality}), for every $S' \subseteq \{\argb | \argb \in \Args \cup \Args', \neg r_{G,G'}(\argb, \argx, \argy) \}$ 
    it holds that $\fs_{G'}(\argx) = \fs_{G'\downarrow_{S'}}(\argx)$ and $\fs_{G'}(\argy) = \fs_{G'\downarrow_{S'}}(\argy)$.
    It follows from the definitions of $SX$, $NX$, and $CX$ (Definition~\ref{def:inconsistency-explanations}) that for $S \subseteq \Args \cup \Args'$, $X \in \{SX, NX_{\subset_{\min}}, CX\}$, if $S \in X(\nscon{\argx}{\argy})$ then $S \setminus S' \in X(\nscon{\argx}{\argy})$.
    Hence, because of the $\subset$-minimality condition in $SX_{\subset_{\min}}$, $NX_{\subset_{\min}}$, and $CX_{\subset_{\min}}$, it follows that $\arga \not \in \bigcup_{S \in SX_{\subset_{\min}}(\nscon{\argx}{\argy}) \cup NX_{\subset_{\min}}(\nscon{\argx}{\argy})  \cup CX_{\subset_{\min}}(\nscon{\argx}{\argy})} S$, as required. 
\end{proof}
In addition to the (un)reachability aspect, let us show that arguments whose initial strengths and outgoing attacks/supports are the same in both the original and the updated framework are never part of any minimal explanation.
\begin{proposition}\label{prop:unchanged-exclusion}
    Let $\arga \in \Args \cap \Args'$ be such that $(\{\arga\} \times \Args) \cap \Att  = (\{\arga\} \times \Args') \cap \Att'$, $(\{\arga\} \times \Args) \cap \Supp  = (\{\arga\} \times \Args') \cap \Supp'$, and $\is(\arga) = \is'(\arga)$.
    Then $\arga \not \in \bigcup_{S \in SX_{\subset_{\min}}(\nscon{\argx}{\argy}) \cup NX_{\subset_{\min}}(\nscon{\argx}{\argy})  \cup CX_{\subset_{\min}}(\nscon{\argx}{\argy})} S$.
\end{proposition}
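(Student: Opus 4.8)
The plan is to reduce everything to one \emph{reversal-invariance} lemma: for the fixed $\arga$ satisfying the three hypotheses and for \emph{every} $S \subseteq \Args \cup \Args'$, the reversals $G_{\leftarrow G'}(S)$ and $G_{\leftarrow G'}(S \setminus \{\arga\})$ are literally the same QBAF, so that adding or deleting $\arga$ from a reversal set changes nothing. First I would dispose of the easy components. Since $\arga \in \Args \cap \Args'$, the set $\Args^*$ is insensitive to whether $\arga \in S$: adding $\arga$ to $S$ leaves $\Args' \cup S$ fixed (as $\arga \in \Args'$) and leaves $S \setminus \Args$ fixed (as $\arga \in \Args$). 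The strength component is equally direct: $\tau^*(\arga)$ equals $\is(\arga)$ when $\arga \in S$ and $\is'(\arga)$ otherwise, and these agree by hypothesis, while for every other argument $\tau^*$ does not consult $\arga$'s membership at all.

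The substantive step, and the main obstacle, is the attack relation (supports being verbatim analogous). For any source distinct from $\arga$, the surviving edges in $\Att^*$ depend only on whether that source lies in $S$ and on $\Args^*$, neither of which moves when we toggle $\arga$. It remains to compare the edges emanating from $\arga$ in the two cases, directly from the formula for $\Att^*$ in Definition~\ref{def:qbaf-reversal}. Writing $A := (\{\arga\}\times\Args)\cap\Att$ and $A' := (\{\arga\}\times\Args')\cap\Att'$, the unlocking observation is that the hypothesis gives $A = A'$, and since $A \subseteq \Att \subseteq \Args\times\Args$, every target of an $\arga$-edge of $G'$ already lies in $\Args$. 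Hence the branch that would treat ``new'' edges differently -- the $\Att'$-edges from $\arga$ into $\Args'\setminus\Args$ -- is empty, so in both the $\arga\in S$ and $\arga\notin S$ cases the surviving $\arga$-edges collapse to exactly $\{(\arga,w)\in A' : w\in\Args^*\}$. Thus $\Att^*$, and likewise $\Supp^*$, is identical across the two reversals, establishing the lemma.

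Granting the lemma, the SSI and CSI exclusions are short. If $S$ is an SSI explanation with $\arga\in S$, then $(\Args\cup\Args')\setminus S$ and $(\Args\cup\Args')\setminus(S\setminus\{\arga\})$ differ only by $\arga$, so by the lemma they induce the same reversal; therefore $S\setminus\{\arga\}$ meets the identical \nsconsy\ and class-membership conditions of Definition~\ref{def:inconsistency-explanations} and is itself an SSI explanation, so $S$ is not $\subset$-minimal. For CSI, the lemma additionally gives $G_{\leftarrow G'}(C)=G_{\leftarrow G'}(C\setminus\{\arga\})$, so the counterfactual condition $\scon{\argx}{\argy}[G][G_{\leftarrow G'}(C)]$ transfers verbatim to $C\setminus\{\arga\}$; combined with the SSI argument, a $\subset$-minimal CSI explanation cannot contain $\arga$.

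Finally, the NSI case needs one extra observation about the ``meets every SSI'' clause. Suppose $N$ is a $\subset$-minimal NSI explanation with $\arga\in N$. By the SSI argument, $N\setminus\{\arga\}$ is again an SSI explanation, so it suffices to show it still meets every SSI explanation. If some SSI explanation $S$ were disjoint from $N\setminus\{\arga\}$, then $S\cap N\subseteq\{\arga\}$; replacing $S$ by $S\setminus\{\arga\}$, which is an SSI explanation by the lemma, produces an SSI explanation disjoint from $N$ itself, contradicting that $N$ meets every SSI explanation. Hence $N\setminus\{\arga\}$ is a strictly smaller set satisfying both NSI clauses, contradicting $\subset$-minimality. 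Therefore $\arga$ belongs to no $\subset$-minimal SSI, CSI, or NSI explanation, as claimed.
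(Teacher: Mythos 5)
Your proposal is correct and follows essentially the same route as the paper's proof: both rest on the reversal-invariance observation that $G_{\leftarrow G'}(S) = G_{\leftarrow G'}(S \setminus \{\arga\})$ for every $S \subseteq \Args \cup \Args'$, and then conclude that removing $\arga$ from any candidate explanation preserves explanation-hood, contradicting $\subset$-minimality. The only difference is one of rigour, in your favour: the paper asserts the invariance lemma as immediate from Definition~\ref{def:qbaf-reversal} and treats all three explanation types uniformly, whereas you actually verify the lemma (correctly isolating that the hypothesis forces the $\arga$-edges into $\Args' \setminus \Args$ to be empty) and supply the extra disjointness argument needed for the ``no SSI explanation disjoint from $N$'' clause in the NSI case, which the paper's uniform treatment silently glosses over.
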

\begin{proof}
It follows from the definition of QBAF reversal (Definition~\ref{def:qbaf-reversal}) that for every $S \subseteq \Args \cup \Args'$ it holds that $G_{\leftarrow G'}(S) = G_{\leftarrow G'}(S \setminus \{\arga\})$.
    If it were the case that, for $X \in \{SX_{\subset_{\min}}, NX_{\subset_{\min}}, CX_{\subset_{\min}}\}$, there was $S \in X(\nscon{\argx}{\argy})$ with $\arga \in S$, then we would have $S' = S \setminus \{\arga\}$ such that $S' \in X(\nscon{\argx}{\argy})$. 
    But this would contradict $\subset$-minimality of $S$. 
    Hence, there is no $S \in X(\nscon{\argx}{\argy})$ with $\arga \in S$. 
    Thus, $\arga \not \in \bigcup_{S \in SX_{\subset_{\min}}(\nscon{\argx}{\argy}) \cup NX_{\subset_{\min}}(\nscon{\argx}{\argy})  \cup CX_{\subset_{\min}}(\nscon{\argx}{\argy})} S$.
\end{proof}
Finally, let us observe that obviously, we can find all $\subset$-minimal explanations by following the order imposed by the set inclusion relation on the powerset of arguments (starting with set inclusion-smaller sets first) and terminate the search as soon as the explanations we have found are properly contained in all the items in the powerset that have not yet been searched.
\begin{corollary}\label{corollary:set-inclusion}
Let $S \subset S' \subseteq \Args \cup \Args'$ and $X \in \{SX_{\subset_{\min}}, NX_{\subset_{\min}}, CX_{\subset_{\min}}\}$. If $S \in X(\nscon{\argx}{\argy})$, then $S' \not \in X(\nscon{\argx}{\argy})$.
\end{corollary}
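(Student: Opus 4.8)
The plan is to reduce the statement to the elementary fact that the $\subset$-minimal elements of any family of sets form an antichain with respect to $\subset$. For each choice of $X$, I would first identify the underlying family $Y$ of which $X(\nscon{\argx}{\argy})$ is precisely the collection of $\subset$-minimal members: for $X = SX_{\subset_{\min}}$ take $Y = SX$; for $X = CX_{\subset_{\min}}$ take $Y = CX$; and for $X = NX_{\subset_{\min}}$ take $Y$ to be the family of all sets $N \subseteq \Args \cup \Args'$ satisfying the first two clauses of the NSI definition (namely $N \in SX(\nscon{\argx}{\argy})$ and no SSI explanation is disjoint from $N$), since by Definition~\ref{def:inconsistency-explanations} the NSI explanations are exactly the $\subset$-minimal such $N$. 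In all three cases, by definition $X(\nscon{\argx}{\argy}) \subseteq Y(\nscon{\argx}{\argy})$, and every member of $X(\nscon{\argx}{\argy})$ is $\subset$-minimal in $Y(\nscon{\argx}{\argy})$.

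The core argument then runs uniformly. Suppose $S \in X(\nscon{\argx}{\argy})$ and $S \subset S'$ with $S' \subseteq \Args \cup \Args'$, and assume for a contradiction that $S' \in X(\nscon{\argx}{\argy})$ as well. Since $S' \in X(\nscon{\argx}{\argy})$, the set $S'$ is $\subset$-minimal in $Y(\nscon{\argx}{\argy})$, so no proper subset of $S'$ lies in $Y(\nscon{\argx}{\argy})$. But $S \in X(\nscon{\argx}{\argy}) \subseteq Y(\nscon{\argx}{\argy})$ and $S \subset S'$, so $S$ is a proper subset of $S'$ belonging to $Y(\nscon{\argx}{\argy})$, contradicting the $\subset$-minimality of $S'$. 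Hence $S' \not\in X(\nscon{\argx}{\argy})$, as required.

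The only point requiring care, and the main (mild) obstacle, is the bookkeeping for the necessary case, where $\subset$-minimality is folded into the definition rather than attached as a separate subscript as it is for $SX$ and $CX$. Here I would make explicit that the appropriate $Y$ is the family defined by the two non-minimality clauses, and confirm that Definition~\ref{def:inconsistency-explanations} indeed specifies $NX_{\subset_{\min}}(\nscon{\argx}{\argy})$ to be exactly its $\subset$-minimal members. Once this identification is in place, the inclusion $X(\nscon{\argx}{\argy}) \subseteq Y(\nscon{\argx}{\argy})$, i.e.\ that a $\subset$-minimal member of a family is itself a member of that family, is immediate, and the uniform contradiction above closes all three cases at once.
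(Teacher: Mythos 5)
Your proof is correct and follows essentially the same route as the paper, which simply observes that the claim is immediate from the $\subset$-minimality conditions in Definition~\ref{def:inconsistency-explanations}; your antichain argument, including the careful identification of the underlying family for the NSI case (where minimality is folded into the definition), is just a fully spelled-out version of that same observation.
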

\begin{proof}
    This follows directly from the $\subset$-minimality conditions of $SX_{\subset_{\min}}$, $NX_{\subset_{\min}}$, and $CX_{\subset_{\min}}$ (Definition~\ref{def:inconsistency-explanations}).
\end{proof}

Based on the analysis, we can design a simple algorithm for identifying minimal sufficient and counterfactual \nsconsy\ explanations of topic arguments $\argx$ and $\argy$, according to the following steps (note that we use algorithm-style notation and in particularly $\leftarrow$ for variable assignments and updates; also see Algorithm~\ref{alg:minSCSI}).
Let us note again that we assume an aggregation-influence semantics and acyclic QBAFs.
\begin{enumerate}
    \item Check if \nsconsy\ exists; if not, return $\{\emptyset\}$:\\
\texttt{if $\scon{\argx}{\argy}$: return $\{\emptyset\}$}.
    \item Assign the empty set to the variable $exps$ that will eventually contain all explanations:\\
    $exps \leftarrow \emptyset$.
    \item Generate a sorted (increasing, by set cardinality) list of explanations based on the set of \emph{potential explanations}: the powerset (excluding the empty set) of the set containing all arguments of both QBAFs, with the exceptions of arguments that cannot reach any of the topic arguments as well as arguments whose strengths and outgoing attackers and supporters have not been changed: \\
    $\textit{potential\_exps} \leftarrow sort(2^{\{(\Args \cup \Args') \setminus (\Args_{\neg r} \cup \Args_{*})\}} \setminus \emptyset)$, where:
    \begin{itemize}
        \item $\Args_{\neg r} := \{\arga | \arga \in \Args \cup \Args', \neg r_{G,G'}(\arga, \argx, \argy)\}$;
        \item $\Args_{*} := \{\arga | \arga \in \Args \cap \Args', (\{\arga\} \times \Args) \cap \Att  = (\{\arga\} \times \Args') \cap \Att', (\{\arga\} \times \Args) \cap \Supp  = (\{\arga\} \times \Args') \cap \Supp', \is(\arga) = \is'(\arga)\}$;
        \item $sort$ is a function that sorts a set of sets into a totally ordered list according to set cardinality in increasing order.
    \end{itemize}
    \item Run a \emph{while} loop over all potential explanations (terminating when all potential explanations have been explored): \\
    \texttt{While $potential\_exps \neq \emptyset$}:
        \begin{enumerate}[i)]
            \item Assign the current potential explanation from the end of the list whilst removing the element: \\
            $\textit{p\_exp} \leftarrow \textit{potential\_exps.pop()}$.
            \item If the potential explanation is not a superset of an already identified explanation, check if the potential explanation is actually an explanation, and if so, add it to the set of explanations: \\
            \texttt{if not $\exists exp \in exps: exp \subset \textit{p\_exp}$ \textbf{AND} $p\_exp$ is an $X$ explanation}, where $X \in \{SSI, CSI\}$ (depending on the type of explanation that is to be generated): $exps \leftarrow exps \cup \{p\_exp\}$.
        \end{enumerate}    
    \item Finally, return the determined explanations: \\
    \texttt{return $exps$.}
\end{enumerate}

Figure~\ref{fig:explanations-search} illustrates the search for $\subset$-minimal SSI explanations for our running example.
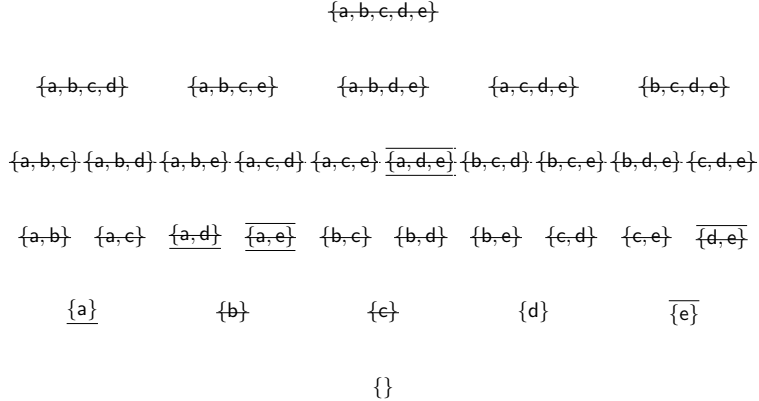
\begin{figure}[!ht]
        \centering
    \begin{tikzpicture}[scale=1,
        basic/.style={scale=0.8, draw=black!0, minimum size=10mm, font=\bfseries},
        red/.style={scale=0.8, minimum size=10mm, draw=black!0, font=\bfseries
        },
        blue/.style={scale=0.8, minimum size=10mm, draw=black!0, font=\bfseries
        },
        violet/.style={scale=0.8, minimum size=10mm, draw=black!0, font=\bfseries,
        },
        ]
        \node[basic]    (empty)    at(4.5,0)  {$\{\}$};
        \node[red]  (a)    at(0.5,1)  {\underline{$\{\arga\}$}};
        \node[basic]  (b)    at(2.5,1)  {\st{$\{\argb\}$}};
        \node[basic]  (c)    at(4.5,1)  {\st{$\{\argc\}$}};
        \node[basic]  (d)    at(6.5,1)  {$\{\argd\}$};
        \node[blue]  (e)    at(8.5,1)  {$\overline{\{\arge\}}$};
        \node[basic]  (ab)    at(0,2)  {\st{$\{\arga, \argb\}$}};
        \node[basic]  (ac)    at(1,2)  {\st{$\{\arga, \argc\}$}};
        \node[red]  (ad)    at(2,2)  {\underline{\st{$\{\arga, \argd\}$}}};
        \node[violet]  (ae)    at(3,2)  {\underline{\st{$\overline{\{\arga, \arge\}}$}}};
        \node[basic]  (bc)    at(4,2)  {\st{$\{\argb, \argc\}$}};
        \node[basic]  (bd)    at(5,2)  {\st{$\{\argb, \argd\}$}};
        \node[basic]  (be)    at(6,2)  {\st{$\{\argb, \arge\}$}};
        \node[basic]  (cd)    at(7,2)  {\st{$\{\argc, \argd\}$}};
        \node[basic]  (ce)    at(8,2)  {\st{$\{\argc, \arge\}$}};
        \node[blue]  (de)    at(9,2)  {\st{$\overline{\{\argd, \arge\}}$}};
        \node[basic]  (abc)    at(0,3)  {\st{$\{\arga, \argb, \argc\}$}};
        \node[basic]  (abd)    at(1,3)  {\st{$\{\arga, \argb, \argd\}$}};
        \node[basic]  (abe)    at(2,3)  {\st{$\{\arga, \argb, \arge\}$}};
        \node[basic]  (acd)    at(3,3)  {\st{$\{\arga, \argc, \argd\}$}};
        \node[basic]  (ace)    at(4,3)  {\st{$\{\arga, \argc, \arge\}$}};
        \node[violet]  (ade)    at(5,3)  {\underline{\st{$\overline{\{\arga, \argd, \arge\}}$}}};
        \node[basic]  (bcd)    at(6,3)  {\st{$\{\argb, \argc, \argd\}$}};
        \node[basic]  (bce)    at(7,3)  {\st{$\{\argb, \argc, \arge\}$}};
        \node[basic]  (bde)    at(8,3)  {\st{$\{\argb, \argd, \arge\}$}};
        \node[basic]  (cde)    at(9,3)  {\st{$\{\argc, \argd, \arge\}$}};
        \node[basic]  (abcd)    at(0.5,4)  {\st{$\{\arga, \argb, \argc, \argd\}$}};
        \node[basic]  (abce)    at(2.5,4)  {\st{$\{\arga, \argb, \argc, \arge\}$}};
        \node[basic]  (abde)    at(4.5,4)  {\st{$\{\arga, \argb, \argd, \arge\}$}};
        \node[basic]  (acde)    at(6.5,4)  {\st{$\{\arga, \argc, \argd, \arge\}$}};
        \node[basic]  (bcde)    at(8.5,4)  {\st{$\{\argb, \argc, \argd, \arge\}$}};
        \node[basic]  (abcde)    at(4.5,5)  {\st{$\{\arga, \argb, \argc, \argd, \arge\}$}};

    \end{tikzpicture} 
\caption{Searching for $\subset$-minimal SSI explanations in Example~\ref{ex:intro}. The figure lists all sets in $2^{\Args \cup \Args'}$. Our SSI explanations $\{\arga\}$ and $\{\arge\}$ are highlighted in red and blue, respectively. All sets that contain $\argb$ or $\argc$ can be excluded, because neither the initial strengths of these arguments nor their outgoing attacks or supports have been changed (highlighted by strike-through). All other sets that can be excluded from the search because of only $\{\arga\}$ / $\{\arge\}$ are underlined/overlined (with strike-through).
The remaining sets (as well as our explanation sets) have to be evaluated with respect to their ability to explain the \nsconsy.}
\label{fig:explanations-search}
\end{figure}
As we can see in Steps 3 and 4, the search space\footnote{Here, we use the term \emph{search space} to refer to the number of sets of arguments for which we need to check whether they are explanations.} for explanations is reduced from $|2^{\Args \cup \Args'}|$ to $|2^{(\Args \cup \Args') \setminus (\Args_{\neg r} \cup \Args_{*})} \setminus \Args_{**}|$, where $\Args_{**} := \{S_{*} | S_{*} \subseteq \Args \cup \Args', \exists S \in X(\nscon{\argx}{\argy})$ s.t.\ $S \subset S_{*}\}$ for $X \in \{ SX, CX \}$. 
Note that technically, the empty set is excluded as well, as long as it is clear that we have an actual \nsconsy.
The algorithm can be applied for determining all sufficient and counterfactual \nsconsy\ explanations.

\begin{algorithm}[ht!]
\caption{Find all $\subset$-minimal SSI or CSI explanations}\label{alg:minSCSI}
\footnotesize{
\begin{algorithmic}[1]
\Require QBAFs $G = \QBAF$, $G' = \QBAFF$, arguments $\argx, \argy \in \Args \cap \Args'$, X $\in \{$SSI, CSI$\}$
\Ensure $exps \subseteq 2^{\Args \cup \Args'}$
    \Function{minimalSXCX}{$G, G', \argx, \argy$, X}
\If {$\scon{\argx}{\argy}$}
\State \textbf{return } $\{\emptyset\}$
\EndIf
\State $exps \leftarrow \emptyset$
\State $\textit{potential\_exps} \leftarrow sort(2^{\{(\Args \cup \Args') \setminus (\Args_{\neg r} \cup \Args_{*})\}} \setminus \emptyset)$
\While {$potential\_exps \neq \emptyset$}
\State $\textit{p\_exp} \leftarrow \textit{potential\_exps.pop()}$
\If {not $\exists exp \in exps: exp \subset \textit{p\_exp}$ \textbf{AND} $p\_exp$ is an X explanation}
\State $exps \leftarrow exps \cup \{p\_exp\}$
\EndIf
\EndWhile
\State \textbf{return } $\textit{exps}$
\EndFunction
\end{algorithmic}
}
\end{algorithm}
Let us highlight that Lines 6-9 of Algorithm~\ref{alg:minSCSI} resemble the algorithm used for frequent item set computation in association rule mining~\cite{DBLP:conf/vldb/AgrawalS94}.

For finding NSI explanations we need to extend the algorithm above. 
Again, we can guide our search by observing some formal properties.
In particular, we know already how to find minimal SSI explanations and we want to utilise this during our search for NSI explanations.

First, we can show that in case of \nsconsy, every NSI explanation meets every minimal SSI explanation.
\begin{proposition}%
\label{prop:NSI-meet}
 If $\nscon{\argx}{\argy}$ then $\forall N \in NX_{\subset_{\min}}(\nscon{\argx}{\argy})$, $\forall S \in SX_{\subset_{\min}}(\nscon{\argx}{\argy})$ we have $N \cap S \neq \emptyset$.
\end{proposition}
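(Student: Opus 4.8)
The plan is to derive the claim almost directly from the definition of a necessary explanation (Definition~\ref{def:inconsistency-explanations}), since the ``meeting'' property is essentially the second defining condition of NSI explanations, phrased in terms of disjointness rather than intersection.

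First I would fix an arbitrary $N \in NX_{\subset_{\min}}(\nscon{\argx}{\argy})$ and an arbitrary $S \in SX_{\subset_{\min}}(\nscon{\argx}{\argy})$. Note that $\subset$-minimal SSI explanations are in particular SSI explanations, so $S \in SX(\nscon{\argx}{\argy})$. By the definition of an NSI explanation, $N$ satisfies the condition that there is no $S' \subseteq (\Args \cup \Args') \setminus N$ with $S' \in SX(\nscon{\argx}{\argy})$; that is, no SSI explanation is contained in the complement of $N$.

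The key observation is that, for any $S \subseteq \Args \cup \Args'$, being a subset of $(\Args \cup \Args') \setminus N$ is equivalent to being disjoint from $N$, i.e.\ $S \cap N = \emptyset$. So I would argue by contradiction: suppose $N \cap S = \emptyset$. Then $S \subseteq (\Args \cup \Args') \setminus N$, yet $S \in SX(\nscon{\argx}{\argy})$, which directly contradicts the second defining condition of $N$ as an NSI explanation. Hence $N \cap S \neq \emptyset$, as required. Since $N$ and $S$ were arbitrary, this establishes the claim for every such pair.

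I do not expect any genuine obstacle: the proof is a direct unpacking of the definition, and the only point requiring care is recognising the equivalence between disjointness from $N$ and containment in the complement $(\Args \cup \Args') \setminus N$. The hypothesis $\nscon{\argx}{\argy}$ guarantees, via Proposition~\ref{prop:nsi-b}, that $NX_{\subset_{\min}}(\nscon{\argx}{\argy})$ is non-empty (and likewise $SX_{\subset_{\min}}(\nscon{\argx}{\argy})$ by Corollary~\ref{prop:ssi-d}), so the universally quantified statement is non-vacuous; but it would hold vacuously even otherwise.
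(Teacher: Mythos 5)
Your proof is correct and takes essentially the same route as the paper's: both argue by contradiction that disjointness of $N$ and $S$ would place the SSI explanation $S$ inside $(\Args \cup \Args') \setminus N$, directly violating the second defining condition of NSI explanations in Definition~\ref{def:inconsistency-explanations}. The only cosmetic difference is that the paper first invokes Corollaries~\ref{prop:ssi-c} and~\ref{prop:nsi-a} to note $N \neq \emptyset$ and $S \neq \emptyset$, whereas you correctly observe this non-emptiness is not needed for the contradiction itself.
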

\begin{proof}
    Let $\nscon{\argx}{\argy}$, $N \in NX_{\subset_{\min}}(\nscon{\argx}{\argy})$ and $S \in SX_{\subset_{\min}}(\nscon{\argx}{\argy})$. Because $\nscon{\argx}{\argy}$, it follows from Corollaries~\ref{prop:ssi-c}~and~\ref{prop:nsi-a} that $N \neq \emptyset$ and $S \neq \emptyset$.
    Now, assume for a contradiction that $N \cap S = \emptyset$.
    This establishes that $S \subseteq (\Args \cup \Args') \setminus N$, which together with $S$ being an SSI explanation contradicts $N$ being an NSI explanation (see Definition~\ref{def:inconsistency-explanations}). 
    By contradiction, $N \cap S \neq \emptyset$, and since $S$ and $N$ were arbitrary, this proves the proposition. 
\end{proof}

Also, we know that an NSI explanation does not contain any arguments that are not contained in some minimal SSI explanation.
\begin{proposition}\label{prop:nsi-minimal-ssi}
   $\forall N \in NX_{\subset_{\min}}(\nscon{\argx}{\argy})$ it holds that $N \subseteq \bigcup_{S \in SX_{\subset_{\min}}(\nscon{\argx}{\argy})} S$.
\end{proposition}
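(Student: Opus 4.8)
The plan is to argue by contradiction, leaning on the $\subset$-minimality clause in the definition of an NSI explanation (Definition~\ref{def:inconsistency-explanations}). Write $U := \bigcup_{S \in SX_{\subset_{\min}}(\nscon{\argx}{\argy})} S$ for the union of all $\subset$-minimal SSI explanations, and fix an arbitrary $N \in NX_{\subset_{\min}}(\nscon{\argx}{\argy})$. If $\scon{\argx}{\argy}$, then $N = \emptyset$ and $U = \emptyset$, so the claim is trivial; hence assume $\nscon{\argx}{\argy}$. Suppose towards a contradiction that $N \not\subseteq U$ and pick some $\arga \in N \setminus U$, i.e.\ an argument contained in no $\subset$-minimal SSI explanation. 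The whole argument then reduces to showing that $N \setminus \{\arga\}$ still satisfies the two non-minimality conditions of an NSI explanation, since this contradicts the $\subset$-minimality of $N$.

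The easy half is re-establishing the ``meeting'' condition for $N \setminus \{\arga\}$. For every $S \in SX_{\subset_{\min}}(\nscon{\argx}{\argy})$ we have $\arga \notin S$ (because $\arga \notin U$), hence $(N \setminus \{\arga\}) \cap S = N \cap S$, which is non-empty by Proposition~\ref{prop:NSI-meet}. Since $\Args \cup \Args'$ is finite, every SSI explanation contains some $\subset$-minimal SSI explanation; therefore $N \setminus \{\arga\}$ meets every SSI explanation, i.e.\ no SSI explanation is disjoint from it. This discharges the second bullet of the NSI definition.

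The hard half, and the step I expect to be the main obstacle, is showing that $N \setminus \{\arga\}$ is itself an SSI explanation. Unwinding the definition, this amounts to proving that the reversal $G_{\leftarrow G'}(((\Args \cup \Args') \setminus N) \cup \{\arga\})$, which differs from the reversal witnessing that $N$ is an SSI explanation only in that $\arga$ is additionally reverted, still yields $\nscon{\argx}{\argy}$. I would approach this by fixing a $\subset$-minimal SSI explanation $S_0 \subseteq N$ (one exists because $N$ is an SSI explanation and the family of SSI explanations is finite); then $\arga \notin S_0$, so $S_0 \subseteq N \setminus \{\arga\}$, and the task becomes to show that keeping the changes in $N \setminus \{\arga\}$ cannot cancel the \nsconsy\ already forced by $S_0$. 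This cannot be a purely combinatorial fact: abstractly one can write down a family of candidate ``SSI'' sets (e.g.\ minimal ones $\{\arga\},\{\argb\}$ with $\{\arga,\argb\}$ not sufficient but $\{\arga,\argb,\argc\}$ sufficient) for which a minimal NSI escapes $U$, so the argument must genuinely use the structural assumptions of Section~\ref{implementation} (acyclicity and an aggregation-influence semantics, with its directional connectedness, Corollary~\ref{corollary:aggregation-influence-directionality}) to rule such pathologies out. Informally, the content to be extracted is that an argument lying in no $\subset$-minimal SSI explanation is never \emph{essential} to the inconsistency of any SSI explanation and may therefore be pruned from it.

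Granting that step, $N \setminus \{\arga\}$ is an SSI explanation that also meets every SSI explanation, so it is a proper subset of $N$ satisfying the first two conditions of Definition~\ref{def:inconsistency-explanations}; this contradicts the $\subset$-minimality of $N$. Hence no such $\arga$ exists and $N \subseteq U$, as required. The bookkeeping in the first, second and final paragraphs is routine; essentially all of the mathematical content sits in the inessentiality/pruning argument of the third paragraph, which is the only place where more than the definitions and the cited structural principles is needed.
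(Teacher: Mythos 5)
Your strategy is essentially the paper's own: the same case split, the same contradiction against $\subset$-minimality, and the same use of Proposition~\ref{prop:NSI-meet} (plus finiteness of $\Args \cup \Args'$) to re-establish the ``no disjoint SSI explanation'' condition after pruning; the only cosmetic difference is that the paper prunes all of $N \setminus U$ in one step, setting $N^* \coloneqq N \cap S^*$ (its $S^*$ is your $U$), rather than removing one element at a time. The crux is also the same: one must show that the pruned set is still an SSI explanation. You explicitly leave that step unproven (``granting that step''), so as it stands your proposal has a genuine gap, and it sits exactly where you said it does.

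What is instructive is how the paper discharges that step: with precisely the combinatorial inference you declined to make. It argues that if $N^*$ were not an SSI explanation, then $N$ could not be one either, ``because adding arguments that are not in any minimal SSI explanation never turns $N^*$ into an SSI explanation, since it would continue to fail to be a superset of any minimal SSI explanation''. This tacitly identifies ``being an SSI explanation'' with ``containing some $\subset$-minimal SSI explanation''. Only one direction of that identification holds (every SSI explanation contains a minimal one, by finiteness); the converse is refuted by the paper's own running example: in Example~\ref{ex:inconsistency-explanations}, $\{\arga\}$ is a $\subset$-minimal SSI explanation, yet $\{\arga,\argd\} \supseteq \{\arga\}$ is not an SSI explanation. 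So your diagnosis that this step ``cannot be a purely combinatorial fact'' is correct -- and the paper's own proof founders on it.

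Unfortunately, your fallback hope -- that acyclicity plus an aggregation-influence semantics rules the pathology out -- also fails: your abstract configuration is realizable with the paper's own illustrative sum semantics on acyclic QBAFs, so the missing step (and with it Proposition~\ref{prop:nsi-minimal-ssi} itself) is not merely unproven but false. Concretely, let $\argx,\argy$ be the topics and let $G$ and $G'$ share the arguments $\argx,\argy,\argz,\arga,\argb,\argc$ with $\is(\argx)=0$, $\is(\argy)=1$, $\is(\argz)=0$, $\is(\arga)=\is(\argb)=\is(\argc)=1$ in both; $G$ has the single attack $(\argz,\argb)$, while $G'$ additionally has supports $(\arga,\argx),(\arga,\argz),(\argb,\argx)$ and attacks $(\arga,\argb),(\argc,\arga)$. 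Writing $G_S$ for the reversal $G_{\leftarrow G'}((\Args\cup\Args')\setminus S)$ that keeps exactly the changes of $S$, direct computation gives: $\fs_{G}(\argx)=0<1=\fs_{G}(\argy)$; in $G_{\{\arga\}}$ and in $G_{\{\argb\}}$ the final strengths of $\argx$ and $\argy$ are both $1$ (\nscons); $G_{\{\argc\}}$ and $G_{\{\arga,\argb\}}$ leave $\fs(\argx)=0<1=\fs(\argy)$ (\scons\ -- in the latter, $\arga$'s attack on $\argb$ exactly cancels $\argb$'s support of $\argx$); and in $G'=G_{\{\arga,\argb,\argc\}}$ both topics again have strength $1$ (\nscons, since $\argc$ neutralises $\arga$ and thereby restores $\argb$). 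Hence $SX_{\subset_{\min}}(\nscon{\argx}{\argy}) = \{\{\arga\},\{\argb\}\}$, so $U=\{\arga,\argb\}$, which is \emph{not} an SSI explanation; meanwhile $\{\arga,\argb,\argc\}$ is an SSI explanation, meets every SSI explanation, and is $\subset$-minimal with these two properties (its only proper subset meeting every SSI explanation is $\{\arga,\argb\}$). Thus $\{\arga,\argb,\argc\}$ is a $\subset$-minimal NSI explanation with $\{\arga,\argb,\argc\} \not\subseteq U$. The upshot: the step you flagged is the real obstruction, the paper's proof glosses over it with an invalid inference, and no proof can close it -- a correct treatment must either weaken the proposition or impose (and actually verify) semantic conditions under which pruning arguments outside $U$ preserves SSI status.
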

\begin{proof}
    First consider the case $\scon{\argx}{\argy}$. By definition of SSI and NSI explanations (Definition~\ref{def:inconsistency-explanations}), 
    $\emptyset$ is a unique SSI and hence NSI explanation, and it is a subset of any set, which proves the proposition for this case.
    For the case $\nscon{\argx}{\argy}$, let us define $S^* \coloneqq \bigcup_{S \in SX_{\subset_{\min}}(\nscon{\argx}{\argy})} S$. Suppose for a contradiction that for some $\emptyset \neq N \in NX_{\subset_{\min}}(\nscon{\argx}{\argy})$, it holds that $N \not \subseteq S^*$, i.e.\ that at least some element from $N$ is not in any $\subset$-minimal SSI explanation. 
    We will show this leads to a contradiction to $\subset$-minimality of $N$ as an NSI explanation. 

    Recall that an NSI explanation is an SSI explanation $N'$ for which it holds that
    $\nexists S \subseteq (\Args \cup \Args') \setminus N \text{ s.t.\ } S \in SX(\nscon{\argx}{\argy}[G][G'])$,
    and $N'$ is $\subset$-minimal such set. 
    Now, let us remove $N\setminus S^*$ from $N$: 
    define $N^* := N\setminus(N\setminus S^*) = N \cap S^*$. 
    Since $N$ meets every $S \in SX_{\subset_{\min}}$ (by Proposition~\ref{prop:NSI-meet}),  
    and since $(N \setminus N^*) \cap S^* = \emptyset$,
    we know that $N^*$ (still) meets every $S \in SX_{\subset_{\min}}$. 
    Hence, it holds that
    $\nexists S \subseteq (\Args \cup \Args') \setminus N^* \text{ s.t.\ } S \in SX(\nscon{\argx}{\argy}[G][G'])$.
    Now, as $N^* \subset N$, to contradict the minimality of $N$ as an NSI explanation, we merely need to show that $N^*$ is an SSI explanation itself. 
    
    Suppose for a contradiction that $N^*$ is not an SSI explanation. 
    Then again, as $(N \setminus N^*) \cap S^* = \emptyset$, and since $S^*$ is the union of all $\subset$-minimal SSI explanation, this would mean $N$ cannot be an SSI, either (because adding arguments that are not in any minimal SSI explanation never turns $N^*$ into an SSI explanation, since it would continue to fail to be a superset of any minimal SSI explanation). 
    So this contradicts $\subset$-minimality of $N$ as an NSI explanation, as required. 
    Hence, there is no non-empty $N \in NX_{\subset_{\min}}(\nscon{\argx}{\argy})$ with $N \not \subseteq S^*$. 
    It thus follows that $\forall N \in NX_{\subset_{\min}}(\nscon{\argx}{\argy})$ it holds that $N \subseteq \bigcup_{S \in SX_{\subset_{\min}}(\nscon{\argx}{\argy})} S$.
\end{proof}
Finally, we observe that for every minimal SSI explanation, an NSI explanation must contain at least one argument of the SSI explanation.
\begin{corollary}\label{corollary:nsi-contains-ssi}
   $\forall N \in NX_{\subset_{\min}}(\nscon{\argx}{\argy})$ it holds that $\forall S \in SX_{\subset_{\min}}(\nscon{\argx}{\argy})$, $\exists \arga \in S$ s.t. $\arga \in N$.
\end{corollary}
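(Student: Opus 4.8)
The plan is to recognise that this corollary is, essentially, a logical reformulation of Proposition~\ref{prop:NSI-meet}. Observe that the phrase ``$\exists \arga \in S$ s.t.\ $\arga \in N$'' asserts precisely the existence of an element lying in both $S$ and $N$, which is the same as saying $S \cap N \neq \emptyset$. Thus the content of the corollary coincides with the conclusion of Proposition~\ref{prop:NSI-meet}, only expressed in terms of a witnessing common argument rather than a non-empty intersection.

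First I would split on whether $\nscon{\argx}{\argy}$ holds. In the strength-inconsistent case, I would simply invoke Proposition~\ref{prop:NSI-meet}, which guarantees $N \cap S \neq \emptyset$ for every $N \in NX_{\subset_{\min}}(\nscon{\argx}{\argy})$ and every $S \in SX_{\subset_{\min}}(\nscon{\argx}{\argy})$. Unpacking the non-emptiness of the intersection then yields some $\arga \in S$ with $\arga \in N$, which is exactly the required existential statement.

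The only genuinely delicate point --- and hence the step I would treat most carefully --- is the strength-consistent case. If $\scon{\argx}{\argy}$, then by Corollary~\ref{prop:ssi-c} and Proposition~\ref{prop:nsi-a} we have $SX_{\subset_{\min}}(\nscon{\argx}{\argy}) = NX_{\subset_{\min}}(\nscon{\argx}{\argy}) = \{\emptyset\}$, so the only available choice is $S = N = \emptyset$, for which no common argument exists. I would therefore read the corollary as implicitly conditioned on $\nscon{\argx}{\argy}$ (the intended and only interesting regime, matching the hypothesis of Proposition~\ref{prop:NSI-meet}), or else explicitly note that it is understood to hold under strength inconsistency. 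Beyond this bookkeeping about the degenerate case, there is no real obstacle: the result is a direct corollary of Proposition~\ref{prop:NSI-meet}.
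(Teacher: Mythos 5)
Your proposal is correct, and its core is the same argument the paper uses: the paper proves the corollary by supposing $N$ and $S$ share no argument, noting this makes them disjoint, and concluding that $S \in SX_{\subset_{\min}}(\nscon{\argx}{\argy})$ with $S \subseteq (\Args \cup \Args') \setminus N$ violates the second condition in the definition of an NSI explanation (Definition~\ref{def:inconsistency-explanations}). That is exactly the contradiction that proves Proposition~\ref{prop:NSI-meet}, so your route of simply citing that proposition and unpacking $N \cap S \neq \emptyset$ into a witnessing $\arga$ is the same proof, one citation removed, and arguably tidier. Where you genuinely add something is the strength-consistent case: the paper's proof is silent about it, yet under Corollary~\ref{prop:ssi-c} and Proposition~\ref{prop:nsi-a} one has $SX_{\subset_{\min}}(\nscon{\argx}{\argy}) = NX_{\subset_{\min}}(\nscon{\argx}{\argy}) = \{\emptyset\}$, so with $N = S = \emptyset$ the literal statement ``$\exists \arga \in S$ s.t.\ $\arga \in N$'' fails. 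Your reading of the corollary as implicitly conditioned on $\nscon{\argx}{\argy}$ (matching the hypothesis of Proposition~\ref{prop:NSI-meet}) is the right repair, and your flagging of this degenerate case is a legitimate catch rather than a gap in your own argument.
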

\begin{proof}
    The proof follows directly from the definitions of $\subset$-min SSI and NSI explanations (Definition~\ref{def:inconsistency-explanations}): suppose for a contradiction that $\exists N \in NX_{\subset_{\min}}(\nscon{\argx}{\argy})$ such that $\exists S \in SX_{\subset_{\min}}(\nscon{\argx}{\argy})$ and $\nexists \arga \in S$ s.t. $\arga \in N$: then $N$ and $S$ are disjoint, which violates the Definition of a $\subset$-min NSI, thus establishing the contradiction.
\end{proof}

These results can be utilised when implementing an algorithm that, given two QBAFs and two topic arguments that occur in both QBAFs, explain \nsconsy\ (or \sconsy).
The algorithm consists of the following steps (see Algorithm~\ref{alg:minNCI}).
\begin{enumerate}
    \item Check if \nsconsy\ exists; if no, return $\{\emptyset\}$:\\
    \texttt{if $\scon{\argx}{\argy}$: return $\{\emptyset\}$}.
    \item Determine the $\subset$-minimal SSI explanations using Algorithm~\ref{alg:minSCSI}: \\
    $\textit{mssis} \gets \textsc{minimalSXCX}(G, G', \argx, \argy, \textnormal{SSI})$.
    \item Generate a sorted (increasing, by set cardinality) list of explanations based on the set of \emph{potential explanations}: the powerset of the union of all $\subset$-minimal SSI explanations, but from this powerset all sets that have an empty intersection with at least one $\subset$-minimal SSI explanation are excluded: \\
    $\textit{potential\_exps} \gets sort(2^{\bigcup_{S \in mssis} S} \setminus \{S' | S' \in 2^{\bigcup_{S \in mssis} S}, \exists S'' \in mssis, S' \cap S'' = \emptyset\})$.
    \item Assign the empty set to the variable $exps$ that will eventually contain all explanations:\\
    $exps \leftarrow \emptyset$.
    \item Run a \emph{while} loop over all potential explanations (terminating when all potential explanations have been explored): \\
    \texttt{While $potential\_exps \neq \emptyset$}:
        \begin{enumerate}[i)]
            \item Assign the current potential explanation from the end of the list whilst removing the element: \\
            $\textit{p\_exp} \leftarrow \textit{potential\_exps.pop()}$.
            \item If the potential explanation is not a superset of an already identified explanation, check if the potential explanation is actually an explanation, and if so, add it to the set of explanations\footnote{The (non-)existence checks are carried out with simple for-loops. Also, let us observe that $\nexists mssi \in mssis: mssi \subseteq (\Args \cup \Args') \setminus p\_exp$ iff $\nexists S' \subseteq (\Args \cup \Args') \setminus p\_exp: S'$ is an SSI explanation; hence, we can rely on the previously determined set of $\subseteq$-min SSI explanations.}: \\
            \texttt{if $p\_exp$ is an SSI explanation \textbf{AND} not $\exists exp \in exps: exp \subset \textit{p\_exp}$ AND not $\exists mssi \in mssis: mssi \subseteq (\Args \cup \Args') \setminus p\_exp$: $exps \leftarrow exps \cup \{p\_exp\}$}. 
        \end{enumerate}
    \item Finally, return the determined explanations: \\
    \texttt{return $exps$.}
\end{enumerate}

\begin{algorithm}[ht!]
\caption{Find all $\subset$-minimal NSI explanations}\label{alg:minNCI}
\footnotesize{
\begin{algorithmic}[1]
\Require QBAFs $G = \QBAF$, $G' = \QBAFF$, arguments $\argx, \argy \in \Args \cap \Args'$
\Ensure $exps \subseteq 2^{\Args \cup \Args'}$
    \Function{minimalNX}{$G, G', \argx, \argy$}
\If {$\scon{\argx}{\argy}$}
\State \textbf{return } $\{\emptyset\}$
\EndIf
\State $\textit{mssis} \gets \textsc{minimalSXCX}(G, G', \argx, \argy, \textnormal{SSI})$
\State $\textit{potential\_exps} \gets sort(2^{\bigcup_{S \in mssis} S} \setminus \{S' | S' \in 2^{\bigcup_{S \in mssis} S}, \exists S'' \in mssis, S' \cap S'' = \emptyset\})$
\State $\textit{exps} \gets \emptyset$
\While {$potential\_exps \neq \emptyset$}
\State $\textit{p\_exp} \leftarrow \textit{potential\_exps.pop()}$
\If {$p\_exp_{SX [\nscon{\argx}{\argy}]}$ \textbf{AND} \\
\quad \quad \quad \quad \textbf{not} $\exists exp \in exps: exp \subset \textit{p\_exp}$ \textbf{AND} \\ 
\quad \quad \quad \quad \textbf{not} $\exists mssi: mssis: mssi \subseteq (\Args \cup \Args') \setminus S$}
\State $exps \leftarrow exps \cup \{p\_exp\}$
\EndIf
\EndWhile
\State \textbf{return } $\textit{exps}$
\EndFunction
\end{algorithmic}
}
\end{algorithm}

\subsection{Implementation}
\label{sec:empirical}
The algorithms for determining strength inconsistency explanations have been implemented, alongside a tool that allows for the modelling of and inference from acyclic QBAFs using the semantics from Table~\ref{table:semanticsExamples}, as well as the basic semantics that we use for demonstrating purposes.
The tool is implemented in C with Python bindings, combining a performant programming language with one that is easy to use.
The source code, alongside documentation, tests, and examples, is available at \url{http://s.cs.umu.se/t6xfz2}.

\begin{figure}[!ht]
    \centering
    \subfloat[Average final strength computation time (in seconds, y-axis) per number of arguments (x-axis).\label{subfig:eval-fs}]{
        \includegraphics[width=0.46\textwidth]{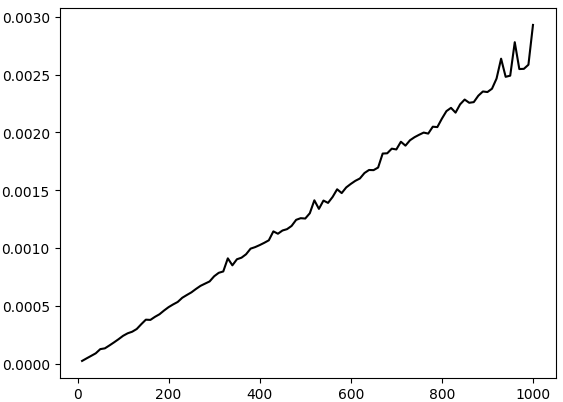}
    }
    \hspace{5pt}
    \subfloat[Average explanation computation time (in seconds, y-axis) per number of arguments (x-axis).\label{subfig:eval-avg}]{
        \includegraphics[width=0.46\textwidth]{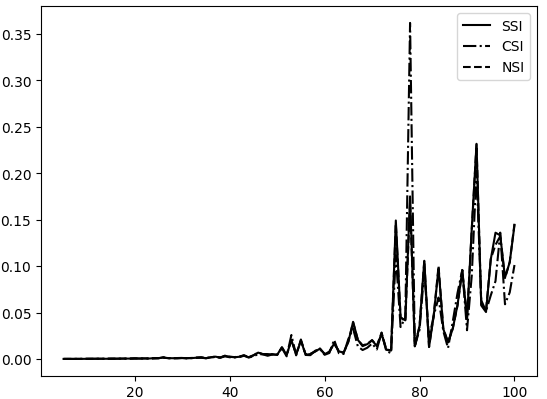}
    }
\caption{A basic empirical evaluation provides an intuition of the implementation's capabilities and limitations. Final strength computation is somewhat fast and scales well; computing all SSI, CSI, or NSI explanations can take substantial time for frameworks that approach 100 arguments.}
\label{fig:eval}
\end{figure}

In order to provide an intuition of how the implementation performs, we conducted a basic empirical evaluation.
The experiments were run using a Google Colab setup supported by 25.5GB RAM and four 2.20 GHz Intel Xeon CPUs with 56320 KB cache; the applied semantics was quadratic energy semantics.
As a starting point, we computed and plotted the average time for determining all final strengths of QBAFs with 5 to 1000 arguments and an average of 3 randomly generated outgoing edges per argument (Figure~\ref{subfig:eval-avg}). Per QBAF size (in number of arguments), we generated 50 QBAFs.
The results indicate that computing final strengths is reasonably fast and increases approximately linearly, even if the number edges per argument are somewhat high and the QBAFs are somewhat large.
Then, we computed and plotted the average time for determining all SSI, CSI and NSI explanations (per explanation type) for QBAFs with 5 to 100 arguments, an average of 1.1 of initial outgoing edges per argument\footnote{Note that then, the generated QBAFs have more edges than a tree.} and update operations that amount to approximately 20\% of the nodes (Figure~\ref{subfig:eval-avg}). The changes are roughly evenly distributed among the five types \emph{strength update}, \emph{edge removal}, \emph{edge addition}, \emph{argument removal}, and \emph{argument addition}, the latter with on average more outgoing than incoming edges per added argument. Again, per QBAF size (in number of arguments), we generated 50 QBAFs.
The findings show that explanation generation is reasonably fast for QBAFs of smaller sizes that are not particularly densely connected; i.e., the implemented tool can presumably handle argumentation graphs that model the statements and relationships in ``human-like'' argumentation dialogues reasonably well. During the experiment, less than 0.1\% of explanations had cyclic reversals as potential explanations\footnote{While we can guarantee that acyclic explanations exist (see Subsection~\ref{sec:implementation-analysis}), during the search for potential explanations, we may encounter cyclic QBAFs, which we then disregard. Our results indicate that at least for the (relatively sparse) QBAFs we have generated for our empirical evaluation, this happens rather rarely.}. This indicates that our handling of cyclic reversals given an acyclic initial QBAF and its acyclic update has little impact on the explanations that are generated, at least for QBAFs that are not particularly densely connected.
However, the findings also indicate that drawing inferences from substantially denser or larger QBAFs is costly (and, based on our experiences, occupies substantial amounts of working memory).
The maximal time needed for computing explanations was around 12.5 seconds, for the CSI explanations that are reflected by the spike in the graph at around 78 arguments.
Clearly, in order to service ambitious envisioned use cases that require large and dense argumentation frameworks, substantial improvements are required, at least if all explanation sets are to be computed.
For example, one could approximate minimal explanations by sampling potential explanations, similarly to how sampling is applied in the context of Shapley values~\cite{CASTRO20091726}, which are often used in explainable AI.
Other performance improvements may be achievable by relatively straightforward engineering-oriented changes of the ``research code'' used for the evaluation: \emph{e.g.}, the way the set of potential explanations is generated and maintained can presumably be substantially improved in terms of memory efficiency.
We consider these directions, as well as the comparison of our basic QBAF solver with existing benchmarks~\cite{DBLP:conf/kr/Potyka18}, interesting future work.

\section{Discussion}
\label{discussion}
Before concluding the paper, let us illustrate how our explanation approach can be applied to abstract argumentation, the arguably most foundational formal argumentation approach (Subsection~\ref{applicability}) and position our work in the context of related research (Subsection~\ref{related}).
\subsection{Applicability to Abstract Argumentation}
\label{applicability}
Our approach to generate change explanations is novel\footnote{See below for a discussion of related research.} even when considering other graph-based formal argumentation approaches and can be straightforwardly applied to at least some of them. The arguably best-known formal argumentation approach is abstract argumentation~\cite{dung1995acceptability}.
In abstract argumentation, Argumentation Frameworks (AFs) are directed graphs (without weighted nodes) and the only relation is the attack relation.
Based on the nodes and their attack relation, sets of nodes that can be considered jointly acceptable -- so-called \emph{extensions} -- are inferred by abstract argumentation \emph{semantics}. For example, given the graph $F = (\{\arga, \argb, \argc\}, \{(\argc, \argb), (\argb, \arga)\})$, the only extension typical \emph{semantics}\footnote{See~\cite{baroni2018abstract} for a survey of abstract argumentation semantics.} infer is $\{\arga, \argc\}$: although $\argb$ attacks $\arga$, $\argb$ is, in turn, attacked by the unattacked argument $\argc$, which thus reinstates $\arga$.
In order to allow for more expressivity, abstract argumentation has been extended in several ways, e.g.\ to support an additional \emph{preference relation}~\cite{Amgoud2002}, or to resolve conflicts between logical statements~\cite{Dung:Kowalski:Toni:2006}.
In order to apply our change explanation approach to abstract argumentation, we merely need to map AFs to QBAFs and their inference statuses to final strengths (note that this means we disregard the notion of an extension).
For this, we proceed as follows.
\begin{itemize}
    \item We define $\mathbb{I} := \{n, c, s \}$ and stipulate that $\preceq$ is the transitive and reflexive closure of $\{(n, c), (c, s)\}$.
    \item We map the AF $(AR, AT)$ to the QBAF $G = \{AR, \{(\arga, \bot) | \arga \in AR \}, AT, \{\}\}$.
    \item Given an abstract argumentation semantics, we determine the final strength $\sigma(\arga)$ of an argument $\arga \in AR$ as follows:
    \begin{enumerate*}[i)]
        \item $\sigma(\arga) = s$ if $\arga$ is in the intersection of all extensions that our abstract argumentation semantics yields.
        \item $\sigma(\arga) = c$ else if $\arga$ is in the union of all extensions that our abstract argumentation semantics yields.
        \item $\sigma(\arga) = n$, otherwise.
    \end{enumerate*}
    Then, the final strength $s$ corresponds to what is typically called \emph{sceptical acceptance} in abstract argumentation, whereas $c$ corresponds to \emph{credulous acceptance}; we can say that a final strength of $n$ means clear rejection.
\end{itemize}
With this mapping, we can then explain relative changes in acceptance statuses of arguments in abstract argumentation frameworks.
Let us illustrate this with an example.
Here, we make use of stable semantics as our abstract argumentation semantics.
Given an AF $(AR, AT)$, $S \subseteq AR$ is a stable extension iff for no $\arga, \argb \in S$, $(\arga, \argb) \in AT$ holds (conflict-freeness) and for every $\argc \in AR \setminus S$ it holds that $\exists \argd \in S$ such that $(\argd, \argc) \in AT$ holds.
Consider the AFs $(\{\arga, \argb, \argc\}, \{(\arga, \argb), (\argb, \arga), (\argc, \argb)\})$ and $(\{\arga, \argb, \argc, \argd, \arge, \argf\}, \{(\arga, \argb), (\argb, \arga), (\argc, \argb), (\argc, \argf), (\argd, \argc), (\argd, \argf), (\arge, \arga), (\argf, \argb)\})$ that we can then map to the QBAFs $F$ and $F'$ (respectively) in Figure~\ref{fig:abstract-arg}.
Our topic arguments are $\arga$ and $\argb$.
In $F$, $\arga$'s final strength is $s$, i.e.\ the argument is sceptically accepted in the corresponding AF, whereas the final strength of $\argb$ is $n$ (clear rejection).
In $F'$, we have a final strength of $n$ for $\arga$ and of $s$ for $\argb$.
Our $\subset$-minimal strength inconsistency explanations are as follows:
\begin{itemize}
    \item $SX_{\subset_{\min}}(\nscon{\arga}{\argb}) = \{\{\argd\}, \{\arge\}\}$;
    \item $CX_{\subset_{\min}}(\nscon{\arga}{\argb}) = NX_{\subset_{\min}}(\nscon{\arga}{\argb}) = \{\{\argd, \arge\}\}$.
\end{itemize}
Intuitively, either $\argd$ or $\arge$ are sufficient for changing the relative final strength of $\arga$ and $\argb$, leading to final strength equality: credulous acceptance and clear rejection, respectively.
However, the relative strength of the topic argument remains intact only without changes to $\argd$ \emph{and} $\arge$ (counterfactual explanation) and one of these two arguments is necessary to affect the change (necessary explanation).
Yet, the reader may have observed that from a practical reasoning perspective, the sufficient explanations may not be convincing: if we apply changes only to $\argd$ or $\arge$, both $\arga$ and $\argb$ are either credulously accepted (changes to $\argd$) or clearly rejected (changes to $\arge$) and we may prefer a ``stronger'' explanation.
However, this issue can be easily addressed by introducing a weaker notion of strength consistency.
We can tweak Definition~\ref{def:po-consistency} and stipulate that a change from stronger to weaker or vice versa is required for a lack of strength consistency, i.e.\ that a change to equality is insufficient.
Let us claim that such a change makes more sense in the abstract argumentation setting, where we have discrete acceptance statuses instead of continuous final strengths.

Future work could explore explanations in the change of the inferred extensions, going beyond the acceptance statuses of individual arguments. For example, we may want to know why we infer extension $E$ and no longer $E'$, with $E' \not \supseteq E$ and hence to compute some sort of sufficient, necessary, or counterfactual explanations for this. Here, our previous work on explaining the violation of monotonicity of entailment in the abstract argumentation semantics introduced in Dung's seminal paper may serve as an additional starting point~\cite{clarxai}.

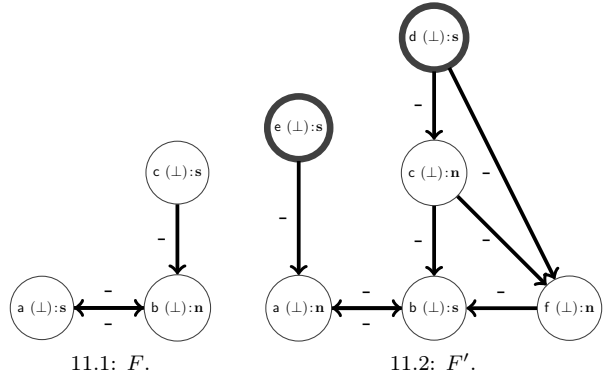
\begin{figure}[!ht]
    \subfloat[$F
    $.\label{subfig:f}]{
        \begin{tikzpicture}[scale=0.6,
            noanode/.style={scale=0.55,dashed, circle, draw=black!60, minimum size=10mm, font=\bfseries},
            unanode/.style={scale=0.55,circle, draw=black!75, minimum size=10mm, font=\bfseries},
            xunanode/.style={scale=0.55,circle, draw=black!75, minimum size=10mm, font=\bfseries, line width=2.5pt},
            invnode/.style={scale=0.55,circle, draw=white!0, minimum size=0mm, font=\bfseries},
            anode/.style={scale=0.55,circle, fill=lightgray, draw=black!60, minimum size=10mm, font=\bfseries},
            xanode/.style={scale=0.55, circle, fill=lightgray, draw=black!60, minimum size=10mm, font=\bfseries,line width=2.5pt},
            xnoanode/.style={scale=0.55, circle, dashed, draw=black!60, minimum size=10mm, font=\bfseries,line width=2.5pt}
            ]
            \node[unanode]      (a)    at(0,0)  {\argnode{\arga}{\bot}{s}};
            \node[unanode]      (b)    at(3,0)  {\argnode{\argb}{\bot}{n}};
            \node[unanode]      (c)    at(3,3)  {\argnode{\argc}{\bot}{s}};
            \path [->, line width=0.5mm]  (a) edge node[left, above] {-} (b);
            \path [->, line width=0.5mm]  (b) edge node[left, below] {-} (a);
            \path [->, line width=0.5mm]  (c) edge node[left] {-} (b);
        \end{tikzpicture}
    }
    \hspace{10pt}
    \centering
    \subfloat[$F'
    $.\label{subfig:fprime}]{
        \begin{tikzpicture}[scale=0.6,
            noanode/.style={scale=0.55,dashed, circle, draw=black!60, minimum size=10mm, font=\bfseries},
            unanode/.style={scale=0.55,circle, draw=black!75, minimum size=10mm, font=\bfseries},
            xunanode/.style={scale=0.55,circle, draw=black!75, minimum size=10mm, font=\bfseries, line width=2.5pt},
            invnode/.style={scale=0.55,circle, draw=white!0, minimum size=0mm, font=\bfseries},
            anode/.style={scale=0.55,circle, fill=lightgray, draw=black!60, minimum size=10mm, font=\bfseries},
            xanode/.style={scale=0.55, circle, fill=lightgray, draw=black!60, minimum size=10mm, font=\bfseries,line width=2.5pt},
            xnoanode/.style={scale=0.55, circle, dashed, draw=black!60, minimum size=10mm, font=\bfseries,line width=2.5pt}
            ]
            \node[unanode]      (a)    at(0,0)  {\argnode{\arga}{\bot}{n}};
            \node[unanode]      (b)    at(3,0)  {\argnode{\argb}{\bot}{s}};
            \node[unanode]      (c)    at(3,3)  {\argnode{\argc}{\bot}{n}};
            \node[unanode]      (f)    at(6,0)  {\argnode{\argf}{\bot}{n}};
            \node[xunanode]      (d)    at(3,6)  {\argnode{\argd}{\bot}{s}};
            \node[xunanode]      (e)    at(0,4)  {\argnode{\arge}{\bot}{s}};
            \path [->, line width=0.5mm]  (a) edge node[left, above] {-} (b);
            \path [->, line width=0.5mm]  (b) edge node[left, below] {-} (a);
            \path [->, line width=0.5mm]  (c) edge node[left] {-} (b);
            \path [->, line width=0.5mm]  (d) edge node[left] {-} (c);
            \path [->, line width=0.5mm]  (e) edge node[left] {-} (a);
            \path [->, line width=0.5mm]  (d) edge node[left] {-} (f);
            \path [->, line width=0.5mm]  (c) edge node[left] {-} (f);
            \path [->, line width=0.5mm]  (f) edge node[left, above] {-} (b);
        \end{tikzpicture}
    }
\caption{Explaining change in abstract argumentation.}
\label{fig:abstract-arg}
\end{figure}

\subsection{Related Research}
\label{related}
The work presented in this paper adds to a comprehensive body of research of explainability in formal argumentation and, more broadly, symbolic AI.
However, let us highlight that our work is novel in two key aspects, as further explained below:
\begin{itemize}
    \item It is among the first works focusing on explainability in quantitative bipolar argumentation;
    \item More broadly, it is the first work focusing on explaining change of inference in formal argumentation -- colloquially, answering ``Why A and no longer B''? instead of simply ``Why B?'' -- that is not limited to a particular set of argumentation semantics (in contrast to our earlier work~\cite{clarxai}).
\end{itemize}

Our notion of sufficiency corresponds to the familiar notion in other explainable AI works, such as~\cite{10.1007/978-3-030-86772-0_4,Darwiche_Ji_2022}. 
Yet, our notion of necessity differs somewhat from a similar notion as presented e.g.\ in~\cite{10.1007/978-3-030-86772-0_4} (though dynamic reasoning scenarios are not considered therein).
In our case, change of inference does not imply that \emph{all} arguments contained in a given necessary explanation need to have been changed. Instead, the change of inference implies that \emph{some} argument(s) in the explanation must have been changed. 
This is more in line with the concept of necessary reasons introduced in \cite{Darwiche_Ji_2022} (though not concerning argumentation at all). 
Finally, in principle, a counterfactual as an explanation~\cite{Ginsberg:1986} could just be a (minimal) change to the current world (updated QBAF in our case) such that the inference is different (in our case, \sconsy\ preserved). 
We additionally require sufficiency, lest a counterfactual explanation be merely a set arguments without changes to which the inference would be different but which do not really result into the observed inference (poetically, ``the straw that broke the camel's neck''). 
As an explanation, a sufficient counterfactual (as in our case) feels more informative. 

From the point of view of explainability, to our knowledge this work is the first paper focusing on quantitative bipolar argumentation. 
Even so, our explanations are still immediately applicable to quantitative (abstract) argumentation, where explainability has not been researched either, with the exception of~\cite{Delobelle:Villata:2019}. 
There, the authors formalise a notion of \emph{impact} of an argument on the final strength of another argument, roughly as a difference between the final strengths of the latter argument with and without the former argument being present. 
We instead consider as explanations the changes to arguments that guarantee alterations in the relative strengths of other arguments after a given update to the quantitative argumentation framework. 

More generally, our work is positioned at the intersection of argumentation dynamics and explainable argumentation, both of which have been studied in depth: see~\cite{doutre-argument} for a survey on argumentation dynamics, as well as~\cite{vassiliades_bassiliades_patkos_2021} and~\cite{Cyras.et.al:2021-IJCAI} for surveys on argumentation and explainability. 
Few works study the intersection of dynamics and explainability \emph{explicitly}. 
A notable exception is~\cite{clarxai}, where we studied, in the context of (admissibility-based) abstract argumentation, how the violation of monotony of entailment can be explained in so-called \emph{normal expansion} scenarios, in which new arguments are added to an argumentation framework, but the relation among previously existing arguments remains unchanged. 
The present work is different in that it 
\begin{enumerate*}[i)]
\item 
addresses QBAFs, and
\item explains \nsconsy\ (i.e.\ change in preferences from a decision-theoretical perspective) rather than the violation of monotony of entailment.
\end{enumerate*}

Several argumentation explainability approaches consider dynamics \emph{implicitly}. 
For instance, assuming some space of modifications in a given argumentation framework, the modifications that would change some topic argument's acceptability status (or strength) can be seen as explanations of such a change~\cite{Wakaki:Nitta:Sawamura:2009,Booth.et.al:2014,Sakama:2018}.
In particular, a collection of additions or removals of  arguments or attacks in an abstract argumentation framework in a way that changes the acceptability of a specific argument is an explanation in e.g.~\cite{Fan:Toni:2015-TAFA,Sakama:2018}. 
Relatedly, though not directly concerning changes, 
\cite{Saribatur.et.al:2020,Ulbricht:Wallner:2021,10.1007/978-3-030-86772-0_4} define explanations, roughly speaking, as sets of arguments (in non-quantitative argumentation frameworks) that are sufficient for acceptance or rejection of some target argument(s). 
Note, however, that in this paper we do not explicitly introduce QBAF update operations, such as adding~\cite{Cayrol2010} or removing an argument~\cite{Bisquert2011}. 
While we have motivated why we focus on changes only with respect to arguments in the abstract, it is an interesting future work direction to define explanations that consist of not only arguments (as in this paper) but also of the exact changes with respect to those arguments.

The works mentioned above concerning argumentation dynamics focus on non-quantitative (non-numerical) argumentation frameworks. We instead consider QBAFs in this paper, focusing on gradual semantics and changes to numerical argument strengths. 
Nonetheless, in the future one can expand the current perspective on QBAF (change) explainability by in addition providing sub-graphs to trace sets of explanation arguments to topic arguments. Also, one could further extend the notion of \nsconsy\ by stipulating that a threshold in the difference in strengths needs to be exceeded. This threshold could be either relative, e.g., the initially weaker argument needs to be 5\% stronger (in the update) than the initially stronger argument; or absolute, e.g. the initially weaker argument needs to be stronger by $x \in \interval$ (in the update) than the initially stronger argument. Here, we conjecture that the formal results with respect to soundness and completeness would apply, but formal analysis is still to be done.

\section{Concluding Remarks}
\label{conclusion}
In this paper, we introduced explanations for changes in the relative strengths of two topic arguments after a QBAF update. Explanations are in the form of sets of arguments that have been changed: added, removed, modified in their initial strength or outgoing attacks and supports.
Intuitively,
changes with respect to the arguments in the explanations are responsible for the change in inference, namely the relative change in the final strengths of topic arguments, in the evolving QBAF. 
Our approach thus helps to answer a key explainability question  -- ``why $\argb$ and no longer $\arga$?'' -- in dynamic quantitative bipolar argumentation.
We answer this question using the following aspects of explanations.
\begin{enumerate}
    \item Sufficiency: changes to which arguments suffice to imply a change of inference in the absence of other changes?
    \item Counterfactuality: changes to which arguments both suffice for the inference in the absence of other changes, and if reversed, restore the initial inference when all other changes stay?
    \item Necessity: changes to which arguments are necessarily implied by the change of inference.
\end{enumerate}

We believe to have contributed with novel forms of explanations of inference in dynamically evolving QBAFs, employing the notions of sufficiency, counterfactuality and necessity. 
By means of change reversals in evolving QBAFs, we provided formal definitions of explanations as sets of arguments whose change is responsible for the change in the inference. 
We supplied basic theoretical analysis of soundness and completeness of our explanations as well as high-level algorithmic details of an implementation for generating explanations. 
We hope our work meaningfully complements the research on explainability in computational argumentation specifically and explainable AI at large.

\subsubsection*{Acknowledgements}
We thank the reviewers of this paper, as well as the reviewers of the initial conference paper version, for their substantial and very useful feedback. Example~\ref{ex:reviewer} was provided by one of the anonymous reviewers.

\bibliographystyle{elsarticle-num}

%

\end{document}